\newcommand{\tote}{\mathsf{TOTE}}
\newcommand{\bih}{\mathsf{BIH}}
\let\oldnl\nl
\newcommand{\nonl}{\renewcommand{\nl}{\let\nl\oldnl}}
\newcommand{\vct}{\boldsymbol }
\newcommand{\ud}{\mathrm d}
\newcommand{\kl}{\mathrm{KL}}
\newcommand{\poly}{\mathrm{poly}}
\renewcommand{\hat}{\widehat}
\renewcommand{\tilde}{\widetilde}
\newcommand{\sR}{\mathsf R}
\renewenvironment{proof}{\noindent {\it {Proof.} }}{\hfill $\Box$ \\}
\definecolor{darkgreen}{rgb}{0,0.5,0}
\definecolor{darkred}{rgb}{0.7,0,0}
\definecolor{teal}{rgb}{0.3,0.8,0.8}
\newcommand{\kibitz}[2]{\ifnum\Comments=1\textcolor{#1}{\small #2}\fi}
\newcommand{\xichen}[1]{\kibitz{darkred}{[XC: #1]}}
\newtheorem{fact}{Fact}
\renewcommand{\hat}{\widehat}
\renewcommand{\tilde}{\widetilde}
\renewcommand{\bar}{\overline}
\newcommand{\red}{\color{red}}
\newcommand{\blue}{}
\definecolor{DSgray}{cmyk}{0,1,0,0}
\begin{document}


\RUNAUTHOR{Chen, Krishnamurthy and Wang}

\RUNTITLE{Robust Dynamic Assortment Optimization}

\TITLE{Robust Dynamic Assortment Optimization in the Presence of Outlier Customers}

\ARTICLEAUTHORS{%
\AUTHOR{Xi Chen}
\AFF{Stern School of Business, New York University, New York, NY 10012, \EMAIL{xchen3@stern.nyu.edu}} 
\AUTHOR{Akshay Krishnamurthy}
\AFF{Microsoft Research NYC, New York, NY 10011, \EMAIL{akshay.krishnamurthy@microsoft.com}}
\AUTHOR{Yining Wang}
\AFF{Warrington College of Business, University of Florida, Gainesville, FL 32611, \EMAIL{yining.wang@warrington.ufl.edu}}
} 

\ABSTRACT{%
We consider the dynamic assortment optimization problem under the multinomial logit model (MNL) with unknown utility parameters. 
The main question investigated in this paper is model mis-specification under the $\varepsilon$-contamination model, which is a fundamental model in robust statistics and machine learning. 
In particular, throughout a selling horizon of length $T$, we assume that customers make purchases according to a well specified underlying multinomial logit choice model in a $(1-\varepsilon)$-fraction of the time periods, and make arbitrary purchasing decisions instead in the remaining $\varepsilon$-fraction of the time periods. 
In this model, we develop a new robust online assortment optimization policy via an active elimination strategy. 
We establish both upper and lower bounds on the regret, and show that our policy is optimal up to logarithmic factor in $T$ when the assortment capacity is constant. 
We further develop a fully adaptive policy that does not require any prior knowledge of the contamination parameter $\varepsilon$. 
{
	In the case of the existence a sub-optimality gap between optimal and sub-optimal products,
	we also established gap-dependent logarithmic regret upper bounds and lower bounds in both the known-$\varepsilon$
	and unknown-$\varepsilon$ cases.
} Our simulation study shows that our policy outperforms the existing policies based on upper confidence bounds (UCB) and Thompson sampling.
}

\KEYWORDS{Dynamic assortment optimization, gap-dependent analysis, regret analysis, robustness,  active elimination}


\maketitle

%


\section{Introduction}

A wide range of operations problems, ranging from assortment optimization to supply chain management, are built  on an underlying probabilistic model.
When real world outcomes follow this model, existing optimization techniques are able to provide  accurate solutions. 
However, these model assumptions are only abstractions of reality and do not perfectly capture the sophisticated natural environment. 
In other words, these models are inherently mis-specified to a certain degree. 
Accordingly, model mis-specification and robust estimation have been an important topic in the statistics literature~\citep{Huber:11}. 
However, this literature primarily focuses on estimation or prediction from a given dataset, which is insufficient for modern operations settings where decision making plays a vital role. 
Unfortunately, most decision-making policies are derived from optimization problems that explicitly rely on the probabilistic model, so they are inherently not robust to model mis-specification.
Can we design robust policies for these operations problems?


This paper studies model mis-specification for an important problem in revenue management --- dynamic assortment optimization,  under a popular $\varepsilon$-contamination model (which will be introduced in the next paragraph). 
Assortment optimization has a wide range of applications in retailing and online advertising.
Given a large number of substitutable products, the assortment optimization problem involves selecting a subset of products (a.k.a., an assortment) to offer a customer such that the expected revenue is maximized. 
To model customers' choice behavior when facing a set of offered products, discrete choice models have been widely used, and one of the most popular such models is the \emph{multinomial logit model (MNL)} \citep{McFadden1974}.
In dynamic assortment optimization, the customers' choice behavior (e.g., mean utilities of products in an MNL) is not known \emph{a priori} and must be learned online, which is often the case in practice, as historical data is often insufficient (e.g., fast fashion sale or online advertising).
More specifically, the seller offers an assortment to each arriving customer for a finite time horizon $T$, observes the purchase behavior of the customer and then updates the utility estimate.
The goal of the seller is to maximize the cumulative expected revenue over $T$ periods.
Due to its practical relevance, dynamic assortment optimization has received much attention in literature. \citep{Caro2007, Rusmevichientong2010, Saure2013, Agrawal17MNLBandit, Agrawal17Thompson}. 


All of these existing works assume that each arriving customer makes her purchase according to an underlying choice model. 
Yet, in practice, a small fraction of  customers could make ``outlier'' purchases. 
To model such outlier purchases, we adopt a natural robust model in the statistical literature ---  the $\varepsilon$-contamination model  \citep{Huber:64}, which dates back to the 1960s and is perhaps the most widely used model in robust statistics. 
In the general setup of the $\varepsilon$-contamination model, we are given $n$ \emph{i.i.d.} samples drawn from a distribution $(1-\varepsilon)P_{\theta} + \varepsilon Q$, where $P_{\theta}$ denotes the distribution of interest $P$, parameterized by $\theta$ (e.g., a Gaussian distribution with mean $\theta$), and $Q$ is an arbitrary contamination distribution.
The parameter $\varepsilon>0$, which is usually very small, reflects the level at which contamination occurs, so a larger $\varepsilon$ value means more observations are contaminated.
The standard objective is to identify or estimate the parameter $\theta$ of the distribution of interest, in the presence of corrupted observations from $Q$.
For the purpose of dynamic assortment optimization in the presence of outlier customers, the $P_\theta$ distribution represents the choice model for the majority of customers, which are ``typical,'' (with $\theta$ being the parameter of an underlying
MNL choice model of interest), while the $Q$ distribution corresponds to choice models of ``outlier'' customers and $\varepsilon$ reflects the proportion of outlier customers. 
For dynamic assortment optimization, we also deviate from the standard parameter estimation objective and focus on designing online decision-making policies.

In the classical $\varepsilon$-contamination model, the ``outlier distribution'' $Q$ stays stationary for all samples,
To make the contamination model more practical in the online assortment optimization setting,
we strengthen the model from two aspects:
\begin{enumerate}
	\item Instead of assuming  a fixed corruption distribution $Q$ for all outlier customers, we allow $Q$ to change over different time periods (i.e., $Q_t$ is the outlier distribution for customers at time period $t$); 
	\item  {Instead of assuming that each time $t$ is corrupted ``uniformly at random'', we assume that outlier customers appear in at most $\varepsilon T$ time periods. 
      The purchase pattern and arrivals of outlier customers can, however, be arbitrary and even \emph{adaptive} to the assortment decisions or customer purchase activities prior to time period $t$. 
	  The corrupted time periods and associated $Q_t$'s are unknown to the seller. 
    }
\end{enumerate}
This setting is much richer than the ``random arrival setting'' and more realistic in practice.
Indeed, in a holiday season, consecutive time periods might contain anomalous or outlier purchasing behavior, which cannot be capture by ``random corruption'' in the original $\varepsilon$-contamination model.
The details of our outlier customer model will be rigorously specified in Section \ref{sec:formulation}. 

The main goal of the paper is to develop a robust dynamic assortment policy under this $\varepsilon$-contaminated MNL.
Our first observation is that popular policies in the literature including Upper-Confidence-Bounds (UCB) \citep{Agrawal17MNLBandit} and Thompson sampling \citep{Agrawal17Thompson} no longer work in this model.
The reason is that these policies cannot use typical customers that arrive later in the selling period to correct for misleading customers that arrive early on, and hence even a small number of outlier customers can lead to poor performance. 
Further, while it is well known that \emph{randomization} is crucial in any adversarial setting (see, e.g.,~\cite{auer2002nonstochastic,Bubeck:Survey:12}) to hedge against outliers, UCB is a deterministic policy, while Thompson sampling provides very little randomization via posterior sampling.
We explain these failures in more detail in Secs.~\ref{sec:formulation} and~\ref{sec:numerical} later in this paper.


To address the contaminated setting,  we develop a novel active elimination algorithm for robust dynamic planning, which gradually eliminates those items that are not in the optimal assortment with high probability (see Algorithm \ref{alg:active-elimination}).  
Compared to the existing methods mentioned above~\citep{Agrawal17MNLBandit,Agrawal17Thompson}, our active elimination method has several important technical novelties.
First, our active elimination policy implements the randomization in a much more explicit way by sampling from a carefully constructed small set of ``active'' products.
Second, the existing UCB and Thompson sampling algorithms for MNL rely on an epoch-based strategy (i.e., repeatedly offering the same assortment until no purchase) to enable an unbiased estimation of utility parameters. 
This procedure is inherently fragile since the stopping time of an epoch relies on a single no-purchase activity, which can be easily manipulated by outlier customers; a few outliers can greatly affect the stopping times. 
The failure of such an epoch-based strategy implies that unbiased estimation of utility parameters is no longer possible. 
To overcome this challenge, we propose a new utility estimation strategy based on geometrically increasing offering time periods.
We conduct a careful perturbation analysis to control the bias of these estimates, which leads to new confidence bounds for our active elimination algorithm (see Sec.~\ref{sec:act_policy} for more details).

We provide theoretical guarantees for our proposed robust policy via regret analysis and information-theoretic lower bounds.
In particular, let $T$ be the selling horizon, $N$ the total number of products, and $K$ the cardinality constraint of an assortment (see Sec.~\ref{sec:formulation}).
For the reasonable setting where $\varepsilon$ is not too large, our active elimination algorithm (Algorithm \ref{alg:active-elimination}) achieves $\tilde{O}(\varepsilon K^2T +\sqrt{KNT})$ regret when $\varepsilon$ (or a reasonable upper bound of $\varepsilon$) is known (see Theorem \ref{thm:upper-bound}), where $\tilde{O}(\cdot)$ only suppresses $\log(T)$ factors. 
Compared to the $\Omega(\varepsilon T+\sqrt{NT})$ lower bound (see Proposition \ref{prop:lowerbound}), our upper bound is tight up to polynomial factors involving $K$ and other logarithmic factors.
We also remark that the special case of $\varepsilon=0$ reduces to the existing setting studied in \citep{Agrawal17MNLBandit,Agrawal17Thompson,Chen:18tight} in which no outlier customers are present.
Compared to existing results, our regret bound is tight except for an additional $O(\sqrt{K})$ factor, which represents the cost of being adaptive to outlier customers (see Sec.~\ref{sec:regret} for more discussions). 
{We emphasize that in a typical assortment optimization problem, the capacity of an assortment $K$ is usually a small constant, especially relative to $T$ and $N$. }


The above result assumes that an upper bound on the outlier proportion $\varepsilon$ is given as prior knowledge.
While in some cases we may be able to estimate $\varepsilon$ from historical data, this is not always possible, which motivates the design of fully adaptive policies that do not require $\varepsilon$ as an input. 
Inspired by the  ``multi-layer active arm race'' from the multi-armed bandits literature \citep{Lykouris:18}, we propose an adaptive robust dynamic assortment optimization policy in Algorithm \ref{alg:adaptive-epsilon}. 
Our policy runs multiple ``threads'' of known-$\varepsilon$ algorithms on a geometric grid of $\varepsilon$ values in parallel, and, as we show, achieves $\tilde{O}(\varepsilon T +\sqrt{NT})$ regret, where $\tilde{O}$ suppresses $\log(T)$ and $K$ factors (see Theorem \ref{thm:adaptive-epsilon}).
{Here, the (cumulative) regret is defined as the sum of the differences between the expected rewards (revenues)
of the optimal assortment and the assortments the retailer offers at each time period.}
Algorithm \ref{alg:adaptive-epsilon} and its analysis in Sec.~\ref{sec:adaptive} provide more details.

{
Finally, in the case of well-separated problem instances (i.e. there is a large sub-optimality gap $\beta>0$ between optimal and sub-optimal assortments),
built on the same proposed algorithm, we establish much improved regret upper bounds of $\widetilde O(\varepsilon K^2 T\log T + K^2 N\log^2 T/\beta)$
when $\varepsilon$ is known (see Theorem  \ref{thm:known-eps-gap-dependent}).  When $\varepsilon$ is unknown, the adaptive policy achieves the regret $\widetilde O(\varepsilon T + N/\beta^2)\times \poly(K,\log(NT))$ or $\widetilde O(\varepsilon T/\beta + N/\beta)\times\poly(K,\log(NT))$,
whichever is smaller (see Theorem \ref{thm:adaptive-epsilon-gap}). 
For both upper bounds in the well-separated case, the dependency on the time horizon $T$ is logarithmic when the corruption level $\varepsilon$ is small.
We also prove lower bounds on the regret when a sub-optimality gap of at least $\beta>0$ exists.
}

The rest of the paper is organized as follows.
Sec. \ref{sec:rel} introduces the related work.
Sec. \ref{sec:formulation} describes the problem formulation.
The first active elimination policy and the regret bounds are presented in Sec. \ref{sec:act_policy}, while the adaptive algorithm is presented in Sec.~\ref{sec:adaptive}.
{ The gap dependent regret analysis and $\log T$-type regret bounds are provided in Sec.~\ref{sec:gap}.}
Numerical illustration are provided in Sec. \ref{sec:numerical} with the conclusion in Sec. \ref{sec:conclusion}.
The proof the lower bound result is provided in the appendix.
Proofs of some technical lemmas are relegated to the supplementary material.


\section{Related works}
\label{sec:rel}
Static assortment optimization with known choice behavior has been an active research area since the seminal works by \cite{Ryzin1999} and \cite{Mahajan2001}.  Motivated by fast-fashion retailing, dynamic assortment optimization, which adaptively learns unknown customers' choice behavior, has received increasing attention in the context of data-driven revenue management. The work by \cite{Caro2007} first studied dynamic assortment optimization problem under the assumption that demands for different products are independent. Recent works by \cite{Rusmevichientong2010, Saure2013, Agrawal17MNLBandit, Agrawal17Thompson, Chen:18tight, Chen:18near} incorporated  MNL models into dynamic assortment optimization and formulated the problem as an online regret minimization problem. In particular, for the standard MNL model, \cite{Agrawal17MNLBandit} and \cite{Agrawal17Thompson} developed UCB and Thompson sampling based approaches for online assortment optimization. Moreover, some recent work \citep{Wang:17:person, Chen:18context, MinOh:19} study dynamic assortment optimization based on contextual MNL models, where the utility takes the form of an inner product between a feature vector and the coefficients. 
The present work focuses on the standard non-contextual MNL model, but a natural direction for future work is to extend our results to the contextual setting. 
  
All works outlined above assume an underlying MNL choice model is correctly specified. 
However, model mis-specification is common in practice, and robust statistics, one of the most important branches in statistics, is a natural tool to address such mis-specification.
The $\varepsilon$-contamination model, which was proposed  by P. J. Huber \citep{Huber:64}, is perhaps the most widely used robust model and has recently attracted much attention from the machine learning community (see, e.g., \cite{Chen:16:Huber, Diakonikolas:17,Diakonikolas:18} and reference therein). 
Despite this attention, online learning in the $\varepsilon$-contamination model or its generalizations is relatively unexplored. 
In the online setting, \cite{Esfandiari:18allocation} studied online allocation under a mixing adversarial and stochastic model but the setting does not require any learning component.
For online learning, the recent works of \cite{Lykouris:18,gupta2019better} studied the contaminated stochastic multi-armed bandit (MAB), but, due to the complex structure of discrete choice models, these results do not directly apply to our setting. 
Indeed, a straightforward analogy between assortment optimization and MAB is to treat each feasible assortment as an arm, but directly using this mapping will result in a large regret due to the exponentially many possible assortments.  

In learning and decision-making settings, a few recent work investigate the impact of model mis-specification in revenue management, e.g., \cite{Cooper:06} for capacity booking problems and \cite{Besbes:15linear} for dynamic pricing.
In particular, \cite{Besbes:15linear} show that a class of pricing policies based on linear demand functions perform well even when the underlying demand is not linear.
\cite{Cooper:06} also identified some cases where simple decisions are optimal under mis-specification.
However, our setting is quite different, as the widely used UCB and Thompson sampling policies are not robust under our model. 
On the other hand, our new active-elimination policy is robust to model mis-specification and additional achieves near-optimal regret when the model is well-specified. 

{
Finally, the successive-elimination or active elimination strategies have been extensively studied in the (stochastic) multi-armed bandit literature.
Interested readers can refer to the works of \cite{auer2002using,auer2010ucb,even2006action} for details.
}


\section{Problem formulation}
\label{sec:formulation}


There are $N$ items, each associated with a known revenue parameter $r_i\in[0,1]$ and an unknown utility parameter $v_i\in[0,1]$.
At each time $t$ a customer arrives, for a total of $T$ time periods. 
The retailer then provides an \emph{assortment} $S_t\subseteq[N]$ to the customer, subject to a capacity constraint $|S_t|\leq K$.
The customer then chooses \emph{at most} one item $i_t\in S_t$ to purchase, upon which the retailer collects a revenue of $r_{i_t}$.
If the customer chooses to purchase nothing (denoted by $i_t=0$), then the retailer collects no revenue.

At each time $t$, the arriving customer is assumed to be one of the following two types:
\begin{enumerate}
\item A \textbf{typical} customer makes purchases $i_t\in S_t\cup\{0\}$ according to a multinomial-logit (MNL) choice model
\begin{equation}
\Pr[i_t=i|S_t] = \frac{v_i}{v_0 + \sum_{j\in S_t}v_j}, \;\;\;\;\;\; v_0 = 1.
\label{eq:mnl}
\end{equation}
We assume that $v_i\in[0,1]$;
\item An \textbf{outlier} customer makes purchases $i_t\in S_t\cup\{0\}$ according to an arbitrary unknown distribution $Q_t$ (marginalized on $S_t\cup\{0\}$). $Q_t$ can potentially change with $t$.
\end{enumerate}

We note that the MNL model in Eq.~(\ref{eq:mnl}) together with the constraint that $v_i\in[0,1]$ implies that ``no purchase'' is 
the most probable (or equally probable) outcome for
a \emph{typical} customer. This assumption has been made in operations literature, see, e.g., \cite{Agrawal17Thompson}.
{\blue 
Such an assumption that $v_i\leq 1$ for all $i$ is,
however, only for the ease of presentation, and the assumption can be easily relaxed to $v_i\leq C_v$
	for some known constant upper bound $C_v>0$.
	With the relaxed boundedness condition, one can enlarge the constructed confidence intervals $\hat\Delta_{\bar\varepsilon}(\tau+1)$ (see the definition in Algorithm \ref{alg:active-elimination})
	by multiplying a $C_v$ factor, and the other parts of our analysis/algorithms remain the same.
}

We consider the following $\varepsilon$-contamination model:
\begin{enumerate}
\item[(A1)] (Bounded adversaries). The number of outlier customers throughout $T$ time periods does not exceed $\varepsilon T$, where $\varepsilon\in[0,1)$ is a problem parameter;
\item[(A2)] (Adaptive adversaries). The choice model $Q_t$ for an outlier customer at time $t$ can be \emph{adversarially} and \emph{adaptively}
chosen, based on the previous customers, offered assortments,  and past purchasing activity.
\end{enumerate}

A rigorous mathematical formulation is as follows:
For any time period $t=1,2,\cdots,T$, let $\phi_t\in\{0,1\}$ be the indicator variable of whether customer at time $t$ is an outlier ($\phi_t=1$ if customer $t$
is an outlier and $0$ otherwise), $S_t\subseteq[N]$ be the assortment provided at time $t$, $i_t\in S_t\cup\{0\}$ be the purchasing activity 
of the customer.
The protocol is formally defined as follows:
\begin{definition}[Definition of protocol]
We define the following:
\begin{enumerate}
\item An \emph{adaptive adversary} consists of $T$ arbitrary measurable functions $\mathfrak A_1,\cdots,\mathfrak A_T$,
where $\mathfrak A_t:\{\phi_\tau,Q_\tau,S_\tau,i_\tau\}_{\tau \leq t-1}\mapsto (\phi_t,Q_t)$ produces the type of the customer (typical or outlier) $\phi_t$
and the outlier distribution $Q_t$ at time period $t$, from the filtration $\mathcal F_{t-1}=\{\phi_\tau,Q_\tau,S_\tau,i_\tau\}_{\tau\leq t-1}$;
\item An \emph{admissible policy} consists of $T$ random functions $\mathfrak P_1,\cdots,\mathfrak P_T$,
where $\mathfrak P_t: \{S_\tau,i_\tau\}_{\tau \leq t-1}\mapsto S_t$ produces a randomized assortment $S_t\subseteq[N]$, $|S_t|\leq K$ at time period $t$,
from the filtration $\mathcal G_{t-1}=\{S_\tau,i_\tau\}_{\tau\leq t-1}$; 
\item If $\phi_t=0$ then $i_t$ is realized according to model (\ref{eq:mnl}) conditioned on $S_t$;
otherwise if $\phi_t=1$ then $i_t$ is realized according to model $Q_t$. 
\end{enumerate}
\end{definition}


The objective of the retailer is to develop an admissible dynamic assortment optimization strategy that is competitive 
with a certain ``benchmark'' assortment. 
{
Unlike the classical setting, the definition of regret is a bit more complicated due to the
presence of both typical and adversarial customers. 
To shed light on the subtle differences between different benchmark assortments, in this paper we consider two different types
of cumulative regret, as introduced below.
To simplify notations we use $P_t$ to denote the customer's choice model at time $t$.
More specifically, $P_t$ is the ``typical'' model in Eq.~(\ref{eq:mnl}) (denoted as $P_t=\{v\}$) if a typical customer arrives at time $t$, 
and $P_t=Q_t$ if an outlier customer arrives at time $t$.
We use $R(S;P)$ to denote the expected revenue collected by offering assortment $S$ if the customer's choice model is modeled by $P$.
\label{page:regret}
\begin{enumerate}
\item The \emph{Typically-Optimal-Typically-Evaluated} ($\tote$) regret is defined as
\begin{equation}
\mathrm{Regret}^\tote(T) := \mathbb E\left[\sum_{t=1}^T R(S^*;\{v\})-R(S_t;\{v\})\right],
\label{eq:regret}
\end{equation}
where $S^* = \arg\max_{S\subseteq[N],|S|\leq K}R(S;\{v\})$ is the optimal assortment \emph{for typical customers};


\item The \emph{Best-In-Hindsight} ($\bih$) regret is defined as
\begin{equation}
\mathrm{Regret}^\bih(T) := \max_{S\subseteq[N],|S|\leq K}\mathbb E\left[\sum_{t=1}^T R(S;P_t)-R(S_t;P_t)\right].
\label{eq:regret-bih}
\end{equation}
\end{enumerate}

The $\tote$-regret uses the optimal assortment for typical customers $S^*$ as the benchmark.
Furthermore,  the $\tote$-regret  is always measured in
the difference of expected revenue on typical customers, regardless of whether a typical or an outlier customer is present at time $t$.
On the other hand, the $\bih$-regret measures the performance differences on the actual choice model $P_t$
of the incoming customers. In other words,  it compares the performance of the dynamic assortment planning algorithm with the optimal assortment on both typical and outlier customers.
The $\bih$-regret also coincides with the ``best stationary benchmark'' regret considered in most fully adversarial multi-armed bandit problems.

There is an important relationship between these two definitions of regret, as characterized in the following statement.
\begin{fact}
$\mathrm{Regret}^\bih(T) \leq \mathrm{Regret}^\tote(T) + \varepsilon T$.
\label{fact:regret-type}
\end{fact}
\begin{proof}
Let $S^*$ be the optimal assortment for typical customers and $\tilde S$ be the assortment
attaining the maximum in the definition of $\mathrm{Regret}^\bih(T)$.
Note that during time periods $t$ that $P_t=\{v\}$, the $R(\tilde S;\{v\})-R(S_t;\{v\}) \leq R(S^*;\{v\}-R(S_t;\{v\})$.
During time periods $t$ that $P_t=Q_t$, we have $|(R(\tilde S;Q_t)-R(S_t;Q_t)) - (R(S^*;Q_t)-R(S_t;Q_t))| \leq 1$,
because the expected revenue of any assortment under any choice model is at most one by normalization.
Since there are $\varepsilon T$ outlier time periods, we have that $\mathrm{Regret}^\bih(T) \leq \mathrm{Regret}^\tote(T) + \varepsilon T$.
\end{proof}

Fact \ref{fact:regret-type} shows that 
the difference between the $\tote$-regret and the $\bih$-regret is at most $\varepsilon T$.
Therefore, we shall focus solely on the $\tote$-regret in terms of the \emph{upper bound}, which always exhibits an $\varepsilon T$ additive term in the bounds.
Such an upper bound implies the same regret bound for $\mathrm{Regret}^\bih(T)$, up to a term of $\varepsilon T$.
For the lower bound, we consider the $\bih$-regret which is standard in the literature.
}

{
}

\section{An active-elimination policy}
\label{sec:act_policy}

To motivate our policy, we first briefly explain why the popular Upper-confidence-bounds (UCB) and Thompson sampling fail in the presence of outlier customers.
These algorithms are designed for the uncontaminated setting where $\varepsilon = 0$, so the confidence bounds (in UCB policies) and posterior updates (in Thompson sampling policies) are designed under the assumption that \emph{all} customers follow the same MNL model. 
Unfortunately, in the presence of outlier customers the confidence intervals are too narrow and the posterior updates are too aggressive. 
With these update strategies, a small number of outlier customers preferring items unpopular to typical customers could ``swing'' the algorithms' parameter estimates, which can lead to the belief that these unpopular items are actually popular.
This subsequently leads to poor exploration of the popular items, which eventually hurts performance.
As a numerical demonstration, we construct a concrete setting in Sec.~\ref{sec:numerical} where the performance of UCB and Thompson sampling policies degrades considerably in the presence of outlier customers.

We propose an \emph{active-elimination} policy for dynamic assortment optimization in the presence of outlier customers.
A pseudo-code description is given in Algorithm \ref{alg:active-elimination}.
While Algorithm \ref{alg:active-elimination} requires the knowledge of $\varepsilon$ (or an upper bound $\bar{\varepsilon}$, see Theorem \ref{thm:upper-bound})
as input, we emphasize that such requirement can be completely removed by designing more complex policies,
as we will show in Sec.~\ref{sec:adaptive}.
To highlight our main idea, we state Algorithm \ref{alg:active-elimination} upfront as the prior knowledge of $\varepsilon$ simplifies
both the algorithm and its analysis.

\begin{algorithm}[t]
\caption{An active-elimination algorithm for robust dynamic assortment optimization.}
\begin{algorithmic}[1]
\State \textbf{Input}: time horizon $T$, outlier proportion $\overline\varepsilon$, revenue parameters $\{r_i\}$, capacity constraint $K$.
\State \textbf{Output}: a sequence of assortments $\{S_t\}_{t=1}^T$ attaining good regret.
\State Set $\hat v^{(0)}\equiv 1$, $\hat\Delta_{\overline\varepsilon}(0)=1$, $\mathcal A^{(0)} = [N]$,
$T_0 = 128(K+1)^2N\ln T$;
\For{$\tau=0,1,2,\cdots$}
	\State \textsuperscript{*}Compute $S_\tau^{(i)} = \arg\max_{S\subseteq\mathcal A^{(\tau)}, |S|\leq K,i\in S} R(S;\hat v^{(\tau)})$ for every $i\in\mathcal A^{(\tau)}$; \label{line:S_tau} 
	\State Compute $\gamma^{(\tau)} = \max_{i\in \mathcal A^{(\tau)}}R(S_\tau^{(i)};\hat v^{(\tau)})$; \label{line:gamma_tau}
	\State Update $\mathcal A^{(\tau+1)} = \{i\in\mathcal A^{(\tau)}: R(S_\tau^{(i)};\hat v^{(\tau)})+ 2\hat\Delta_{\overline\varepsilon}(\tau) \geq \gamma^{(\tau)}\}$;\label{line:active-elimination}
	\State Set $n_i=0$ and $n_0(i)=0$ for all $i\in\mathcal A^{(\tau+1)}$; set $T_\tau = 2^\tau T_0$;
	\For{the next $T_\tau$ time periods}
		\State Sample $i\in\mathcal A^{(\tau+1)}$ uniformly at random;\label{line:randomization}
		\State Provide the assortment $S_\tau^{(i)}$ to the incoming customer and observe purchase $i_t$;
		\State Update $n_i\gets n_i + \vct 1\{i_t=i\}$ and $n_0(i)\gets n_0(i)+\vct 1\{i_t=0\}$; \label{line:n_0}
	\EndFor
	\State Update estimates $\hat v_i^{(\tau+1)}=\max\{1,n_i/n_0(i)\}$ for every $i\in\mathcal A^{(\tau+1)}$; 
	\State Define $\overline\varepsilon_\tau = \min\{1,\overline\varepsilon T/T_\tau\}$, $N_\tau=|\mathcal A^{(\tau+1)}|$ and compute error upper bound as
\begin{equation*}
\textstyle
\hat\Delta_{\overline\varepsilon}(\tau+1)=
\textstyle
\left\{\begin{array}{ll}
1,& T_\tau<\frac{\overline\varepsilon T}{4(K+1)};\\
16K(K+1)\left(\frac{\overline\varepsilon_\tau}{2} + \sqrt{\frac{\overline\varepsilon_\tau N_\tau\ln T}{T_\tau}} + \frac{2N_\tau\ln T}{3T_\tau}\right) + 16\sqrt{\frac{KN_\tau\ln T}{T_\tau}},& \text{otherwise};\\\end{array}\right.
\end{equation*}
\EndFor
\State Remarks:\\
{\footnotesize
\textsuperscript{*} For any set of $\{\hat v\}$, $R(S;\hat v) = (\sum_{i\in S}r_i\hat v_i)/(1+\sum_{i\in S}\hat v_i)$; the optimization can be computed efficiently.
See Sec.~\ref{subsec:computation} for details.
}
\end{algorithmic}
\label{alg:active-elimination}
\end{algorithm}
 
At a high level, Algorithm \ref{alg:active-elimination} operates in \emph{epochs} $\tau=0,1,\cdots$ with geometrically increasing lengths,
and only performs item estimation or assortment updates between epochs.
At any time $t$, the algorithm maintains an active set of items $\mathcal A\subseteq[N]$ consisting of all items that could potentially form a ``good'' assortment,
and estimates of parameters $\{\hat v_i\}$ for all active items $i$ in $\mathcal A$.
For each time period $t$ in a single epoch $\tau$, a \emph{random} item $i$ is sampled from the current active item set and a ``near-optimal'' assortment is built, which must contain the target item $i$.
Once an epoch $\tau$ ends, parameter estimates of $\{\hat v_i\}$ are updated and the active set $\mathcal A$ is shrunk based on the updated estimates to exclude sub-optimal items. {We will ensure that with high probability, the optimal assortment $S^*$ is always a subset of active sets for all epochs (see Lemma \ref{lem:feasible}).}

We now detail all notation used in Algorithm \ref{alg:active-elimination}:
\begin{itemize}
\item[-] $\tau\in\mathbb N$: the indices of \emph{epochs} whose lengths increase geometrically ($T_\tau = 2^\tau T_0$);
\item[-] $\hat v^{(\tau)}\in [0,1]^N$: the estimates of preference parameters (of typical customers) at epoch $\tau$;
\item[-] $\mathcal A^{(\tau+1)}\subseteq[N]$: the subset of active items, which are to be explored uniformly at random in epoch $\tau$;
\item[-] $\gamma^{(\tau)}\in[0,1]$ (see step \ref{line:gamma_tau}): the estimated expected revenue of the optimal assortment calculated based on the active item subset $\mathcal A^{(\tau+1)}$ and current preference estimates $\hat v^{(\tau)}$;
\item[-] $S_\tau^{(i)}\subseteq[N]$ (see step \ref{line:S_tau}): an optimal assortment computed based on $\mathcal A^{(\tau+1)}$ and $\hat v^{(\tau)}$, which \emph{must include} the specific item $i$;
this assortment is used to explore and estimate the the utility parameter $v_i$ of item $i$;
\item[-] $n_i, n_0(i)\in\mathbb N$ (see step \ref{line:n_0}): counters used in the estimate of $v_i$; note that for any supplied assortment $S_\tau^{(i)}$, we only record the number of times a customer purchases item $i$ (accumulated by $n_i$),
and the number of times a customer makes no purchases (accumulated by $n_0(i)$); other purchasing activities (e.g., purchases of an item $\ell\in S_\tau^{(i)}$ other than $i$) will not be recorded;
\item[-] $\hat\Delta_{\bar\varepsilon}(\tau+1)\in[0,1]$: length of confidence intervals used to eliminate items from $\mathcal A^{(\tau+1)}$; its length depends on both the epoch index $\tau$
and the prior knowledge of the outlier proportion $\bar\varepsilon$;
\end{itemize}

In the rest of the section, we first give a brief description of how to compute $\hat S_\tau^{(i)}$  in Line \ref{line:S_tau} efficiently.
Then we detail the regret upper bound of Algorithm \ref{alg:active-elimination} and provide the the proof.

\subsection{Solving the optimization problem}\label{subsec:computation}

\newcommand{\MID}{\mathrm{mid}}
\begin{algorithm}[!t]
	\caption{Assortment optimization with additional constraints}
	\begin{algorithmic}[1]
		\State \textbf{Input}: revenue parameters $\{r_i\}_{i=1}^n$, estimated preference parameters $\{\hat v_i\}_{i=1}^n$, must-have item $i$, capacity constraint $K$,
		stopping accuracy $\delta$;
		\State\textbf{Output}: assortment $\hat S$, $|\hat S|\leq K$, $i\in\hat S$ that maximizes $R(\hat S;\hat v)$.
		\State Initialization: $\alpha_\ell=0$ and $\alpha_u = 1$; $\hat S=\emptyset$; 
		\While{$\alpha_u-\alpha_\ell \geq \delta$}
		\State $\alpha_{\MID}\gets (\alpha_\ell + \alpha_u) / 2$;
		\State For each $j\neq i$, sort $\psi_j := (r_j-\alpha_\MID)\hat v_j$ in descending order, and let $\Psi:=\{j\neq i: \psi_j\geq 0\}$ be the subset consisting
		of all items other than $i$ with non-negative $\psi_j$;
		\State Compute $t := \psi_{i} + \text{ the $(K-1)$ $\psi_j$ in $\Psi$ with the largest values}$;
		\State If $t\geq\alpha_\MID$ then set $\hat S=\{i\}\cup\{\text{the $(K-1)$ items in $\Psi$ with the largest $\psi_j$ values}\}$ and $\alpha_\ell\gets\alpha_{\MID}$;
		else set $\alpha_u\gets \alpha_\MID$.
		\EndWhile
	\end{algorithmic}
	\label{alg:optimization}
\end{algorithm}

The implementation of most steps of Algorithm \ref{alg:active-elimination} is straightforward,
except for the computation of the assortments $S_\tau^{(i)}$, which require futher algorithmic development. 
This computation can be formulated as the following combinatorial optimization problem:
\begin{equation}
\max_{|S|\leq K,i\in S} R(S;\hat v) = \max_{|S|\leq K,i\in S}\frac{\sum_{j\in S}r_j\hat v_j}{1+\sum_{j\in S}\hat v_j},
\label{eq:si-opt}
\end{equation}
for a specific $i\in[N]$.
This optimization problem is similar to the classical capacity-constrained assortment optimization  (see, e.g., \cite{Rusmevichientong2010}),
but the additional constraint $i \in S$ in~\eqref{eq:si-opt} yields a subtle difference.
For the purpose of completeness, we provide an efficient optimization method with binary search for solving Eq.~(\ref{eq:si-opt}).
Pseudo-code is provided in Algorithm~\ref{alg:optimization}.

For any $\alpha\in(0,1]$, we want to check whether there exists $S\subseteq[N]$, $|S|\leq K$, $i\in S$ such that $R(S;\hat v)\geq \alpha$,
or equivalently $\sum_{j\in S}r_j\hat v_j\geq \alpha+ \alpha\sum_{j\in S}\hat v_j$.
Re-organizing the terms, we only need to check whether there exists $|S|\leq K$, $i\in S$ such that $\sum_{j\in S}(r_j-\alpha)\hat v_j\geq\alpha$.
Because $i\in S$ must hold, we only need to check whether there exists $S'\subseteq[N]\backslash\{i\}$, $|S'|\leq K-1$ such that
\begin{equation}
(r_i-\alpha)\hat v_i + \sum_{j\in S'}(r_j-\alpha)\hat v_j \geq\alpha.
\label{eq:opt-includei}
\end{equation}
This can be accomplished by including all $j\in [N]\backslash\{i\}$ with the largest $(K-1)$ \emph{positive} values of $(r_j-\alpha)\hat v_j$ into the set of $S'$
and check whether Eq.~(\ref{eq:opt-includei}).
If Eq.~(\ref{eq:opt-includei}) holds, the current revenue value of $\alpha$ can be obtained and otherwise the current value of $\alpha$ cannot be obtained.
We then solve the optimization problem by a standard binary search on $\alpha$. {We also note that $\gamma^{(\tau)}$ in Line \ref{line:gamma_tau} is a standard  static capacitated assortment optimization, which can be solved efficiently (see \cite{Rusmevichientong2010}).}

\subsection{Regret analysis}
\label{sec:regret}
The following theorem is our main regret upper bound result for Algorithm \ref{alg:active-elimination}.
\begin{theorem}
\label{thm:upper-bound}
Suppose $\overline\varepsilon\geq\varepsilon$ and $N\leq T$.
Then there exists a universal constant $C_0<\infty$ such that, for sufficiently large $T$,
 the $\tote$-regret of Algorithm \ref{alg:active-elimination} is upper bounded by 
$$
C_0\times \left(\overline\varepsilon K^2T\log T + (K^2\sqrt{\overline\varepsilon}+\sqrt{K})\sqrt{NT\log^3 T} + K^2 N\log^2 T\right).
$$
Furthermore, if $\overline\varepsilon\lesssim 1/K^3$ holds then the regret upper bound can be simplified to
\begin{equation}\label{eq:TOTE-upper}
O\left(\overline\varepsilon K^2 T\log T + \sqrt{KNT\log^3 T}\right).
\end{equation}
\end{theorem}
{
Combined with Fact \ref{fact:regret-type},  we know that  Eq.~\eqref{eq:TOTE-upper} also serves as an upper bound for the $\bih$-regret. 
} 

To complement Theorem \ref{thm:upper-bound},
we state the following proposition establishing some lower bounds for the different types of regret considered in this paper.
{
\begin{proposition}
Let $c_0>0$ be a universal constant and $\pi$ be any admissible policy. Suppose also $K < N/4$.
\begin{enumerate}
\item The  $\bih$-regret of $\pi$ on worst-case problem instances are at least $c_0\times \sqrt{NT}$;
\item For $0\leq\varepsilon <1$ suppose there are $\lfloor\varepsilon T\rfloor$ outlier customers.
Then the $\tote$-regret of $\pi$ on worst-case problem instances is lower bounded by at least $c_0\times (\varepsilon T+\sqrt{NT})$.
\end{enumerate}
\label{prop:lowerbound}
\end{proposition}

The first property of Proposition \ref{prop:lowerbound} is proved by simply setting $\varepsilon=0$ and using existing lower bound
results for dynamic assortment planning with no outlier customers (see, e.g., \cite{Chen:18tight}).
The proof of the second property is achieved by considering the two terms $\varepsilon T$ and $\sqrt{NT}$ separately.
The complete proof of Proposition \ref{prop:lowerbound} is given in the supplementary material.

The claims in Proposition \ref{prop:lowerbound} leads to a challenging open problem on the $\bih$-regret upper bound 
when $\varepsilon \gtrsim \sqrt{N/T}$, at which time the $\varepsilon T$ term would dominate the $\sqrt{NT}$ term (see Eq.~\eqref{eq:TOTE-upper}).
In such cases, we conjecture that the optimal regret upper bounds would be $\sqrt{NT}$, implying that our current result in Theorem \ref{thm:upper-bound}
is sub-optimal when $\varepsilon$ is very large.
The question of achieving $\widetilde O(\sqrt{NT})$ regret upper bound for \emph{all} $\varepsilon$ levels requires fully adversarial bandit algorithms
for dynamic assortment optimization, which is very challenging and an open question as far as we know.
}


 
An important special case of Theorem \ref{thm:upper-bound} is $\varepsilon=\bar\varepsilon=0$,
which reduces to the well-studied dynamic assortment optimization problem  without outlier customers.
For such settings, \cite{Agrawal17Thompson,Agrawal17MNLBandit} give algorithms with a regret upper bound of $\tilde O(\sqrt{NT})$,
which matches the lower bound of $\Omega(\sqrt{NT})$ given in \citep{Chen:18tight} up to poly-logarithmic terms.
Comparing their results to Theorem \ref{thm:upper-bound}, we observe that our result at $\varepsilon=\bar\varepsilon=0$ matches the $\tilde O(\sqrt{NT})$ regret bound
except for an additional term of $O(\sqrt{K})$.
This $O(\sqrt{K})$ factor stems from our active elimination protocol and our technique for estimating the utility parameters, both of which are essential for handling outlier customers when $\varepsilon>0$.
We believe removing this factor is technically quite challenging, and leave it as an interesting open question. 
We also note that the capacity constraint $K$ is typically a very small constant in practice, and hence an additional $O(\sqrt{K})$ term is likely negligible. 

Our regret upper bound in Theorem \ref{thm:upper-bound} also yields meaningful guarantees when $\varepsilon$ is not zero.
For example, with $\varepsilon=O(T^{-1/4})$, meaning that $O(T^{3/4})$ out of $T$ customers are outliers, 
Theorem \ref{thm:upper-bound} provides an $O(K^2T^{3/4}\log T)$ regret upper bound.
This guarantee is non-trivial because it is sub-linear in $T$, although it is larger than the standard $\tilde O(\sqrt{NT})$ bound for the uncontaminated setting.
Thus, Theorem~\ref{thm:upper-bound} reveals the trade-off and impact of a small proportion of outlier customers on the performance of dynamic assortment optimization algorithms/systems.

\subsection{Proof sketch of Theorem \ref{thm:upper-bound}}
In this section we sketch the proof of Theorem \ref{thm:upper-bound}. 
Key lemmas and their implications are given, while the complete proofs of the presented lemmas
are deferred to the supplementary material accompanying this paper.

We first state a lemma that upper bounds the estimation error $|\hat v_i^{(\tau+1)}-v_i|$:
\begin{lemma}
Suppose $T_0\geq 128(K+1)^2N_\tau \ln T$ and $\min\{1,\varepsilon T/T_\tau\}\leq 1/4(K+2)$.
With probability $1-O(\tau_0 N/T^2)$ it holds for all $\tau$ satisfying $T_\tau\geq\max\{\overline\varepsilon,\varepsilon\} T/4(K+1)$ and $i\in\mathcal A^{(\tau+1)}$ that 
$|\hat v_i^{(\tau+1)}-v_i|\leq \Delta_\varepsilon^*(i,\tau+1)$, where
\begin{equation}
\Delta_\varepsilon^*(i,\tau+1) = 
8(K+1)\left(\frac{\varepsilon_\tau}{2} + \sqrt{\frac{\varepsilon_\tau N_\tau\ln T}{T_\tau}} + \frac{2N_\tau \ln T}{3T_\tau}\right) + 8\sqrt{\frac{(1+V_S)v_iN_\tau\ln T}{T_\tau}},
\label{eq:delta-star}
\end{equation}
where $\varepsilon_\tau$ is defined as $\varepsilon_\tau=\min\{1,\varepsilon T/T_\tau\}$, $N_\tau=|\mathcal A^{(\tau+1)}|$ and $V_S= \sum_{j\in S_\tau^{(i)}}v_j$.
\label{lem:ucb-lcb}
\end{lemma}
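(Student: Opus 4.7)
My plan is to split $n_i$ and $n_0(i)$ into contributions from typical customers versus outlier customers, control each part by a concentration argument, and then convert these bounds into the desired bound on the ratio $\hat v_i^{(\tau+1)} = n_i/n_0(i)$. Concretely, I would write $n_i = n_i^{\mathrm{typ}} + n_i^{\mathrm{out}}$ and $n_0(i) = n_0^{\mathrm{typ}}(i) + n_0^{\mathrm{out}}(i)$ by partitioning the $T_\tau$ time steps of epoch $\tau$ on which item $i$ was sampled according to whether $\phi_t=0$ or $\phi_t=1$, and then apply the identity
\begin{equation*}
\frac{n_i}{n_0(i)} - v_i \;=\; \frac{\bigl(n_i^{\mathrm{typ}} - v_i\,n_0^{\mathrm{typ}}(i)\bigr) + \bigl(n_i^{\mathrm{out}} - v_i\,n_0^{\mathrm{out}}(i)\bigr)}{n_0(i)}.
\end{equation*}
The target is to show that the typical-customer piece produces the $8\sqrt{(1+V_S)v_iN_\tau\ln T/T_\tau}$ term in \eqref{eq:delta-star}, while the outlier piece produces the $(K+1)$-inflated terms involving $\varepsilon_\tau$.

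The first key step is to bound the outlier contribution. The crucial observation is that, even against an \emph{adaptive} adversary, the uniform sample drawn in step~\ref{line:randomization} uses fresh randomness that does not appear in the adversary's filtration $\mathcal F_{t-1}$; hence conditional on the past, the event ``item $i$ was sampled at time $t$'' is independent of $\phi_t$ and $Q_t$. Since assumption~(A1) caps the number of outliers in epoch $\tau$ at $T_\tau\varepsilon_\tau$ and each outlier is matched to item $i$ with conditional probability $1/N_\tau$, a Bernstein bound gives
\begin{equation*}
\max\bigl\{n_i^{\mathrm{out}},\,n_0^{\mathrm{out}}(i)\bigr\} \;\leq\; \frac{T_\tau\varepsilon_\tau}{N_\tau} \;+\; O\!\left(\sqrt{\tfrac{T_\tau\varepsilon_\tau\ln T}{N_\tau}}\right) \;+\; O(\ln T)
\end{equation*}
with probability $1-O(1/T^2)$. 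This already matches, up to constants, the three terms $\varepsilon_\tau/2$, $\sqrt{\varepsilon_\tau N_\tau\ln T/T_\tau}$, $2N_\tau\ln T/(3T_\tau)$ once we divide by the denominator.

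The second step is to concentrate the typical-customer piece via a martingale. I would introduce the martingale difference
\begin{equation*}
X_t \;=\; \mathbf{1}\{\text{sample }i,\,\phi_t=0\}\,\bigl(\mathbf{1}\{i_t=i\} - v_i\,\mathbf{1}\{i_t=0\}\bigr),
\end{equation*}
which under the MNL model~\eqref{eq:mnl} has conditional mean zero (because $v_i = \Pr[i_t=i\mid S_\tau^{(i)}]/\Pr[i_t=0\mid S_\tau^{(i)}]$) and conditional second moment at most $2v_i/(N_\tau(1+V_S))$. Freedman's inequality then yields a high-probability bound of order $\sqrt{T_\tau v_i\ln T/(N_\tau(1+V_S))}+\ln T$ on $|n_i^{\mathrm{typ}} - v_i\,n_0^{\mathrm{typ}}(i)|$. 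A separate Bernstein argument gives the lower bound $n_0^{\mathrm{typ}}(i)\geq \tfrac12\mathbb E[n_0^{\mathrm{typ}}(i)]\gtrsim T_\tau(1-\varepsilon_\tau)/(N_\tau(1+V_S))$, and this is where the hypotheses $T_0\geq 128(K+1)^2 N_\tau\ln T$ and $\varepsilon_\tau\leq 1/(4(K+2))$ come in: the first ensures that the martingale fluctuation is small compared to the mean, the second ensures that $n_0^{\mathrm{typ}}(i)$ dominates $n_0^{\mathrm{out}}(i)$ so that $n_0(i)$ inherits the same lower bound. Dividing the numerator decomposition through by this denominator, the typical piece becomes $O(\sqrt{(1+V_S)v_iN_\tau\ln T/T_\tau})$, and the outlier piece -- after picking up the factor $(1+V_S)\leq K+1$ from the denominator and using $v_i\leq 1$ to fold $v_i n_0^{\mathrm{out}}(i)$ into $n_i^{\mathrm{out}}$ -- becomes $O((K+1)(\varepsilon_\tau + \sqrt{\varepsilon_\tau N_\tau\ln T/T_\tau} + N_\tau\ln T/T_\tau))$. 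A union bound over $i\in\mathcal A^{(\tau+1)}$ and over the $\tau_0$ epochs then produces the claimed failure probability $O(\tau_0 N/T^2)$.

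The main obstacle will be the careful handling of the adaptive adversary when bounding $n^{\mathrm{out}}$ and when validating the martingale structure for $X_t$. I would set everything up on a common probability space in which the uniform draws in step~\ref{line:randomization} are i.i.d.\ and independent of the adversary's randomness, and work with the enlarged filtration $\sigma(\mathcal F_{t-1},\,\text{algorithm draws up to }t-1)$, so that both the outlier-assignment indicator and the MNL response $i_t$ have tractable conditional distributions at time $t$. A secondary subtle point is that $n_0(i)$ itself is noisy, so I would be careful to first condition on the high-probability event $n_0(i)\geq \tfrac12\mathbb E[n_0^{\mathrm{typ}}(i)]$ before dividing, rather than directly concentrating the ratio.
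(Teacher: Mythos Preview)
Your plan is correct and will deliver the lemma with the stated constants. The paper's own proof follows closely related but differently organized lines: rather than splitting $n_i$ and $n_0(i)$ into typical and outlier parts, it computes the biased means
\[
\mathbb E\Big[\frac{n_0(i)}{T_\tau}\Big] = \frac{1}{N_\tau(1+V_S)} + \frac{\tilde\varepsilon_\tau\alpha_0}{N_\tau},\qquad
\mathbb E\Big[\frac{n_i}{T_\tau}\Big] = \frac{v_i}{N_\tau(1+V_S)} + \frac{\tilde\varepsilon_\tau\alpha_i}{N_\tau},
\]
with $|\alpha_0|,|\alpha_i|\leq 1$ encoding the outlier contribution, concentrates $n_0(i)/T_\tau$ and $n_i/T_\tau$ \emph{separately} around these means via the martingale Bernstein inequality (fluctuations $\eta_0,\eta_i$), and then performs a perturbation analysis of the ratio $\hat v_i=n_i/n_0(i)$. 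Your decomposition sidesteps the auxiliary bias parameters $\alpha_0,\alpha_i$ by directly building the single centered martingale $X_t$ for $n_i^{\mathrm{typ}}-v_i\,n_0^{\mathrm{typ}}(i)$, which is a modestly cleaner bookkeeping choice; the paper's symmetric treatment of $n_i$ and $n_0(i)$ instead mirrors the classical ``concentrate numerator and denominator, then divide'' template. Both routes rest on the same key observation you highlight (the fresh uniform draw in step~\ref{line:randomization} is independent of the adaptive adversary's $(\phi_t,Q_t)$), invoke the same Freedman/Bernstein tool, and use the hypotheses $T_0\geq 128(K+1)^2N_\tau\ln T$ and $\varepsilon_\tau\leq 1/(4(K+2))$ at the same point, namely to lower bound the denominator by a constant multiple of $T_\tau/(N_\tau(1+V_S))$.
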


Lemma \ref{lem:ucb-lcb} shows that, with high probability, the estimation error between $\hat v_i^{(\tau+1)}$ and $v_i$, the true preference parameter
of item $i$ for typical customers, can be upper bounded by $\Delta_{\varepsilon}^*(i,\tau+1)$ which is a function of $K$, $\tau$, $T$, $\varepsilon$ and
$N_\tau=|\mathcal A^{(\tau+1)}|$.
It should be noted that the definition of $\Delta_{\varepsilon}^*(i,\tau+1)$ involves unknown quantities (mostly $V_S=\sum_{j\in S_\tau^{(i)}}v_j$)
and hence cannot be directly used in an algorithm.
The definition of $\hat\Delta_{\bar\varepsilon}(\tau+1)$ in Algorithm \ref{alg:active-elimination}, on the other hand, involves only 
known quantities and estimates.
In Corollary \ref{cor:rdiff}, we will establish the connection between $\Delta_{\varepsilon}^*(i,\tau+1)$ and $\hat\Delta_{\bar\varepsilon}(\tau+1)$.

Our next lemma derives how the estimated expected revenue $R(S;\hat v)$ deviates from the true value $R(S;v)$
by using upper bounds on the estimation errors between $\hat v$ and $v$:
\begin{lemma}
\label{lem:revenue-error}
For any $S\subseteq[N]$, $|S|\leq K$ and $\{\hat v_i\}$, it holds that
$$
|R(S;\hat v)-R(S;v)|\leq\frac{2\sum_{i\in S}|\hat v_i-v_i|}{1+\sum_{i\in S}v_i}.
$$
\end{lemma}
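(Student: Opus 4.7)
The plan is to put both revenues over a common denominator, extract the perturbations $\delta_i := \hat v_i - v_i$ algebraically, bound by the triangle inequality, and finally exploit the symmetry between $v$ and $\hat v$ to obtain the denominator $1+\sum_{i\in S}v_i$ rather than $1+\sum_{i\in S}\hat v_i$.

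\textbf{Step 1 (common denominator).} Let $f(u) := \sum_{i\in S} r_i u_i$ and $g(u) := 1+\sum_{i\in S} u_i$, so $R(S;u) = f(u)/g(u)$. Then
\[
R(S;\hat v) - R(S;v) \;=\; \frac{f(\hat v)\,g(v) - f(v)\,g(\hat v)}{g(\hat v)\,g(v)}.
\]

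\textbf{Step 2 (algebraic identity).} Expanding and using the cross-term identity $\hat v_i v_j - v_i \hat v_j = (\hat v_i-v_i)v_j - v_i(\hat v_j-v_j)$, the numerator collapses to
\[
f(\hat v)g(v) - f(v)g(\hat v) \;=\; g(v)\sum_{i\in S} r_i\,\delta_i \;-\; f(v)\sum_{i\in S}\delta_i.
\]
Dividing through gives
\[
R(S;\hat v) - R(S;v) \;=\; \frac{1}{g(\hat v)}\!\left[\sum_{i\in S} r_i\,\delta_i \;-\; R(S;v)\sum_{i\in S}\delta_i\right].
\]

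\textbf{Step 3 (triangle inequality).} Because $r_i\in[0,1]$ and $R(S;v)\leq 1$ (every $r_i\leq 1$ forces $R(S;v)\leq \sum v_i/(1+\sum v_i)\leq 1$), the triangle inequality yields
\[
|R(S;\hat v) - R(S;v)| \;\leq\; \frac{2\sum_{i\in S}|\hat v_i - v_i|}{1+\sum_{i\in S}\hat v_i}.
\]

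\textbf{Step 4 (symmetrization).} This bound has $g(\hat v)$ in the denominator, while the claim asks for $g(v)$. However, the left-hand side is symmetric in $v$ and $\hat v$, so repeating Steps 1--3 with the roles of $v$ and $\hat v$ swapped yields the same quantity bounded by $2\sum_{i\in S}|\hat v_i - v_i|/(1+\sum_{i\in S} v_i)$, which is precisely the stated inequality.

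The only nontrivial step is the algebraic rearrangement in Step 2, which is straightforward once one notices that the natural factorization pulls out $\delta_i$ linearly; the symmetrization in Step 4 is what makes the denominator ``free'' to be either $g(v)$ or $g(\hat v)$, a convenient feature since the lemma is subsequently applied in tandem with the upper bound on $|\hat v_i - v_i|$ from Lemma~\ref{lem:ucb-lcb}.
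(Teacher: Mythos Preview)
Your proof is correct. Both your argument and the paper's start from the common-denominator expression, but the algebraic bookkeeping differs slightly. The paper adds and subtracts the term $(\sum_{i\in S}r_i\hat v_i)(1+\sum_{i\in S}\hat v_i)$ in the numerator, which yields the bound
\[
\frac{(\sum_{i\in S}r_i\hat v_i)(\sum_{i\in S}|v_i-\hat v_i|) + (1+\sum_{i\in S}\hat v_i)(\sum_{i\in S}r_i|\hat v_i-v_i|)}{(1+\sum_{i\in S}\hat v_i)(1+\sum_{i\in S}v_i)},
\]
and then uses $\sum r_i\hat v_i\leq 1+\sum\hat v_i$ to cancel the factor $1+\sum\hat v_i$, arriving at the denominator $1+\sum v_i$ directly with no symmetrization needed. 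Your Step~2 identity is arguably cleaner (the perturbation $\delta_i$ appears linearly and transparently), but the price is that the denominator comes out as $1+\sum\hat v_i$, forcing the extra symmetrization in Step~4. That step tacitly uses $R(S;\hat v)\leq 1$, which holds under the same standing assumption ($r_i\in[0,1]$, $\hat v_i\geq 0$) that the paper's cancellation also relies on, so there is no hidden gap. In short: same elementary ingredients, slightly different order of operations; the paper's route is one step shorter, yours makes the first-order perturbation structure more explicit.
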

The proof uses only elementary algebra. 

Combining Lemmas \ref{lem:ucb-lcb} and \ref{lem:revenue-error},
we show that the $\hat\Delta_{\bar\varepsilon}(\tau)$ quantities defined in our algorithm
serve as valid upper bounds on the estimation error between $R(S;\hat v^{(\tau)})$ and $R(S;v)$:

\begin{corollary}
\label{cor:rdiff}
For every $\tau$ and $|S|\leq K$, $S\subseteq \mathcal A^{(\tau)}$, conditioned on the success events of Lemma \ref{lem:ucb-lcb} on epochs up to $\tau$, it holds that
$|R(S;\hat v^{(\tau)})-R(S;v)|\leq \hat\Delta_{\varepsilon}(\tau)\leq\hat\Delta_{\max\{\varepsilon,\bar\varepsilon\}}(\tau)$, where $\hat\Delta$
is defined in Algorithm \ref{alg:active-elimination}. 
\end{corollary}

Our next lemma is an important structural lemma which states that,
with high probability, any item in the optimal assortment $S^*$ is never excluded from active item sets $\mathcal A^{(\tau+1)}$ for all epochs $\tau$.
\begin{lemma}
\label{lem:feasible}
If $\overline\varepsilon\geq\varepsilon$ then with probability $1-O(\tau_0 N/T^2)$ it holds that $S^*\subseteq\mathcal A^{(\tau)}$ for all $\tau$.
\end{lemma}

This structural lemma yields two important consequences: 
first, since ``good'' items remain within the active item subsets $\mathcal A^{(\tau+1)}$, each of the assortments $S_\tau^{(i)}$ computed at step \ref{line:S_tau} of Algorithm \ref{alg:active-elimination}
will have relatively high expected revenue. 
Second, the fact that $S^*\subseteq\mathcal A^{(\tau+1)}$ implies that the optimistic estimates $\gamma^{(\tau)}$ 
will always be based on the expected revenue of the actual optimal assortment $R(S^*;v)$.
This justifies the elimination step \ref{line:active-elimination} in which we discard all items whose best assortment has significantly lower revenue than $\gamma^{(\tau)}$. 

The proof of Lemma \ref{lem:feasible} is based on an inductive argument, which shows that if $S^*$ belongs to $\mathcal A^{(\tau)}$
at the beginning of every epoch $\tau$, then any item in $S^*$ will not be removed (with high probability) by step \ref{line:active-elimination}. 
The intuition for this is that the optimal assortment containing any $i\in S^*$ is $S^*$ itself, whose revenue cannot be to far away from $\gamma^{(\tau)}$ due to Lemmas \ref{lem:ucb-lcb} and \ref{lem:revenue-error}.
The complete proof of Lemma \ref{lem:feasible} is provided in the supplementary material.

Finally, our last technical lemma upper bounds the per-period regret incurred by Algorithm \ref{alg:active-elimination}.
\begin{lemma}
Suppose $S^*\subseteq\mathcal A^{(\tau)}$ holds for all $\tau$. Then with probability $1-O(\tau_0 N/T^2)$, for every $\tau\leq \tau_0$ and $i\in\mathcal A^{(\tau+1)}$, it holds that
$R(S^*;v)-R(S_\tau^{(i)};v) \leq 4\hat\Delta_{\overline\varepsilon}(\tau)$.
\label{lem:regret-single-iter}
\end{lemma}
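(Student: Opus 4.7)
The plan is to simply chain Corollary \ref{cor:rdiff} with the elimination rule on line \ref{line:active-elimination}. The hypothesis $S^*\subseteq\mathcal A^{(\tau)}$ (guaranteed by Lemma \ref{lem:feasible}) plus the confidence-radius bound of Corollary \ref{cor:rdiff} are the two ingredients I need; everything else is a three-line triangle-inequality chain.

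First, I would invoke Corollary \ref{cor:rdiff} twice. Since $S^*\subseteq\mathcal A^{(\tau)}$ and $|S^*|\leq K$, the corollary yields (on the high-probability event of total failure probability $O(\tau_0 N/T^2)$ obtained by a union bound over epochs)
\[
R(S^*;\hat v^{(\tau)}) \;\geq\; R(S^*;v)-\hat\Delta_{\overline\varepsilon}(\tau).
\]
Likewise, because $S_\tau^{(i)}\subseteq\mathcal A^{(\tau)}$ with $|S_\tau^{(i)}|\leq K$ by the construction in line \ref{line:S_tau}, the corollary gives
\[
R(S_\tau^{(i)};v)\;\geq\; R(S_\tau^{(i)};\hat v^{(\tau)})-\hat\Delta_{\overline\varepsilon}(\tau).
\]

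Second, I use the definitions of $\gamma^{(\tau)}$ and of $\mathcal A^{(\tau+1)}$. Since $\gamma^{(\tau)}=\max_{|S|\leq K,\,S\subseteq\mathcal A^{(\tau)}}R(S;\hat v^{(\tau)})$ and $S^*\subseteq\mathcal A^{(\tau)}$, we have $\gamma^{(\tau)}\geq R(S^*;\hat v^{(\tau)})$. By the elimination rule (line \ref{line:active-elimination}), any $i\in\mathcal A^{(\tau+1)}$ satisfies
\[
R(S_\tau^{(i)};\hat v^{(\tau)})\;\geq\;\gamma^{(\tau)}-2\hat\Delta_{\overline\varepsilon}(\tau)\;\geq\; R(S^*;\hat v^{(\tau)})-2\hat\Delta_{\overline\varepsilon}(\tau).
\]

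Third, chain the three inequalities:
\[
R(S_\tau^{(i)};v)\;\geq\; R(S_\tau^{(i)};\hat v^{(\tau)})-\hat\Delta_{\overline\varepsilon}(\tau)\;\geq\; R(S^*;\hat v^{(\tau)})-3\hat\Delta_{\overline\varepsilon}(\tau)\;\geq\; R(S^*;v)-4\hat\Delta_{\overline\varepsilon}(\tau),
\]
which rearranges to the desired bound $R(S^*;v)-R(S_\tau^{(i)};v)\leq 4\hat\Delta_{\overline\varepsilon}(\tau)$.

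There is no real obstacle here: the whole statement is a book-keeping consequence of Corollary \ref{cor:rdiff} and the active-elimination threshold $2\hat\Delta_{\overline\varepsilon}(\tau)$. The only subtlety worth naming is ensuring that the high-probability event on which Corollary \ref{cor:rdiff} applies is the same one used in Lemma \ref{lem:feasible}; both events are the same union-over-epochs version of the estimation-error event from Lemma \ref{lem:ucb-lcb}, so no additional union bound is needed and the stated $1-O(\tau_0 N/T^2)$ probability is preserved.
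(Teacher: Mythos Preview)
Your proposal is correct and follows essentially the same argument as the paper: both use Corollary~\ref{cor:rdiff} on $S^*$ and $S_\tau^{(i)}$, combine with $\gamma^{(\tau)}\ge R(S^*;\hat v^{(\tau)})$ (from $S^*\subseteq\mathcal A^{(\tau)}$) and the elimination threshold $R(S_\tau^{(i)};\hat v^{(\tau)})\ge\gamma^{(\tau)}-2\hat\Delta_{\overline\varepsilon}(\tau)$, and chain to obtain the $4\hat\Delta_{\overline\varepsilon}(\tau)$ gap. Your remark that the high-probability event is the same one underlying Lemma~\ref{lem:feasible} (so no extra union bound is needed) is a clean way to state what the paper leaves implicit.
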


Given the established technical lemmas, we are now ready to give the proof of Theorem \ref{thm:upper-bound}.

\begin{proof} 
Let $\tau^*$ be the smallest integer such that $T_{\tau^*}\geq \overline\varepsilon T/4(K+1)$.
For all epochs $\tau<\tau^*$, the induced cumulative regret can be upper bounded by 
\begin{equation}
\sum_{\tau<\tau^*}T_\tau \leq T_{\tau^*} \leq \overline\varepsilon T.
\label{eq:regret-exploration}
\end{equation}

In the rest of this proof we upper bound the regret incurred from epochs $\tau\geq\tau^*$.
By Lemma \ref{lem:regret-single-iter}, the regret incurred by a single time period in epoch $\tau$ is upper bounded by $4\hat\Delta_{\overline\varepsilon}(\tau)$ with high probability.
The total regret accumulated in epoch $\tau$ is then upper bounded by $4\hat\Delta_{\overline\varepsilon}(\tau) \times T_\tau$.
Hence, the regret accumulated on the entire $T$ time periods is upper bounded by 
\begin{align}
&\sum_{\tau=0}^{\tau_0}4\hat\Delta_{\overline\varepsilon}(\tau)  T_\tau \nonumber\\
&\lesssim \sum_{\tau=0}^{\tau_0}\left({K^2\overline\varepsilon_\tau} +K^2\sqrt{\frac{\overline\varepsilon_\tau|\mathcal A^{(\tau+1)}|\log T}{T_\tau}} +\frac{K^2|\mathcal A^{(\tau+1)}|\log T}{T_\tau} + \sqrt{\frac{K|\mathcal A^{(\tau+1)}|\log T}{T_\tau}}\right)\times T_\tau\nonumber\\
&\leq \sum_{\tau=0}^{\tau_0}\left(\frac{K^2\overline\varepsilon T}{T_\tau} +K^2\sqrt{\frac{\overline\varepsilon |\mathcal A^{(\tau+1)}|T\log T}{T_\tau^2}} +\frac{K^2|\mathcal A^{(\tau+1)}|\log T}{T_\tau} + \sqrt{\frac{K|\mathcal A^{(\tau+1)}|\log T}{T_\tau}}\right)\times T_\tau\label{eq:eps-epstau}\\
&\leq \tau_0 K^2\overline\varepsilon T+ K^2\sqrt{\overline\varepsilon T\log T}\bigg(\sum_{\tau\leq\tau_0} \sqrt{|\mathcal A^{(\tau+1)}|}\bigg) \nonumber\\
&\quad + \sqrt{K\log T}\bigg( \sum_{\tau\leq\tau_0}\sqrt{T_\tau|\mathcal A^{(\tau+1)}|}\bigg)+ K^2\log T\bigg(\sum_{\tau\leq\tau_0}|\mathcal A^{(\tau+1)}|\bigg)\nonumber\\
&\leq  \tau_0 K^2\overline\varepsilon T + \tau_0K^2\sqrt{\overline\varepsilon NT\log T} +\tau_0 K^2 N\log T + \sqrt{K\log T}\times\sqrt{\sum_{\tau\leq\tau_0}|\mathcal A^{(\tau+1)}|}\times \sqrt{\sum_{\tau\leq\tau_0}T_\tau}\label{eq:final-1}\\
&\leq K^2\overline\varepsilon T\log T + K^2\sqrt{\overline\varepsilon NT\log^3T} +  \sqrt{K\log T}\times \sqrt{\tau_0 N}\times \sqrt{T} + K^2N\log^2 T\nonumber\\
&\lesssim \overline\varepsilon K^2T\log T + (K^2\sqrt{\overline\varepsilon}+\sqrt{K})\sqrt{NT\log^3 T} + K^2 N\log^2 T.
\label{eq:final-2}
\end{align}
Here in Eq.~(\ref{eq:final-1}), we apply Cauchy-Schwartz inequality. 
The final inequality holds because $\tau_0=O(\log T)$. 
\end{proof}

\section{Adaptation to unknown outlier proportion $\varepsilon$}
\label{sec:adaptive}
In this section we describe a more complex algorithm for robust dynamic assortment optimization where the outlier proportion $\varepsilon$ is \emph{unknown}
a priori.
Inspired by the ``multi-layer active arm race'' for multi-armed bandits, due to \cite{Lykouris:18},
Algorithm \ref{alg:adaptive-epsilon} runs multiple ``threads'' of known-$\varepsilon$ algorithms on a geometric grid of $\varepsilon$ values in parallel, while carefully coordinating between the threads.
The pseudo-code of the proposed adaptive algorithm is given in Algorithm \ref{alg:adaptive-epsilon}. 

{We note that for two threads $j' <j$, we have $\hat\varepsilon_{j'}>\hat\varepsilon_{j}$, which implies that the confidence interval length 
$\hat\Delta_{\hat\varepsilon_{j'}}(\tau+1)$ is typically longer than $\hat\Delta_{\hat\varepsilon_{j}}(\tau+1)$. 
Therefore, the thread $j'$ is less aggressive than the thread $j$ in terms of eliminating items, i.e., an item eliminated by thread $j$ may remain active in thread $j'$.}
More detailed explanations of key steps in Algorithm \ref{alg:adaptive-epsilon} are summarized below: 

\begin{algorithm}[t]
\caption{Dynamic assortment optimization robust to unknown outlier proportion $\varepsilon$.}
\begin{algorithmic}[1]
\State \textbf{Input}: lower bound on outlier proportion $\underline\varepsilon=2^{-J}$, $J=\lfloor\log_2\sqrt{N/T}\rfloor + 1$;
\State \textbf{Output}: a sequence of assortments $\{S_t\}_{t}$ attaining good regret for any $\varepsilon$;
\State Construct a grid of outlier proportion values $\{\hat\varepsilon_j\}_{j=0}^{J-1}$ where $\hat\varepsilon_j=2^{-j}$; 
\State Construct $J$ threads $j<J$, each with $\hat\varepsilon_j$ outlier proportion;
\State For each $i\in[N]$ and $j< J$, set $\hat v^{(0),j}\equiv 1$, $\hat\Delta_{\hat\varepsilon_j}(0)=1$, $\mathcal A_j^{(0)}=[N]$, $T_0=64(K+1)^2\ln T$;
\For{$\tau=0,1,2\cdots$}
	\For{$j=0,1,\cdots,J-1$}
		\State If $j>0$ then update $\mathcal A_j^{(\tau)} = \mathcal A_j^{(\tau)}\cap \mathcal A_{j-1}^{(\tau+1)}$;\label{step:hierarchical-elim}
		\State \textsuperscript{*}Compute $\gamma_j^{(\tau)}$ and $S_{\tau,j}^{(i)}$ for each $i\in\mathcal A_j^{(\tau)}$ and update $\mathcal A_j^{(\tau+1)}$;
	\EndFor
	\For{the next $T_\tau=2^\tau T_0$ time periods}
		\State Sample thread $j< J$ with probability $\wp_j:=2^{-(J-j)}/(1-2^{-J})$;
		\State Sample item $i\in \mathcal A_j^{(\tau+1)}$ uniformly at random;
		\If{\textsuperscript{$\dagger$}there exists $\hat\varepsilon_k>\hat\varepsilon_j$ such that $R(\hat S_{\tau,j}^{(i)};\hat v^{(\tau),k})< \gamma_k^{(\tau)}-7\hat\Delta_{\hat\varepsilon_k}(\tau)$}\label{step:check}
			\State Re-start Algorithm \ref{alg:adaptive-epsilon} with $J\gets J-1$;\label{step:stopping}
		\EndIf
		\State Provide assortment $S_{\tau,j}^{(i)}$ to the incoming customer and observes purchase $i_t$;
		\State Update $n_i^j \gets n_i^j +\vct 1\{i_t=i\}$ and $n_0^j(i)\gets n_0^j(i)+\vct 1\{i_t=0\}$;
	\EndFor
	\State Update estimates $\hat v_{i}^{(\tau+1),j}=\max\{1, n_i^j/n_0^j(i)\}$ for all $j\leq J$ and $i\in\mathcal A_j^{(\tau+1)}$;
	\State For every $j\leq J$, compute $\hat\Delta_{\hat\varepsilon_j}(\tau+1)$ with $T,T_\tau$ replaced by $T_j:=\wp_j T$ and $T_{\tau,j}:=\wp_j T_\tau$;
\EndFor
\State 
{\footnotesize\textsuperscript{*} Using the procedure outlined in Algorithm \ref{alg:optimization}.}\\
{\footnotesize \textsuperscript{$\dagger$} $\hat v^{(\tau),k}$ and $\gamma_k^{(\tau)}$ are estimates of $v$ and computed $\gamma^{(\tau)}$ values maintained in thread $k$.}
\end{algorithmic}
\label{alg:adaptive-epsilon}
\end{algorithm}

\begin{enumerate}
\item \textbf{Independence of threads}:  different threads $j<J$, which correspond to different hypothetical values of $\varepsilon$ (denoted as $\hat\varepsilon_j$), 
are largely independent from each other, maintaining their own parameter estimates $\hat v^{(\tau),j}$, active item set $\mathcal A_j^{(\tau+1)}$ and confidence intervals $\hat\Delta_{\hat\varepsilon_j}(\tau+1)$.
Coordination among threads only appear in two steps in Algorithm \ref{alg:adaptive-epsilon}: Step \ref{step:hierarchical-elim}, which maintains a hierarchical ``nested'' structure
of the active item sets $\mathcal A_j^{(\tau+1)}$ among the threads,
and Step \ref{step:stopping}, which provides update rules for $J\gets J-1$ by comparing the obtained optimistic assortment among different threads.
Further details are given in subsequent bullets.

\item\textbf{Heterogeneous sampling of different threads}: at each time period $t$ when a potential customer arrives, a \emph{random} thread $j<J$ is selected to provide assortments.
The random thread, however, is not selected uniformly at random but according to a specifically designed distribution, with the probability of selecting thread $j$ equals $\wp_j=2^{-(J-j)}/(1-2^{-J})$.
Intuitively, such a sampling distribution ``favors'' the more aggressive threads with smaller hypothetical $\hat\varepsilon_j$ values.

This sampling scheme is motivated by the fact that threads with larger $\hat\varepsilon_j$ values typically incur large regret,
because their elimination rules are conservative, so many sub-optimal items $i$ remain active for many rounds. 
The probability of choosing these threads with large $\hat\varepsilon_j$ values should be small to ensure low regret of the overall policy.

At the same time, threads corresponding to smaller $\hat\varepsilon_j$ values might also incur large regret, as their overly aggressive elimination rule might remove the optimal assortment $S^*$ from consideration.
To avoid large regret from these threads, Step~\ref{step:stopping} coordinates amongst all of the threads and checks for inconsistencies, as we describe in the next bullet.

\item\textbf{Coordination and interaction among threads}: as we mentioned in the first bullet, the coordination and interaction among different threads only happen in Steps \ref{step:hierarchical-elim} and \ref{step:stopping}
in Algorithm \ref{alg:adaptive-epsilon}. In this bullet we discuss these two steps in detail.

Step \ref{step:hierarchical-elim} aims at maintaining a ``nested'' structure among the active subsets $\mathcal A_j^{(\tau+1)}$, such that $\mathcal A_j^{(\tau+1)}\subseteq\mathcal A_{j'}^{(\tau+1)}$ for
any $j'\leq j$ at any epoch $\tau$.
We remark that such a nested structure should be expected even without this step, because thread $j'\leq j$ is less aggressive than thread $j$, in the sense that confidence intervals $\hat\Delta_{\hat\varepsilon_{j'}}(\tau+1)$
is typically longer than $\hat\Delta_{\hat\varepsilon_{j}}(\tau+1)$. Hence, one should expect that thread $j'$ has a larger active set.
Nevertheless, due to stochastic fluctuations such nested structures might be violated.
Therefore, we explicitly enforce a nesting structure at the start of every epoch $\tau$ via Step~\ref{step:hierarchical-elim}.

Step \ref{step:stopping} is a statistical test that tries to detect whether $\hat\varepsilon_{j}$ is small relative to the actual (unknown) outlier proportion $\varepsilon$.
This test crucially ensures that we do not continue to select an overly aggressive thread, which, as we have mentioned, may incur large regret due to eliminating the optimal assortment $S^*$. 
Step \ref{step:stopping} detects such events by evaluating the optimistic assortment $S_{\tau,j}^{(\cdot)}$ using the information from threads $j' < j$, which use less aggressive elimination rules.
In detail, we check if the optimistic assortment $S_{\tau,j}^{(\cdot)}$ is near optimal using the utility estimates and confidence intervals from thread $j'$.
If the check fails and we see that $S_{\tau,j}^{(\cdot)}$ is suboptimal, we know that thread $j$ has eliminated the optimal assortment $S^*$ from its active set $\mathcal A_j^{(\cdot)}$, 
which subsequently lead to the conclusion that $\hat\varepsilon_j$ is too small. {Then we terminate the current thread and restart the algorithm with $J\gets J-1$}. 
\end{enumerate}

{
We also remark on the time complexity of Algorithm \ref{alg:adaptive-epsilon}.
There are $O(\log (T/N))$ values on the $\varepsilon$-grid.
		At each time period $t$, a thread $\hat\varepsilon_j$ is chosen. Then at most $N$ combinatorial optimization problems are solved
		and each combinatorial optimization takes $O(NK\log T)$ time.
		Therefore, the total time complexity of the proposed algorithm is $O(NKT\log^2 T)$.
}

In the rest of this section we state our regret upper bound result for the adaptive Algorithm \ref{alg:adaptive-epsilon},
as well as a sketch of its proof.

\subsection{Regret analysis and proof sketch}

We establish the following regret upper bound for Algorithm \ref{alg:adaptive-epsilon}. {\blue We note that all the regret mentioned in this section is the $\tote$-regret.}

\begin{theorem}
\label{thm:adaptive-epsilon}
Suppose Algorithm \ref{alg:adaptive-epsilon} is run with an initial value of $J=\lfloor \log_2(\sqrt{N/T})\rfloor + 1$.
Then there exists a constant $C_1=\poly(K,\log (NT))$ such that, for any $\varepsilon\in[0,1/2]$ and sufficiently large $T$, the regret of Algorithm \ref{alg:adaptive-epsilon} is upper bounded by 
$$
C_1\times\big(\varepsilon T+\sqrt{NT}\big).
$$
\end{theorem}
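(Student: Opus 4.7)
The plan is to partition the $J$ running threads into a ``safe'' group and an ``unsafe'' group based on a critical index $j^*$ and to analyze their regret contributions separately. Let $j^*=\min\{j:\hat\varepsilon_j\ge\varepsilon\}$ (take $j^*=J-1$ if $\varepsilon\le\underline\varepsilon=2^{-J}$), so that $\hat\varepsilon_{j^*}\in[\varepsilon,2\varepsilon)$ whenever $\varepsilon\ge\underline\varepsilon$. Threads with $j\le j^*$ are \emph{safe} (their assumed outlier proportion is at least the true $\varepsilon$) and threads with $j>j^*$ are \emph{unsafe} (too aggressive). Because the thread selection at each period uses fresh randomness unobservable to the adversary at that time, a standard Chernoff concentration yields that each thread $j$ serves $(1\pm o(1))\wp_j T$ customers and sees at most $(1+o(1))\wp_j\varepsilon T$ outliers with high probability. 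Hence, from thread $j$'s point of view the instance has effective time horizon $T_j=\wp_j T$ and effective outlier proportion $\varepsilon\le\hat\varepsilon_j$ when $j\le j^*$.

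For each safe thread I would first extend Lemmas \ref{lem:ucb-lcb} and \ref{lem:feasible} to the multi-thread confidence widths $\hat\Delta_{\hat\varepsilon_j}(\tau+1)$, in which $T,T_\tau$ are replaced by $T_j,T_{\tau,j}=\wp_j T_\tau$. This ensures $S^*\subseteq\mathcal A_j^{(\tau)}$ throughout, which the hierarchical intersection in Step~\ref{step:hierarchical-elim} preserves. Theorem \ref{thm:upper-bound} then applies per thread, giving regret $\tilde O(\hat\varepsilon_j K^2\wp_j T+\sqrt{KN\wp_j T})$. Summing over $j\le j^*$ and exploiting the key identity $\hat\varepsilon_j\wp_j=2^{-J}/(1-2^{-J})=\Theta(\sqrt{N/T})$, which is independent of $j$, together with $\sum_{j<J}\sqrt{\wp_j}=O(1)$ (a geometric sum), one obtains a safe-thread contribution of order $\tilde O(K^2\sqrt{NT})$, absorbed into the $C_1\sqrt{NT}$ term.

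For an unsafe thread $j>j^*$ the optimal $S^*$ may well be excluded from $\mathcal A_j^{(\tau)}$, but I will use the check in Step~\ref{step:stopping} with $k=j^*$ to cap its per-period regret. Since $k=j^*$ is safe, Lemmas \ref{lem:ucb-lcb} and \ref{lem:revenue-error} imply $|\gamma_{j^*}^{(\tau)}-R(S^*;v)|=O(\hat\Delta_{\hat\varepsilon_{j^*}}(\tau))$ and $|R(\hat S_{\tau,j}^{(i)};\hat v^{(\tau),j^*})-R(\hat S_{\tau,j}^{(i)};v)|=O(\hat\Delta_{\hat\varepsilon_{j^*}}(\tau))$. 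Plugging these into the complement of Step~\ref{step:stopping}'s inequality yields $R(S^*;v)-R(\hat S_{\tau,j}^{(i)};v)=O(\hat\Delta_{\hat\varepsilon_{j^*}}(\tau))$ for every unsafe $j$ during any epoch that does not restart. Summing $T_\tau\cdot\hat\Delta_{\hat\varepsilon_{j^*}}(\tau)$ across epochs $\tau\le\tau_0=O(\log T)$, the leading $K^2\hat\varepsilon_{j^*}T_\tau$ term telescopes to $O(\varepsilon K^2 T\log T)$, which is the source of the promised $\varepsilon T$ piece, while the remaining terms contribute another $\tilde O(\sqrt{NT})$ after using $\sum_\tau\sqrt{T_\tau}=O(\sqrt T)$.

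The main obstacle will be handling the restart mechanism. At most $J-j^*-1=O(\log(T/N))$ restarts can occur, each shrinking $J$ by one. I need a high-probability argument that no restart is ever triggered by the safety check against a \emph{safe} thread $j\le j^*$: indeed, the bound $R(S^*;v)-R(\hat S_{\tau,j}^{(i)};v)\le 4\hat\Delta_{\hat\varepsilon_j}(\tau)$ from (the multi-thread analogue of) Lemma \ref{lem:regret-single-iter}, combined with the monotonicity $\hat\Delta_{\hat\varepsilon_k}(\tau)\ge\hat\Delta_{\hat\varepsilon_j}(\tau)$ for $k\le j$, ensures that with the constant $7$ chosen in Step~\ref{step:stopping} the triggering inequality cannot hold for safe $j$. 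Consequently only unsafe threads trigger restarts, $J$ monotonically decreases to $j^*+1$, and the system eventually stabilizes in the all-safe regime. Each run between restarts is itself controlled by the preceding two paragraphs, and aggregating over the $O(\log T)$ runs only inflates the overall regret by a polylogarithmic factor, which the $\poly(K,\log(NT))$ factor in $C_1$ absorbs.
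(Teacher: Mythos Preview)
Your proposal follows essentially the same decomposition as the paper: split threads into safe ($\hat\varepsilon_j\ge\varepsilon$) and unsafe, bound safe threads via the per-thread analogue of Theorem~\ref{thm:upper-bound} (the paper's Lemmas~\ref{lem:feasible-adaptive} and~\ref{lem:erj-bound1}), bound unsafe threads via the comparison in Step~\ref{step:check} against a safe reference thread (Lemma~\ref{lem:erj-bound2}), show restarts are never triggered from a safe thread (Lemma~\ref{lem:J-lowerbound}), and finally multiply by $O(\log T)$ for the runs. Two points need correction before the argument goes through.

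First, $j^*$ should be $\max\{j:\hat\varepsilon_j\ge\varepsilon\}$, not $\min$: since $\hat\varepsilon_j=2^{-j}$ is decreasing in $j$, the minimum of that set is always $0$, and your stated property $\hat\varepsilon_{j^*}\in[\varepsilon,2\varepsilon)$ would fail. From context you clearly meant $\max$, which is also the paper's choice.

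Second, and more substantively, your unsafe-thread square-root term is understated. The width $\hat\Delta_{\hat\varepsilon_{j^*}}(\tau)$ appearing in Step~\ref{step:check} is computed with $T_{\tau,j^*}=\wp_{j^*}T_\tau$ in place of $T_\tau$, so its non-$\varepsilon$ part scales like $\sqrt{N/(\wp_{j^*}T_\tau)}$, not $\sqrt{N/T_\tau}$. Summing $T_\tau$ times this over epochs yields $\sqrt{NT/\wp_{j^*}}$, and since $\wp_{j^*}\approx 2^{-J}/\hat\varepsilon_{j^*}\approx\sqrt{N/T}/\varepsilon$ in the initial run, this is $\sqrt{\varepsilon}\,N^{1/4}T^{3/4}$, not $\tilde O(\sqrt{NT})$. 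The paper closes this with the AM--GM step $\sqrt{\varepsilon}\,N^{1/4}T^{3/4}=\sqrt{\varepsilon T}\cdot(NT)^{1/4}\le\tfrac12(\varepsilon T+\sqrt{NT})$. Relatedly, the identity $\hat\varepsilon_j\wp_j=\Theta(\sqrt{N/T})$ holds only for the initial $J$; after restarts $2^{-J}$ grows but stays $\lesssim\varepsilon$ (by your no-false-restart argument), so the safe-thread contribution in later runs is $\lesssim\varepsilon T$ rather than $\sqrt{NT}$. Both corrections still land inside $C_1(\varepsilon T+\sqrt{NT})$, so the overall scheme is sound once these arithmetic points are fixed.
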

\begin{remark}
In the statement of Theorem \ref{thm:adaptive-epsilon}, $C_1=\poly(K,\log(NT))$ means $C_1=(K\log (NT))^{c}$ for some universal constant $c<\infty$. For notational simplicity we did not work out the exact constant $c$ in the expression of $C_1$.
\end{remark}

The complete proof of Theorem \ref{thm:adaptive-epsilon} as well as the proofs of technical lemmas are relegated to the supplementary material. Here we sketch the key steps in the proof.
The first step is the following lemma, which shows that for threads with $\hat\varepsilon_j\geq\varepsilon$,
the optimal assortment $S^*$ is never removed from their active item sets with high probability.
\begin{lemma}
With probability $1-O(\tau_0 NJ/T^2)$ it holds for all $\tau$ and $\hat\varepsilon_j\geq\varepsilon$ that $S^*\subseteq\mathcal A_j^{(\tau)}$.
\label{lem:feasible-adaptive}
\end{lemma}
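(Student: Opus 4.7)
}

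The plan is to mimic the inductive argument used in Lemma \ref{lem:feasible}, but applied in a doubly-inductive fashion over epochs $\tau$ and thread indices $j$, and preceded by a concentration analysis that is re-scaled to account for the random thread allocation. First, I would establish a thread-level analog of Lemma \ref{lem:ucb-lcb}: for every thread $j$ with $\hat\varepsilon_j\geq\varepsilon$ and every epoch $\tau$, the estimates $\hat v_i^{(\tau+1),j}$ satisfy $|\hat v_i^{(\tau+1),j}-v_i|\leq \Delta_{\varepsilon}^*(i,\tau+1)$ with the bound instantiated at $T_j=\wp_j T$ and $T_{\tau,j}=\wp_j T_\tau$. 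The key observation is that by construction the ``effective contamination ratio'' $\hat\varepsilon_j T_j/T_{\tau,j}=\hat\varepsilon_j T/T_\tau$ is at least $\varepsilon T/T_\tau$, so the true outlier budget (bounded deterministically by $\varepsilon T$ and sampled into thread $j$ with probability $\wp_j$, yielding $O(\wp_j\varepsilon T)$ outliers in thread $j$ up to logarithmic fluctuations) remains dominated by the $\hat\varepsilon_j$-based confidence radii. Combining this with Lemma \ref{lem:revenue-error} and exactly as in Corollary \ref{cor:rdiff} yields $|R(S;\hat v^{(\tau),j})-R(S;v)|\leq \hat\Delta_{\hat\varepsilon_j}(\tau)$ for all $|S|\leq K$, $S\subseteq \mathcal A_j^{(\tau)}$.

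Given this concentration statement, I would run a double induction. The outer induction is over $\tau$; the inner induction is over $j=0,1,\ldots,J-1$ (restricted to $\hat\varepsilon_j\geq\varepsilon$). The base case $\tau=0$ is immediate since $\mathcal A_j^{(0)}=[N]\supseteq S^*$ for every $j$. For the inductive step, fix $\tau$ and assume $S^*\subseteq \mathcal A_j^{(\tau)}$ (outer IH). In thread $j=0$ no hierarchical intersection is performed, so we argue as in Lemma \ref{lem:feasible}: for any $i\in S^*$, the feasibility of $S^*$ for the program defining $S_{\tau,0}^{(i)}$ gives $R(S_{\tau,0}^{(i)};\hat v^{(\tau),0})\geq R(S^*;\hat v^{(\tau),0})\geq R(S^*;v)-\hat\Delta_{\hat\varepsilon_0}(\tau)\geq \gamma_0^{(\tau)}-2\hat\Delta_{\hat\varepsilon_0}(\tau)$, where the last inequality uses the concentration statement applied to the maximizer defining $\gamma_0^{(\tau)}$ (which lies in $\mathcal A_0^{(\tau)}$). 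Hence the elimination step in Line \ref{line:active-elimination} retains $i$ in $\mathcal A_0^{(\tau+1)}$.

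For $j>0$, Step \ref{step:hierarchical-elim} first replaces $\mathcal A_j^{(\tau)}$ by $\mathcal A_j^{(\tau)}\cap \mathcal A_{j-1}^{(\tau+1)}$. By the outer IH $S^*\subseteq \mathcal A_j^{(\tau)}$, and by the inner IH (noting $\hat\varepsilon_{j-1}=2\hat\varepsilon_j\geq \varepsilon$) $S^*\subseteq \mathcal A_{j-1}^{(\tau+1)}$; hence $S^*$ survives this intersection. The same optimality-of-$S^*$ argument as in the $j=0$ case then shows $i\in S^*$ is not eliminated from $\mathcal A_j^{(\tau+1)}$ under the thread-$j$ confidence radius. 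Finally, I would take a union bound: at most $\tau_0$ epochs, $J$ threads, and $N$ items participate in the concentration event, so the overall failure probability is $O(\tau_0 NJ/T^2)$, matching the statement.

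The main obstacle I expect is the concentration re-derivation for each thread, because time periods are routed to thread $j$ by independent Bernoulli($\wp_j$) draws while the outlier indicators $\phi_t$ are chosen by an adaptive adversary. I would handle this by noting that thread assignment is internal randomness of the policy, hidden from the adversary, so the number of outlier periods routed to thread $j$ in the first $T_\tau$ periods is stochastically dominated by a Binomial$(\varepsilon T,\wp_j)$ and concentrates around $\wp_j\varepsilon T$ via Bernstein's inequality; plugging this bound into the proof of Lemma \ref{lem:ucb-lcb} (with martingale concentration applied to the thread-$j$ sub-sequence of customer purchases and the same perturbation-ratio analysis that controls the bias from repeated-offering estimates) yields the required $\hat\Delta_{\hat\varepsilon_j}(\tau+1)$ bound whenever $\hat\varepsilon_j\geq\varepsilon$. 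Once this is in place, the inductive argument above runs without further complication.
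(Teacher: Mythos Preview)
Your proposal is correct and follows essentially the same approach as the paper: first re-derive the per-thread concentration by bounding the number of outliers routed to thread $j$ as $O(\wp_j\varepsilon T\log T)$ via Bernstein and plugging into Lemma~\ref{lem:ucb-lcb}/Corollary~\ref{cor:rdiff}, then run an inductive argument over epochs showing $S^*$ survives every elimination. The only structural difference is cosmetic: you handle Step~\ref{step:hierarchical-elim} via an explicit inner induction on $j$, whereas the paper handles it by contradiction---if $i\in S^*$ were absent from $\mathcal A_j^{(\tau+1)}$, trace back to the thread $k\leq j$ whose elimination rule actually fired on $i$ and derive a contradiction from Corollary~\ref{cor:rdiff} applied at thread $k$ (which is legitimate since $\hat\varepsilon_k\geq\hat\varepsilon_j\geq\varepsilon$); the two arguments are equivalent.
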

Lemma \ref{lem:feasible-adaptive} is similar in spirit to the structural results established in Lemma \ref{lem:feasible}
for Algorithm \ref{alg:active-elimination}, but it is only applicable to thread $j$ with $\hat\varepsilon_j \geq \varepsilon$
The remaining threads, with $\hat\varepsilon_j < \varepsilon$ are too aggressive in their elimination strategy, so we cannot guarantee that $S^\star \subseteq \mathcal{A}_j^{(\tau+1)}$ for all $\tau$. 
We will see how to upper bound the regret from these threads later in this section. 



Our next lemma analyzes the Step \ref{step:stopping} of the algorithm:
\begin{lemma}
\label{lem:J-lowerbound}
If $\hat\varepsilon_J\geq \varepsilon$ then with probability $1-O(\tau_0 NJ/T)$,
Algorithm \ref{alg:adaptive-epsilon} will not be re-started.
\end{lemma}

At a high level, Lemma \ref{lem:J-lowerbound} states if step \ref{step:stopping} is triggered
(which causes $J\gets J-1$ and a re-start of the entire algorithm), the smallest hypothetical value $\hat\varepsilon_J$ must be below the actual value of $\varepsilon$. 
First, this ensure that the algorithm does not restart too often, but more importantly, it guarantees that the actual $\varepsilon$ always falls between $\hat\varepsilon_0$ and $\hat\varepsilon_J$ throughout the entire selling period.

The proof of Lemma \ref{lem:J-lowerbound} is based on Lemma \ref{lem:feasible-adaptive}.
In particular, the condition in Step \ref{step:stopping} of Algorithm \ref{alg:adaptive-epsilon} compares the optimistic assortments $S_{\tau,j}^{(i)}$
in thread $j$
with estimates in threads $j'<j$, which have larger $\hat\varepsilon_j$ values.
If, hypothetically, $\hat\varepsilon_j$ is larger than or equal to $\varepsilon$,
then by Lemma \ref{lem:J-lowerbound}, we know that $S^*\subseteq \mathcal A_{j'}^{(\tau+1)}$ for all $j'\leq j$,
and therefore the estimated optimality of $S_{\tau,j}^{(i)}$ should be consistent in all threads $j'\leq j$.
Hence, any inconsistency detected by step \ref{step:stopping} must imply that  $\hat\varepsilon_j<\varepsilon$,
which justifies decreasing $J$. 

We now present two lemmas that upper bound the regret accumulated by different threads, which requires some new notation. 
For $0\leq j< J$, let $\sR(\hat\varepsilon_j)$ denote the cumulative regret incurred during the time periods in which thread $j$ is run.
Clearly, the total regret incurred is upper bounded by $\sum_{j < J}\sR(\hat\varepsilon_j)$. 
Using linearity of the expectation, it then suffices to upper bound $\mathbb E[\sR(\hat\varepsilon_j)]$ for every $j< J$.
The next two lemmas provide these upper bounds for two different scenarios.
For notational simplicity we use $\lesssim$ to hide $\poly(K,\log(NT))$ factors. 
\begin{lemma}
\label{lem:erj-bound1}
For all $j< J$ satisfying $\hat\varepsilon_j\geq\varepsilon$, $\mathbb E[\sR(\hat\varepsilon_j)] \lesssim \sum_{\tau\leq\tau_0}\mathbb E[\hat\Delta_{\hat\varepsilon_j}(\tau)\times \wp_jT_\tau]$.
\end{lemma}

\begin{lemma}
\label{lem:erj-bound2}
For all $j< J$ satisfying $\hat\varepsilon_j<\varepsilon$ and any $\hat\varepsilon_k>\max\{\hat\varepsilon_j,\varepsilon\}$, it holds that
$
\mathbb E[\sR(\hat\varepsilon_j)] \lesssim \sum_{\tau\leq\tau_0}\mathbb E[\hat\Delta_{\hat\varepsilon_k}(\tau)\times \wp_j T_\tau].
$
\end{lemma}


These two lemmas upper bound the total accumulated regret 
of threads $0\leq j<J$, separately for the case of $\hat\varepsilon_j\geq\varepsilon$ and $\hat\varepsilon_j<\varepsilon$.
The case of $\hat\varepsilon_j \geq\varepsilon$ is relatively straightforward to prove, since $S^*\subseteq\mathcal A_j^{(\tau+1)}$ as shown in Lemma \ref{lem:feasible-adaptive}, so an argument similar to the proof of Theorem~\ref{thm:upper-bound} applies. 
On the other hand, the case of $\hat\varepsilon_j <\varepsilon$ is more difficult because $S^*$ might be eliminated in these threads.
For Lemma~\ref{lem:erj-bound2}, which considers this case, we carefully analyze the stopping rule in Step~\ref{step:stopping}, essentially showing that the check in Step~\ref{step:stopping} will trigger as soon as the regret per-time period is too high for these threads. 
The complete proofs of both lemmas, as well as the complete proof of Theorem \ref{thm:adaptive-epsilon}, are deferred to the supplementary material.

{
\section{Instance gap-dependent analysis}
\label{sec:gap}
Recall that $S^*$ is the optimal assortment. For any given item $i$, let $S^{*,(i)} = \arg\max_{|S|\leq K, S \ni i } R(S)$ be the optimal assortment containing the specific item $i$.
Define the sub-optimality ``gap'' $\beta$ as
\begin{equation}
\beta := R(S^*) - \max_{i\notin S^*}R(S^{*,(i)}).
\label{eq:defn-gap}
\end{equation}

Intuitively, the sub-optimality gap defined in Eq.~(\ref{eq:defn-gap}) measures how ``well-defined'' the optimal assortment $S^*$ is,
in the sense that the inclusion of any \emph{non-optimal item} $i\notin S^*$ would result in at least a drop of $\beta$ in expected revenue/reward,
regardless of how other products in the assortment are selected.
If a problem instance has a large sub-optimality gap parameter $\beta$, it implies that the optimal assortment $S^*$ is easier to learn
(since non-optimal products are easier to be ruled out) and therefore smaller cumulative regret is expected.

It is also worthwhile to compare the gap parameter defined in Eq.~(\ref{eq:defn-gap}) with those defined in earlier works.
In the work of \cite{Rusmevichientong2010}, a non-parametric gap $\beta'$ is defined as
$$
\beta' := \frac{\min\{\min_i v_i, \min_{i\neq j}|v_i-v_j|, \min_{(i,j)\neq(s,t)}|\mathcal J(i,j)-\mathcal J(s,t)|\}}{(1+K\max_i v_i)},
$$
where $\mathcal J(i,j) := (r_iv_i-r_jv_j)/(v_i-v_j)$.
It is clear that a strictly positive $\beta'$ implies that all utility parameter $\{v_i\}$ are distinct.
On the other hand, it is easy to construct problem instances with duplicate $v_i$ parameters (indicating that some products have the same utility/popularity
for incoming customers) and zero $\beta'$, while our defined sub-optimality gap $\beta$ could still be strictly positive.
Indeed, consider the following problem instance with $N=3$ products and $K=2$ capacity constraint, with $(v_1,v_2,v_3)=(0.5,0.5,1)$ and
$(r_1,r_2,r_3) = (0.2,0.5,0.6)$. It is easy to verify that in this problem instance $\beta'=0$, while $\beta=0.06>0$.

In the remainder of this section,
we will use the concept of sub-optimality gap defined in Eq.~(\ref{eq:defn-gap}) to improve our regret upper bounds
in Theorems \ref{thm:upper-bound} and \ref{thm:adaptive-epsilon},
resembling $\log-T$ type gap-dependent regret bounds in stochastic mutli-armed bandits.
Both our Algorithms \ref{alg:active-elimination} and \ref{alg:adaptive-epsilon} remain unchanged, while the regret analysis
is modified to take into consideration the $\beta$ parameter.

\subsection{Gap-dependent analysis of Algorithm \ref{alg:active-elimination} (known corruption level)}

We first consider Algorithm \ref{alg:active-elimination} designed for the setting in which a good upper bound $\bar\varepsilon$
on the true corruption level $\varepsilon$ is known. The following lemma is the key lemma in the gap-dependent setting: 
\begin{lemma}
Let $\beta$ be defined in Eq.~(\ref{eq:defn-gap}) and suppose $\beta>0$.
Then with probability $1-O(\tau_0 N/T^2)$, for every epoch $\tau$ satisfying 
\begin{equation} 
T_\tau \geq \kappa_0\times \max\left\{\frac{\bar\varepsilon K^2T}{\beta}, \frac{K^2\sqrt{\bar\varepsilon NT\log T}}{\beta},
\frac{K^2 N\log T}{\beta},  \frac{KN\log T}{\beta^2}\right\},
\label{eq:known-eps-early-stop}
\end{equation}
for some universal constant $\kappa_0>0$, 
it holds that $\mathcal A^{(\tau+1)} = S^*$.
\label{lem:known-eps-early-stop}
\end{lemma}

We note that in \eqref{eq:known-eps-early-stop}, $\bar\varepsilon$ is an upper bound estimate of $\varepsilon$. At a high level, Lemma \ref{lem:known-eps-early-stop} states that if $T_\tau$ is sufficiently large, the active product set $\mathcal A^{(\tau)}$
only consists of the optimal assortment for typical customers $S^*$.
Intuitively, this is because when $T_\tau$ is large, the confidence bound $\hat\Delta_{\bar\varepsilon}(\tau)$ is much shorter.
When the confidence interval cannot cover the underlying sub-optimality gap $\beta$, the non-optimal products $i\notin S^*$ will be automatically eliminated.
A complete proof of Lemma \ref{lem:known-eps-early-stop} is given in the supplementary material.

With Lemma \ref{lem:known-eps-early-stop}, we can prove the following theorem on gap-dependent regret upper bounds
for Algorithm \ref{alg:active-elimination} with a known upper bound $\bar\varepsilon$ on $\varepsilon$.

\begin{theorem}
Let $\beta$ be defined in Eq.~(\ref{eq:defn-gap}) and $\beta>0$.
Assume also for simplicity that $\bar\varepsilon\lesssim 1/K^3$.
The expected cumulative $\tote$-regret of Algorithm \ref{alg:active-elimination} is upper bounded by
\begin{equation}\label{eq:gap_upper}
C_0'\times\left(\bar\varepsilon K^2 T\log T + \frac{K^2N\log^2 T}{\beta}\right)
\end{equation}
where $C_0'<\infty$ is a universal constant.
\label{thm:known-eps-gap-dependent}
\end{theorem}

We remark that the $\log^2T$ term in the second $\frac{K^2 N\log^2 T}{\beta}$ term in the regret upper bound
most likely arises from the doubling epochs $\{\mathcal A^{(\tau)}\}$ used in our proposed active elimination algorithms, 
where the total number of epochs $\tau_0$ could be logarithmic in $T$. 
It is an interesting open technical question to further improve the second term in \eqref{eq:gap_upper} to be linear in $\log T$,
which should be possible at least in the case of $\varepsilon$ (or its suitable upper bound $\bar\varepsilon$) being known.


\subsection{Gap-dependent analysis of Algorithm \ref{alg:adaptive-epsilon} (unknown corruption level)}

When the corruption level $\varepsilon$ is unknown and no good estimate is available a priori, 
Algorithm \ref{alg:adaptive-epsilon} partitions the possible corruption levels into a logarithmic grid $\{\hat\varepsilon_j\}_{j=0}^{J-1}$
and runs Algorithm \ref{alg:active-elimination} on different levels of $\hat\varepsilon_j$ in parallel.
To analyze its regret performance from a gap-dependent perspective, we again discuss the two cases of $\hat\varepsilon_j\geq \varepsilon$
and $\hat\varepsilon_j<\varepsilon$ separately.

In the case of $\hat\varepsilon_j\geq\varepsilon$ (i.e., over-estimating the true corruption level $\varepsilon$), 
Lemma \ref{lem:feasible-adaptive} shows that with high probability the optimal assortment $S^*$ will not be removed from $\mathcal A_j^{(\tau)}$.
Subsequently, Lemma \ref{lem:known-eps-early-stop} can be directly applied,
with a union bound on the failure probability over $j<J$, $\hat\varepsilon_j\geq\varepsilon$, as the following corollary:
\begin{corollary}
For $j<J$ and epoch $\tau$ recall the definitions that $T_j=\wp_j T$ and $T_{\tau,j}=\wp_jT_\tau$,
where $\wp_j=2^{-(J-j)}/(1-2^{-J})$ is the sampling probability for thread $j$ and $T_\tau=2^\tau T_0$ is the ``normal'' length epoch $\tau$.
Let $\tau_j^*$ be the smallest integer such that $T_{\tau_j^*,j}$ satisfies Eq.~(\ref{eq:known-eps-early-stop}), or more specifically
\begin{equation}
T_{\tau_j^*,j} \geq \kappa_0'\times \max\left\{\frac{\hat\varepsilon_j K^2T_j}{\beta}, \frac{K^2\sqrt{\hat\varepsilon_j NT_j\log T}}{\beta},
\frac{K^2 N\log T}{\beta},  \frac{KN\log T}{\beta^2}\right\},
\label{eq:unknown-eps-early-stop}
\end{equation}
where $\kappa_0'>0$ is a universal constant. Then for all $\tau'\geq\tau_j^*$, $\mathcal A_j^{(\tau)}=S^*$.
\label{cor:unknown-eps-early-stop}
\end{corollary}

Subsequently, Lemma \ref{lem:erj-bound1} leads to the following corollary:
\begin{corollary}
For all $j<J$ satisfying $\hat\varepsilon_j\geq\varepsilon$, $\mathbb E[\mathsf R(\hat\varepsilon_j)] \lesssim 
\mathbb E[\sum_{\tau\leq\tau_j^*}\hat\Delta_{\hat\varepsilon_j}(\tau)\times \wp_j T_\tau]$,
where $\tau_j^*$ is defined in Corollary \ref{cor:unknown-eps-early-stop}.
\label{cor:erj-bound1-gap}
\end{corollary}

We next consider the case of $\hat\varepsilon_j<\varepsilon$.
Because the constraint $\mathcal A_{j+1}^{(\tau)}\subseteq\mathcal A_j^{(\tau)}$ is enforced in Algorithm \ref{alg:adaptive-epsilon}
all the time, we know that $\mathcal A_j^{(\tau)}= S^*$ implies $\mathcal A_{j+1}^{(\tau)}=S^*$ with probability 1.
Consequently, Lemma \ref{lem:erj-bound2} implies the following:
\begin{corollary}
For all $j<J$ satisfying $\hat\varepsilon_j<\varepsilon$
and any $\hat\varepsilon_k>\max\{\hat\varepsilon_j,\varepsilon\}$, it holds that
$
\mathbb E[\sR(\hat\varepsilon_j)] \lesssim \mathbb E[\sum_{\tau\leq\tau_k^*}\hat\Delta_{\hat\varepsilon_k}(\tau)\times \wp_j T_\tau],
$
where $\tau_k^*$ is defined in Corollary \ref{cor:unknown-eps-early-stop} for thread $k$.
\label{cor:erj-bound2-gap}
\end{corollary}

With Corollaries \ref{cor:unknown-eps-early-stop}, \ref{cor:erj-bound1-gap} and \ref{cor:erj-bound2-gap} in place,
we are ready to state our gap-dependent analysis for Algorithm \ref{alg:adaptive-epsilon} with unknown corruption level $\varepsilon$.
\begin{theorem}
Suppose Algorithm \ref{alg:adaptive-epsilon} runs with an initial value of $J=\lfloor \log_2(\sqrt{N/T})\rfloor + 1$.
Suppose also that the gap parameter $\beta$ defined in Eq.~(\ref{eq:defn-gap}) is strictly positive. 
Then the cumulative $\tote$-regret of Algorithm \ref{alg:adaptive-epsilon}
can be upper bounded by 
$$
\left(\varepsilon T + N/\beta^2\right) \times \poly(K, \log(NT)),
$$
where in the regret upper bound we hide polynomial dependency on $K$ and $\log N,\log T$ terms.
\label{thm:adaptive-epsilon-gap}
\end{theorem}
\begin{remark}
An alternative upper bound of $(\varepsilon T/\beta + N/\beta)\times\poly(K,\log(NT))$ can also be proved,
which could be larger or smaller than the one presented in Theorem \ref{thm:adaptive-epsilon-gap} depending on the values of
$\varepsilon$ and $\beta$.
\end{remark}
 
Comparing Theorem \ref{thm:adaptive-epsilon-gap} with Theorem \ref{thm:known-eps-gap-dependent},
we notice an additional $1/\beta$ term in either the $\varepsilon T$ or the $N/\beta$ term in Theorem \ref{thm:known-eps-gap-dependent}.
Such a worsened dependency likely arises from the layered approach taken to address unknown $\varepsilon$ values,
which also delivered sub-optimal regret guarantees (compared to when $\varepsilon$ is known a priori)
in robust multi-armed bandit problems \citep{Lykouris:18,gupta2019better}.

\subsection{A lower bound on gap-dependent regret}\label{subsec:lower-bound-gap-dependent}

We complement our gap-depednent regret upper bound results in the previous sections
by stating a lower bound on gap-dependent regret
in dynamic assortment optimization with outlier customers.

\begin{theorem}
Let $K,\beta$ be constants independent of $T$, satisfying $\beta\leq\min\{1/16, 1/K\}$ and $K\leq 2$.
Suppose also that $\varepsilon,N$ can potentially change with $T$, and that $\beta\geq\sqrt{N/T}$, $K<N/4$.
Then for sufficiently large $T$, the worst-case $\bih$-regret of any admissible policy is lower bounded by 
$$
c_0'\times \left(\min\{\varepsilon T, \sqrt{\varepsilon NT}\} + \frac{N\log T}{K\beta}\right),
$$
where $c_0'>0$ is a universal constant independent of $N,T,K$ and $\beta$.
\label{thm:gap-dependent-lower-bound}
\end{theorem}
\begin{remark}
The lower bound result in Theorem \ref{thm:gap-dependent-lower-bound} assumes the algorithm
has full knowledge of the corruption level $\varepsilon$.
\end{remark}
\begin{remark}
	\label{rem:lower}
As Theorem \ref{thm:gap-dependent-lower-bound} only concerns the $\bih$-regret,
a similar lower bound for the $\tote$-regret can be established.
More specifically, the $\Omega(\varepsilon T)$ lower bound in Proposition \ref{prop:lowerbound} still applies,
because there is no additional constraints/assumptions imposed on outlier customers.
Furthermore, the $\frac{N\log T}{K\beta}$ lower bound in Theorem \ref{thm:gap-dependent-lower-bound} is obtained
by simply setting $\varepsilon=0$, which applies to the $\tote$-regret notion too.
Hence, a lower bound of $\Omega(\varepsilon T + \frac{N\log T}{K\beta})$ can be established for the $\tote$-regret in the gap-dependent setting.
\end{remark}

Comparing Theorem \ref{thm:gap-dependent-lower-bound} to Theorem \ref{thm:known-eps-gap-dependent}
(our regret upper bound with knowledge of $\varepsilon$), we notice that the $K^2 N\log^2 T/\beta$ term matches
the $N\log T/(K\beta)$ term in Theorem \ref{thm:gap-dependent-lower-bound} up to polynomial dependency on $K$ and $\log T$.
As discussed in the works of \cite{Agrawal17MNLBandit,Agrawal17Thompson}, in revenue management applications
the capacity constraint $K$ is usually very small and therefore treated as a constant.
On the other hand, there is a gap between the {$\varepsilon K^2 T \log T$} term in the upper bound and the 
$\min\{\varepsilon T,\sqrt{\varepsilon NT}\}$ term in the lower bound, {particularly when $\varepsilon$ is relatively large compared to $N/T$.} 
We are at the moment unsure which one is tight. However, in order for the lower bound to be tight, it requires fully-adversarial algorithms
for dynamic assortment optimization, which has already been an open question as discussed before.
Finally, the lower bound in Theorem \ref{thm:gap-dependent-lower-bound} assumes the knowledge of the corruption level $\varepsilon$.
The lower bound for cases when $\varepsilon$ is unknown is significantly more complicated and could involve 
whether the upper bounds are tight in $\log T$ terms and the distinction between regret and pseudo-regret notions \citep{Lykouris:18},
which is out of the scope of this paper.
}



\section{Numerical illustration}
\label{sec:numerical}
\begin{figure}[!t]
	\centering
	\includegraphics[width=0.45\textwidth]{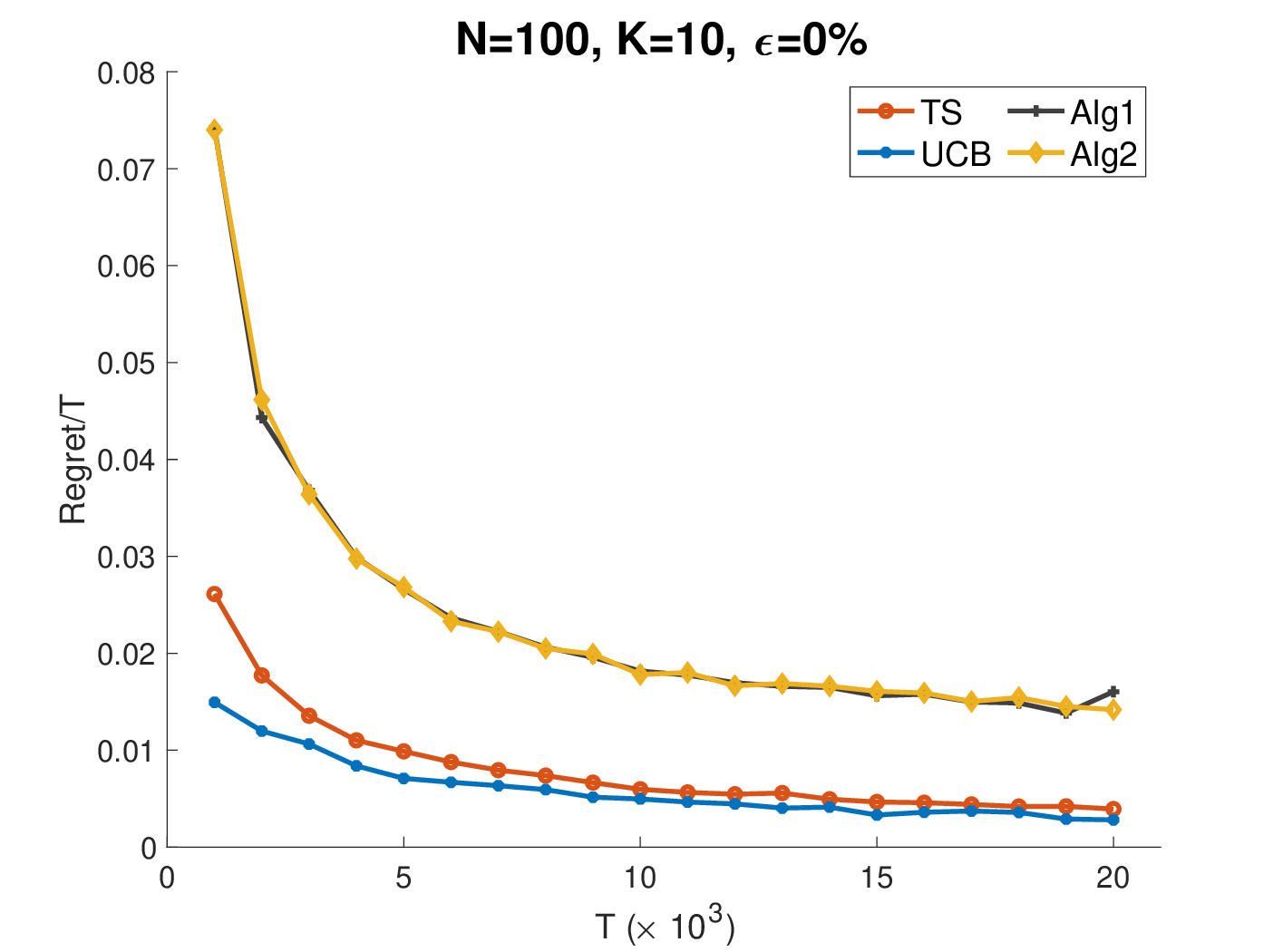}
	\includegraphics[width=0.45\textwidth]{result_N100K10veps0.eps}
	\includegraphics[width=0.45\textwidth]{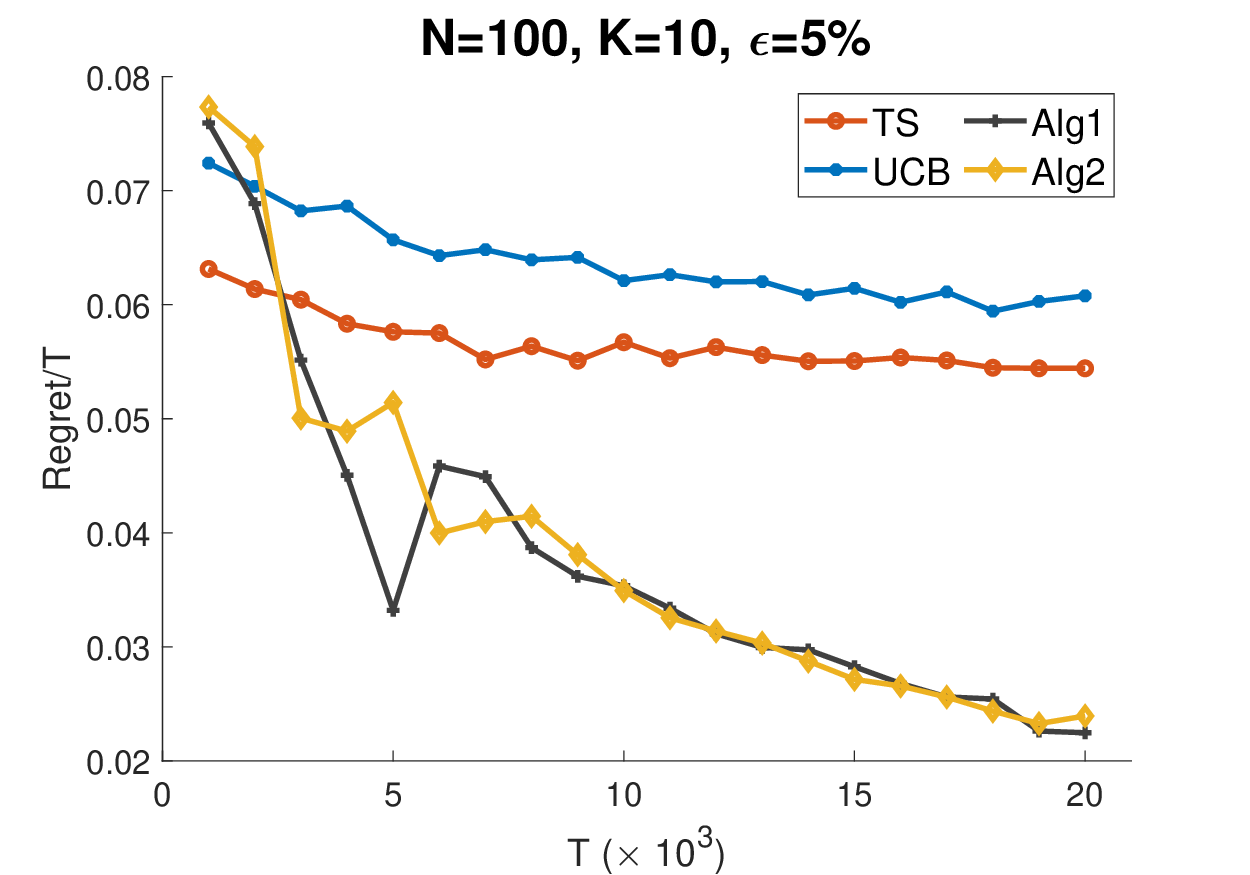}
	\includegraphics[width=0.45\textwidth]{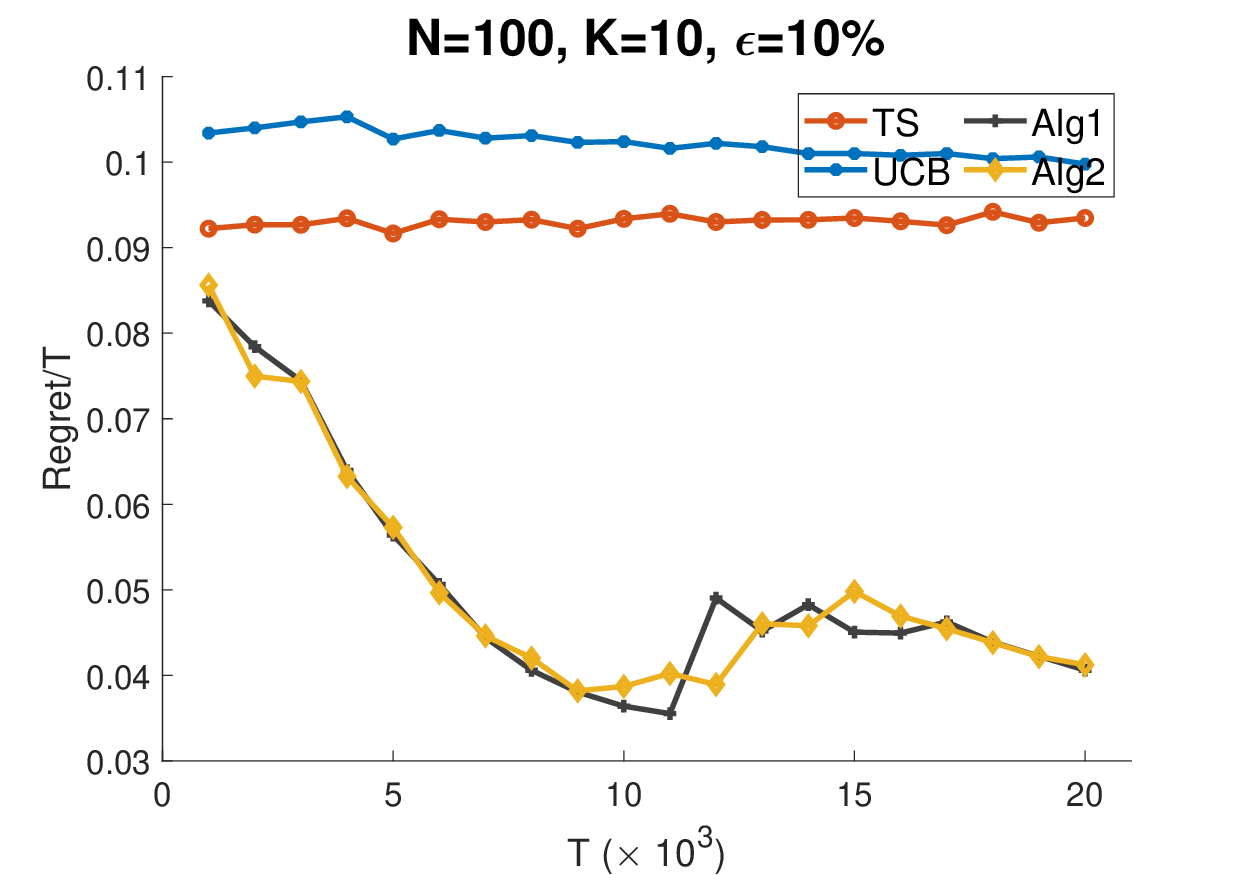}
	\includegraphics[width=0.45\textwidth]{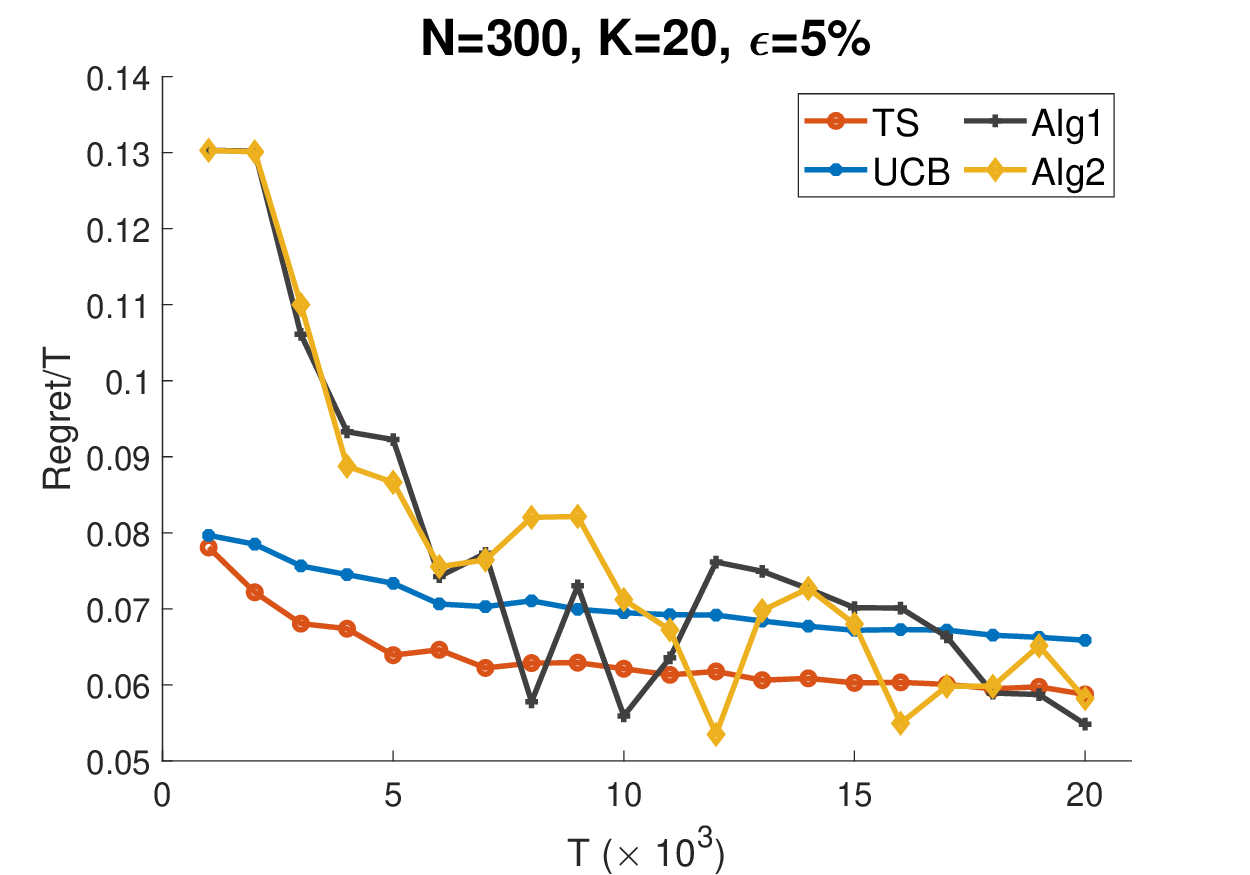}
	\includegraphics[width=0.45\textwidth]{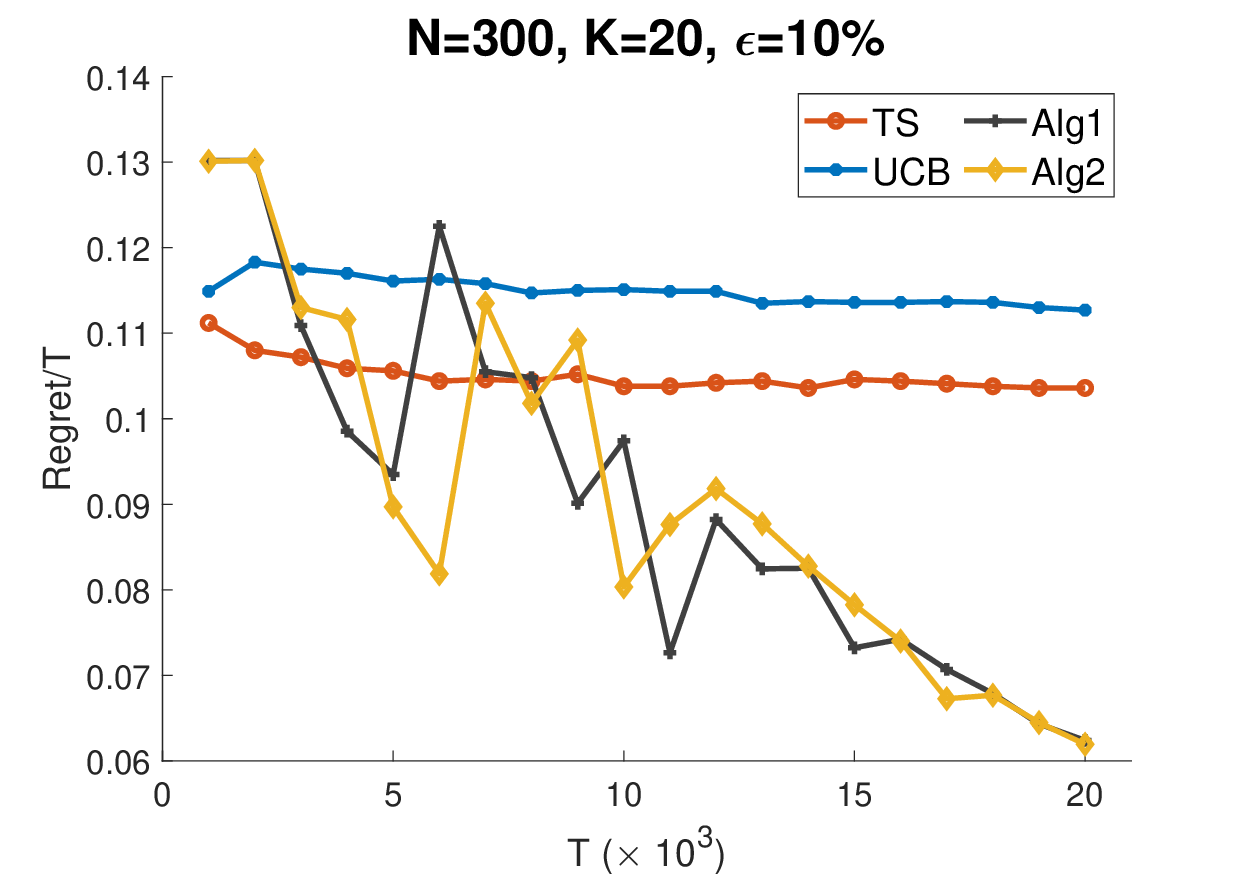}
	\caption{Comparison of average regret (i.e., regret divided by $T$) between our proposed algorithms and baselines.
		The time horizon $T$ ranges from 1,000 to 20,000. 
	}
	\label{fig:varyT}
\end{figure}

In this brief experimental section, we provide some numerical
illustrations that demonstrate the robustness of our proposed policy and the benefits over existing non-robust approaches for dynamic assortment optimization, including Thompson Sampling (TS) \citep{Agrawal17Thompson} and Upper Confidence Bounds (UCB) \citep{Agrawal17MNLBandit}. 
We construct the following data instance:
\begin{enumerate}
\item $K$ out of $N$ items have revenue parameters $r_i\equiv 1$ and preference parameters $v_i\equiv 0$;
\item For the other $(N-K)$ items, both their revenue and preference parameters $(r_i,v_i)$ are uniformly distribution on $[0.1,0.2]$;
\item For the first $\lfloor \varepsilon T\rfloor$ time periods, the arriving customers are outliers with choice models $Q_t\equiv Q$,
where $Q$ is an MNL-parameterized choice model with preference parameters set as $v_i'=1$ if $v_i=0$ and $v_i'=v_i$ otherwise.
\end{enumerate}

This instance reflects two important properties of outlier customers in practice, namely that they have significantly different preferences from typical customers, and that they arrive in consecutive time periods (e.g., during a holiday season).
In particular, the instance consists of $K$ items with very high revenue, but very low preference parameters
so that few customers will buy them. 
Under normal circumstances, a dynamic assortment optimization algorithm would identify the unpopularity of these $K$ items very quickly and stop recommending them.
However, as the outlier customers prefer these $K$ items over the others, these items appear popular and profitable in the early time periods, which may mislead the algorithm.
As these algorithms are highly unpopular in the latter time periods, a robust algorithm should not be severely impacted by these outlier customers. 





For the baseline methods, the TS method is tuning-free with a non-informative $\mathrm{Beta}(1,1)$ prior on each item.
For the UCB algorithm, we find the value in the multiplier ($C_1$) when constructing upper confidence bands that gives the best performance (in the original paper of \cite{Agrawal17MNLBandit}
$C_1=48$ for theoretical purposes).  
Each method is run for 100 independent trials and the mean average regret (i.e., the cumulative regret over $T$) is reported. 
The standard deviation of all the methods are sufficiently small and thus omitted for better visualization.



In Figure \ref{fig:varyT}, we report the results for all methods under various settings of $T,N,K$ and $\varepsilon$.
The experimental settings are chosen as $N\in\{100,300\}$, $K\in\{10,20\}$, $\varepsilon\in\{0,0.05,0.1\}$ and $T$ ranging from $T=1,000$ to $T=20,000$.
From Figure \ref{fig:varyT}, we can see that when $\varepsilon$ is strictly greater than 0,
our proposed algorithms will stabilize at a mean regret level (0.02 to 0.06) that is much lower than the non-robust TS and UCB methods.
More importantly, the average regret (i.e., cumulative regret divided by $T$) for our method decreases as a function of the time horizon, a phenomenon that does not happen for TS/UCB, especially when $\varepsilon$ is large. 
This confirms that these latter two methods are \emph{not} robust to outlier customers, and further confirms the effectiveness of our proposed algorithms for robust dynamic assortment optimization. For the  no contamination case of $\varepsilon=0$, while our proposed algorithms perform slightly worse than the baselines,   the decreasing rates of average regrets are the same.  		  	When there is no contamination, although the main term in our regret $\sqrt{NT}$  is still tight, there  might  be extra overhead in the regret bound through dependency on $K$ and $\log T$ factors.


\section{Conclusions and Future Work}
\label{sec:conclusion}
In this paper, we extend the $\varepsilon$-contamination model from statistics to the online decision-making setting and study the dynamic assortment optimization problem with outlier customers.
We propose a new active elimination policy that is robust to adversarial corruptions and establish a near-optimal regret bound. 
We further develop an adaptive policy that does not require any prior knowledge of the corruption proportion $\varepsilon$.


{The first interesting problem is to sharpen upper and lower regret bounds in the gap-dependent case. Moreover, it is technically interesting to further extend the paper to the fully adversarial MNL bandit setting.} 
Beyond this technical question, we hope that this work inspires future work on model mis-specification in revenue management, which we believe is a practically important research direction. 
We look forward to pursuing this direction in future work.

\bibliographystyle{informs2014}
\bibliography{refs}





\ECSwitch

\ECHead{Supplementary Material: Additional Proofs}

\section{Proofs of the lower bounds}

\subsection{Proof of Proposition \ref{prop:lowerbound}}

\renewcommand{\theproposition}{\ref{prop:lowerbound}}
\begin{proposition}[restated]
There exists a universal constant $c_0>0$ such that, for any policy $\pi$, its worst-case regret for problem instances with $T$ customers,
$N$ items, $K<N/4$ assortment capacity constraint and $\lfloor\varepsilon T\rfloor$ outlier customers ($0\leq\varepsilon<1$) is lower bounded by 
$c_0\times (\varepsilon T + \sqrt{NT})$.
\end{proposition}

\begin{proof}
It suffices to prove that the maximum regret of the worst-case problem of any $\pi$ is lower bounded by $\Omega(\max\{\varepsilon T,\sqrt{NT}\})$, because $\varepsilon T+\sqrt{NT}\leq 2\max\{\varepsilon T,\sqrt{NT}\}$.
An $\Omega(\sqrt{NT})$ lower bound has already been established in \citep{Chen:18tight} wit no outlier customers.
Hence to prove Proposition \ref{prop:lowerbound} we only need to establish an $\Omega(\varepsilon T)$ regret lower bound.

Consider two problem instances $P_1=\{r_i,v_i\}_{i=1}^n$ and $P_2=\{r_i,v_i'\}_{i=1}^n$ with shared revenue parameters $\{r_i\}_{i=1}^n$
and different preference parameters (for typical customers) $\{v_i\}_{i=1}^n$, $\{v_i'\}_{i=1}^n$, such that for any assortment $S\subseteq[N]$, $|S|\leq K$,
$\max\{R(S^*_1|P_1)-R(S|P_1), R(S^*_2|P_2)-R(S|P_2)\} = \Omega(1)$, where $S_1^*$ and $S_2^*$ are the optimal assortments under $P_1$ and $P_2$, respectively.
The existence and explicit construction of such problem instances can be found in \citep{Chen:18tight}.
Now consider the case in which all of the first $\lfloor\varepsilon T\rfloor$ customers are outliers, associated with the \emph{same} outlier choice model $Q$ under both $P_1$ and $P_2$.
Because the choice model of the outlier customers are the same, no algorithm can distinguish $P_1$ from $P_2$ during the first $\lfloor \varepsilon T\rfloor$ time periods with success probability larger than $1/2$.
Therefore, the worst-case regret (under $P_1$ and $P_2$) of any algorithm is at least $\Omega(\varepsilon T)$, which is to be demonstrated.
\end{proof}

\section{Proofs of technical lemmas for Theorem \ref{thm:upper-bound}}

\subsection{Proof of Lemma \ref{lem:ucb-lcb}}

\renewcommand{\thelemma}{\ref{lem:ucb-lcb}}
\begin{lemma}[restated]
Suppose $T_0\geq 128(K+1)^2N_\tau \ln T$ and $\min\{1,\varepsilon T/T_\tau\}\leq 1/4(K+2)$.
With probability $1-O(\tau_0 N/T^2)$ it holds for all $\tau$ satisfying $T_\tau\geq\max\{\overline\varepsilon,\varepsilon\} T/4(K+1)$ and $i\in\mathcal A^{(\tau+1)}$ that 
$|\hat v_i^{(\tau+1)}-v_i|\leq \Delta_\varepsilon^*(i,\tau+1)$, where
\begin{equation}
\Delta_\varepsilon^*(i,\tau+1) = 
8(K+1)\left(\frac{\varepsilon_\tau}{2} + \sqrt{\frac{\varepsilon_\tau N_\tau\ln T}{T_\tau}} + \frac{2N_\tau \ln T}{3T_\tau}\right) + 8\sqrt{\frac{(1+V_S)v_iN_\tau\ln T}{T_\tau}},
\end{equation}
where $\varepsilon_\tau$ is defined as $\varepsilon_\tau=\min\{1,\varepsilon T/T_\tau\}$, $N_\tau=|\mathcal A^{(\tau+1)}|$ and $V_S= \sum_{j\in S_\tau^{(i)}}v_j$.
\end{lemma}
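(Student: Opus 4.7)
The plan is to bound $|\hat v_i^{(\tau+1)}-v_i|$ by decomposing it into (i) stochastic fluctuations from typical customers, (ii) contributions and bias from the outlier rounds, and (iii) a ratio-perturbation step that passes from additive control of the numerator $n_i$ and denominator $n_0(i)$ to multiplicative control of the ratio $\hat v_i=n_i/n_0(i)$. Throughout I will condition on the adversary's sequence $\{(\phi_t,Q_t)\}$ in epoch $\tau$: since these are $\mathcal F_{t-1}$-measurable while the random target drawn at line~\ref{line:randomization} is independent of $\mathcal F_{t-1}$, the targeting randomness decouples cleanly from the adversarial schedule. Let $m_i$ denote the number of rounds in the epoch on which item $i$ was the sampled target, and write $m_i=m_i^T+m_i^O$ by customer type. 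Each per-round targeting indicator is a conditionally independent $\mathrm{Bernoulli}(1/N_\tau)$, so I first apply Freedman's inequality to get, with probability $1-O(N/T^3)$ and simultaneously over $i\in\mathcal A^{(\tau+1)}$, both $m_i\geq T_\tau/(2N_\tau)$ (using $T_0\geq 128(K+1)^2 N_\tau\ln T$) and $m_i^O/m_i\lesssim \varepsilon_\tau+\sqrt{\varepsilon_\tau N_\tau\ln T/T_\tau}+N_\tau\ln T/T_\tau$.

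Next, conditional on the targeting record and on $\phi_t=0$, each typical customer shown $S_\tau^{(i)}$ purchases $i$ with probability $p_i:=v_i/(1+V_S)$ and makes no purchase with probability $q:=1/(1+V_S)$, independently across rounds. Decomposing $n_i=n_i^T+n_i^O$ and $n_0(i)=n_0^T(i)+n_0^O(i)$, a martingale Bernstein bound yields, with probability $1-O(1/T^3)$ per item,
\[
|n_i^T-m_i^T p_i|\leq \sqrt{2 m_i^T p_i\ln T}+\tfrac{2}{3}\ln T,\qquad |n_0^T(i)-m_i^T q|\leq \sqrt{2 m_i^T q\ln T}+\tfrac{2}{3}\ln T,
\]
while the outlier contributions satisfy only the trivial bounds $0\leq n_i^O,\,n_0^O(i)\leq m_i^O$. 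I then invoke the identity
\[
\frac{n_i}{n_0(i)}-\frac{p_i}{q}=\frac{(n_i-m_i p_i)-v_i(n_0(i)-m_i q)}{n_0(i)}
\]
together with the hypothesis $\varepsilon_\tau\leq 1/(4(K+2))$, which ensures $n_0(i)\geq m_i q/2\geq T_\tau/(4(K+1)N_\tau)$, since the outlier contribution to $n_0(i)$ is absorbed by the slack in $m_i q$ (here I use $q\geq 1/(K+1)$). Substituting and dividing by $n_0(i)$, the outlier pieces get amplified by $1/q\leq K+1$ and produce exactly the first group $8(K+1)(\varepsilon_\tau/2+\sqrt{\varepsilon_\tau N_\tau\ln T/T_\tau}+2N_\tau\ln T/(3T_\tau))$ of $\Delta_\varepsilon^*(i,\tau+1)$. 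The typical-customer variance piece $\sqrt{m_i^T p_i\ln T}$ already carries a factor of $(1+V_S)^{-1/2}$ through $\sqrt{p_i}$, so dividing by $n_0(i)\sim m_i/(1+V_S)$ yields $\sqrt{(1+V_S)v_i N_\tau\ln T/T_\tau}$ rather than the cruder $(1+V_S)\sqrt{v_i N_\tau\ln T/T_\tau}$, matching the second term. A final union bound over $i\in[N]$ and $\tau\leq\tau_0=O(\log T)$ gives the stated overall failure probability $O(\tau_0 N/T^2)$.

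The main obstacle is the absence of an unbiased estimator for $v_i$. Unlike the geometric-trial epoch construction in \cite{Agrawal17MNLBandit}, in which a single no-purchase terminates each epoch and yields an unbiased moment estimate of $v_i$, the fixed-length epochs used here preclude such a martingale shortcut and expose both $n_i$ and $n_0(i)$ to adversarial bias. The technical delicacy therefore lies in (a) setting up the filtration so that the targeting randomness, the adversarial customer-type sequence, and the per-customer purchase randomness can each be concentrated at the correct scale, and (b) tracking the Bernstein variance carefully so that the typical-customer fluctuation term scales with $\sqrt{v_i}$ rather than a worst-case constant — essential because $v_i$ can be small yet still belong to a near-optimal assortment.
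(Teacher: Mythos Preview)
Your proposal is correct and follows essentially the same route as the paper's proof: both exploit that the random targeting at time $t$ is independent of the adversary's $\mathcal F_{t-1}$-measurable choice of $\phi_t$, apply a martingale Bernstein/Freedman bound to control the numerator and denominator, and then pass to the ratio via the identity $\hat v_i-v_i=(n_i-v_i n_0(i))/n_0(i)$ together with the hypothesis $\varepsilon_\tau\leq 1/4(K+2)$ to keep the denominator bounded below. The only cosmetic difference is that the paper applies Bernstein directly to $n_0(i)/T_\tau$ and $n_i/T_\tau$ as single martingales (absorbing targeting and purchase randomness together), whereas you first concentrate the targeting counts $m_i,m_i^O$ and then, conditional on them, concentrate the typical-customer purchases; both decompositions yield the same bound.
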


\begin{proof}
%
Denote $N_\tau := |\mathcal A^{(\tau+1)}|$ and let $\mathcal T_\tau$ be the $T_\tau$ consecutive time periods during which assortments $S_\tau^{(i)}$, $i\in\mathcal A^{(\tau+1)}$
are offered uniformly at random.
Let also $\mathcal T_\tau$ be the set of all time periods at epoch $\tau$.
For each $i\in\mathcal A^{(\tau+1)}$ and $t\in\mathcal T_\tau$, define indicator variable $I_{ti}(0)=1$ if $S_\tau^{(i)}$ is offered at time $t$ and the no-purchase action $i_t=0$ is taken from the incoming customer,
and $I_{ti}(0)=0$ otherwise.
Similarly, define $I_{ti}(i)=1$ if $S_\tau^{(i)}$ is offered at time $t$ and the purchase of item $i$, $i_t=i$, is observed from the incoming customer.
We then have, by definition, that, 
\begin{equation}
\frac{n_0(i)}{T_\tau} = \frac{1}{T_\tau}\sum_{t\in\mathcal T_{\tau}} I_{ti}(0), \;\;\;\;\;
\frac{n_i}{T_\tau} = \frac{1}{T_\tau}\sum_{t\in\mathcal T_{\tau}}I_{ti}(i).
\label{eq:empirical-indicator}
\end{equation}

Recall the definition that $V_S := \sum_{j\in S_\tau^{(i)}}v_j$.
Let $\mathcal T_{\tau}^*$ and $\mathcal T_{\tau}^o$ be the time periods corresponding to typical and outlier customers, respectively.
The expectation of $n_0(i)/T_\tau$ can subsequently be calculated as
\begin{align*}
\mathbb E\bigg[\frac{n_0(i)}{T_\tau}\bigg]
&= \frac{1}{T_\tau}\left(\sum_{t\in\mathcal T_{\tau}^*}\frac{1}{N_\tau(1+V_S)} + \sum_{t\in\mathcal T_{\tau}^o}\frac{\Pr[i_t=0|Q_t]}{N_\tau}\right)\\
&= \frac{1}{N_\tau(1+V_S)} + \frac{1}{N_\tau T_\tau}\sum_{t\in\mathcal T_{\tau}^o}\left(\Pr[i_t=0|Q_t]-\frac{1}{1+V_S}\right).
\end{align*}
Note that since $i$ at time $t$ is selected uniformly at random, the events ``$S_\tau^{(i)}$ is offered at time $t$'' and ``customer arriving at time $t$ is an outlier'' are independent, 
which is crucial in the above derivation.
Define $\alpha_0 := \frac{1}{|\mathcal T_{\tau}^o|}\sum_{t\in\mathcal T_\tau^o}(\Pr[i_t=0|Q_t]-1/(1+V_S))$. 
It is clear by definition that $|\alpha_0|\leq 1$.
Furthermore, because at most $\varepsilon T$ customers are outliers throughout the entire $T$ time periods, we know that $|\mathcal T_{\tau}^o|\leq \min\{T_\tau,\varepsilon T\}$
and hence $\tilde\varepsilon_\tau := |\mathcal T_{\tau}^o|/T_\tau \leq \min\{1, \varepsilon T/T_\tau\} = \varepsilon_\tau$.
Subsequently, we have
\begin{equation}
\mathbb E\bigg[\frac{n_0}{T_\tau}\bigg] = \frac{1}{N_\tau(1+V_S)} + \frac{\tilde\varepsilon_\tau\alpha_0}{N_\tau}.
\label{eq:mean-n0}
\end{equation}
Similarly, for $n_i$ we have
\begin{equation}
\mathbb E\bigg[\frac{n_i}{T_\tau}\bigg] = \frac{v_i}{N_\tau(1+V_S)} + \frac{\tilde\varepsilon_\tau\alpha_i}{N_\tau},
\end{equation}
where $\alpha_i = \frac{1}{|\mathcal T_{\tau}^o|}\sum_{t\in\mathcal T_\tau^o}(\Pr[i_t=i|Q_t]-v_i/(1+V_S))$ which also satisfies $|\alpha_i|\leq 1$.


{
It is easy to verify that the partial sums $\sum_{t\in\mathcal T_\tau,t<s} I_{ti}(j)-\mathbb E[I_{ti}(j)|\mathcal F_{s-1}]$ form \emph{martingales}
where the filtration is $\mathcal F_{s-1} = \{S_{s'},i_{s'}| s'<s\}$,
for both $j\in\{0,i\}$, because the decision of whether customer $t$ is an outlier is independent from the event $\{S_t=S_\tau^{(i)}\}$.
}
The variances of $n_i$ and $n_i$ (conditioned on the assortments offered in epoch $\tau$) can also be upper bounded as
$\mathbb V[n_0] \leq T_\tau\varepsilon_\tau /N_\tau+ T_\tau/N_\tau(1+V_S)$ and $\mathbb V[n_i] \leq T_\tau\varepsilon_\tau/N_\tau+ T_\tau v_i/N_\tau(1+V_S)$.
Subsequently, invoking Bernstein's inequality (Lemma \ref{lem:bernstein-tech}) 
{with $M=1$, $\delta=1/T^2$ and $V^2=\varepsilon_\tau/(N_\tau T_\tau)+1/(N_\tau T_\tau(1+V_S))$ for $n_0/T_\tau$,
$V^2 = \varepsilon_\tau/(N_\tau T_\tau) + v_i/(N_\tau T_\tau(1+V_S))$ for $n_i/T_\tau$,}
we have that
$$
\frac{n_0}{T_\tau} = \frac{\tilde\varepsilon_\tau\alpha_0}{N_\tau} + \frac{1}{N_\tau(1+V_S)} + \eta_0, \;\;\;\;
\frac{n_i}{T_\tau} = \frac{\tilde\varepsilon_\tau\alpha_i}{N_\tau} + \frac{v_i}{N_\tau(1+V_S)} + \eta_i, 
$$
where
\begin{align*}
&\Pr\left[|\eta_0|> \frac{4\ln T}{3T_\tau} + 2\sqrt{\frac{\varepsilon_\tau\ln T}{N_\tau T_\tau}} + 2\sqrt{\frac{\ln T}{(1+V_S)N_\tau T_\tau}}\right] \leq \frac{2}{T^2};\\
&\Pr\left[|\eta_i|>\frac{4\ln T}{3T_\tau} + 2\sqrt{\frac{\varepsilon_\tau\ln T}{N_\tau T_\tau}} + 2\sqrt{\frac{v_i\ln T}{(1+V_S)N_\tau T_\tau}}\right]\leq \frac{2}{T^2}.
\end{align*}

Let us now consider the case of $\hat v_i = n_i/n_0\leq 1$.
$\hat v_i$ then admits the form of 
\begin{align*}
\hat v_i &= \frac{v_i+(1+V_S)(\tilde\varepsilon_\tau\alpha_i+\eta_iN_\tau)}{1+(1+V_S)(\tilde\varepsilon_\tau\alpha_0+\eta_0N_\tau)}
= v_i +\frac{(1+V_S)(\tilde\varepsilon_\tau\alpha_i + \eta_iN_\tau - v_i(\tilde\varepsilon_\tau\alpha_0+\eta_0N_\tau))}{1+(1+V_S)(\tilde\varepsilon_\tau\alpha_0 + \eta_0N_\tau)}.
\end{align*}
Additionally, by Hoeffding's inequality, it also holds that
\begin{equation}
\Pr\left[\max\{|\eta_0|,|\eta_i|\} > \sqrt{\frac{8\ln T}{N_\tau T_\tau}}\right] \leq \frac{4}{T^2}.
\label{eq:hoeffding-eta}
\end{equation}

Because $V_S\leq K$, $\alpha_0,\alpha_1\in[0,1]$ and $|\eta_0|,|\eta_i|\leq \sqrt{8\ln T/N_\tau T_\tau}$ with probability $1-4/T^2$, 
we have that (with probability $1-O(T^{-2})$ and a union bound)
{
\begin{align*}
\big|\hat v_i-v_i\big| 
&= \frac{(1+V_S)|\tilde\varepsilon_\tau(\alpha_i-\alpha_0) + \eta_i N_\tau - \eta_0 v_i N_\tau|}{1+(1+V_S)(\tilde\varepsilon_\tau\alpha_0 + \eta_0 N_\tau)}\\
&\leq \frac{(1+V_S)(2\varepsilon_\tau + (1+v_i)(4N_\tau\ln T/(3T_\tau) + 2\sqrt{\varepsilon_\tau N_\tau\ln T/T_\tau})
+ 2(1+\sqrt{v_i})\sqrt{N_\tau \ln T/(1+V_S )T_\tau}
}{1+(1+V_S)(-\varepsilon_\tau - |\eta_0|N_\tau)}\\
&\leq \frac{(1+V_S)(2\varepsilon_\tau + 8N_\tau \ln T/(3T_\tau) + 4\sqrt{\varepsilon_\tau N_\tau\ln T/T_\tau} + 4\sqrt{v_iN_\tau \ln T/(1+V_S)T_\tau})}{1-(K+1)\varepsilon_\tau- (K+1)\sqrt{8N_\tau \ln T/T_\tau}}.
\end{align*}
}
Provided that $\varepsilon_\tau\leq 1/4(K+2)$ 
\begin{align}
\big|\hat v_i-v_i\big|  &\leq 4(K+1)\varepsilon_\tau + 8(K+1)\sqrt{\frac{\varepsilon_\tau N_\tau\ln T}{T_\tau}} + \frac{16(K+1)N_\tau \ln T}{3T_\tau} +  8\sqrt{\frac{(1+V_S)v_iN_\tau \ln T}{T_\tau}}\nonumber\\
&= \Delta_\varepsilon^*(i,\tau+1).\nonumber
\end{align}
In the case of $n_i/n_0>1$ the definition of $\hat v_i=1$ only decreases $|\hat v_i-v_i|$.
A union bound over all $i$ and $\tau$ completes the proof of Lemma \ref{lem:ucb-lcb}.

\end{proof}

\subsection{Proof of Lemma \ref{lem:revenue-error}}

\renewcommand{\thelemma}{\ref{lem:revenue-error}}
\begin{lemma}[restated]
For any $S\subseteq[N]$, $|S|\leq K$ and $\{\hat v_i\}$, it holds that
$$
|R(S;\hat v)-R(S;v)|\leq\frac{2\sum_{i\in S}|\hat v_i-v_i|}{1+\sum_{i\in S}v_i}.
$$
\end{lemma}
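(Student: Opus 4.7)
The proof will be entirely elementary algebra; the only subtlety is picking the right additive decomposition so that the denominator in the final bound involves the true $v_i$'s (as stated) rather than the $\hat v_i$'s.

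My plan is to introduce the shorthand
\[
A = \sum_{i\in S} r_i v_i, \quad \hat A = \sum_{i\in S} r_i \hat v_i, \quad B = 1 + \sum_{i\in S} v_i, \quad \hat B = 1 + \sum_{i\in S} \hat v_i,
\]
so that $R(S;v) = A/B$ and $R(S;\hat v) = \hat A/\hat B$. I would then write the telescoping identity
\[
\frac{\hat A}{\hat B} - \frac{A}{B} \;=\; \frac{\hat A - A}{B} \;+\; \hat A\Bigl(\frac{1}{\hat B} - \frac{1}{B}\Bigr) \;=\; \frac{\hat A - A}{B} \;-\; \frac{\hat A}{\hat B}\cdot\frac{\hat B - B}{B},
\]
which is the key step because it places the true quantity $B = 1+\sum_{i\in S} v_i$ in both denominators.

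Next I would bound each piece. Since $r_i\in[0,1]$, the triangle inequality gives $|\hat A - A| \le \sum_{i\in S} |\hat v_i - v_i|$, and trivially $|\hat B - B| = |\sum_{i\in S}(\hat v_i - v_i)| \le \sum_{i\in S}|\hat v_i - v_i|$. Using $\hat v_i \ge 0$ and $r_i \le 1$, the ratio $\hat A/\hat B = R(S;\hat v)$ lies in $[0,1]$. Combining these observations with the telescoping identity yields
\[
\Bigl|\frac{\hat A}{\hat B} - \frac{A}{B}\Bigr| \;\le\; \frac{|\hat A - A|}{B} + \frac{|\hat B - B|}{B} \;\le\; \frac{2\sum_{i\in S}|\hat v_i - v_i|}{1 + \sum_{i\in S} v_i},
\]
which is exactly the claimed bound.

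There is no real obstacle here; the only thing one must be careful about is choosing the decomposition that puts $B$ (not $\hat B$) in the denominator of both error terms, so that the bound is in terms of the unknown true parameters rather than the estimates. The alternative splitting $(\hat A - A)/\hat B + A(B-\hat B)/(B\hat B)$ would have given a bound with $\hat B$, which is less useful for the downstream analysis.
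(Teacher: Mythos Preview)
Your proof is correct and essentially identical to the paper's. The paper puts the two fractions over the common denominator $\hat B B$ and splits the numerator as $\hat A(B-\hat B)+\hat B(\hat A-A)$, which after dividing by $\hat B B$ yields exactly your two terms $\frac{\hat A}{\hat B}\cdot\frac{B-\hat B}{B}$ and $\frac{\hat A-A}{B}$; both arguments then finish by invoking $R(S;\hat v)=\hat A/\hat B\le 1$.
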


\begin{proof}
Expanding the definitions of $R(S;\hat v)$ and $R(S;v)$, we have
\begin{align*}
\big|R(S;\hat v)-R(S;v)\big|
&= \left|\frac{\sum_{i\in S}r_i\hat v_i}{1+\sum_{i\in S}\hat v_i} - \frac{\sum_{i\in S}r_iv_i}{1+\sum_{i\in S}v_i}\right|\\
&= \left|\frac{(\sum_{i\in S}r_i\hat v_i)(1+\sum_{i\in S}v_i) - (\sum_{i\in S}r_iv_i)(1+\sum_{i\in S}\hat v_i)}{(1+\sum_{i\in S}\hat v_i)(1+\sum_{i\in S}v_i)}\right|\\
&\leq \frac{(\sum_{i\in S}r_i\hat v_i)(\sum_{i\in S}|v_i-\hat v_i|) + (1+\sum_{i\in S}\hat v_i)(\sum_{i\in S}r_i|\hat v_i-v_i|)}{(1+\sum_{i\in S}\hat v_i)(1+\sum_{i\in S}v_i)}\\
&\leq \frac{2\sum_{i\in S}|\hat v_i-v_i|}{1+\sum_{i\in S}v_i}.
\end{align*}
\end{proof}

\subsection{Proof of Corollary \ref{cor:rdiff}}

\renewcommand{\thecorollary}{\ref{cor:rdiff}}
\begin{corollary}[restated]
For every $\tau$ and $|S|\leq K$, $S\subseteq \mathcal A^{(\tau)}$, conditioned on the success events on epochs up to $\tau$, it holds that
$|R(S;\hat v^{(\tau)})-R(S;v)|\leq \hat\Delta_{\varepsilon}(\tau)\leq\hat\Delta_{\max\{\varepsilon,\bar\varepsilon\}}(\tau)$, where $\hat\Delta$
is defined in Algorithm \ref{alg:active-elimination}. 
\end{corollary}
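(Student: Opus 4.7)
The plan is to chain the two preceding lemmas and then verify that the resulting data-independent expression agrees with the quantity $\hat\Delta_{\overline\varepsilon}(\tau)$ appearing in Algorithm~\ref{alg:active-elimination}. Concretely, conditioning on the success events of all epochs up to $\tau$ (so that Lemma~\ref{lem:ucb-lcb} is applicable to epoch $\tau-1$, producing the estimates $\hat v^{(\tau)}$), Lemma~\ref{lem:revenue-error} applied to the assortment $S\subseteq\mathcal A^{(\tau)}$ gives
\[
|R(S;\hat v^{(\tau)})-R(S;v)|\ \le\ \frac{2\sum_{i\in S}|\hat v_i^{(\tau)}-v_i|}{1+\sum_{i\in S}v_i}\ \le\ \frac{2\sum_{i\in S}\Delta_\varepsilon^*(i,\tau)}{1+\sum_{i\in S}v_i}.
\]
So the task reduces to showing that the right-hand side is at most $\hat\Delta_\varepsilon(\tau)$, and then using monotonicity in the contamination parameter to replace $\varepsilon$ by $\max\{\varepsilon,\overline\varepsilon\}$.

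To handle the right-hand side, first split $\Delta_\varepsilon^*(i,\tau)$ into the $i$-independent piece $A_\tau=8(K+1)\bigl(\varepsilon_{\tau-1}/2+\sqrt{\varepsilon_{\tau-1}N_{\tau-1}\ln T/T_{\tau-1}}+2N_{\tau-1}\ln T/(3T_{\tau-1})\bigr)$ and the $i$-dependent piece $B_i=8\sqrt{(1+V_{S_i})v_iN_{\tau-1}\ln T/T_{\tau-1}}$, where $V_{S_i}=\sum_{j\in S_{\tau-1}^{(i)}}v_j$. For the $A_\tau$ contribution, summing over $S$ produces at most $|S|A_\tau\le KA_\tau$, and dividing by $(1+\sum_{i\in S}v_i)\ge 1$ and multiplying by $2$ yields a quantity bounded by $2KA_\tau$, which matches the first summand in the definition of $\hat\Delta_\varepsilon(\tau)$. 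For the $B_i$ contribution, I will use $V_{S_i}\le K$ (since each offered assortment has size at most $K$ with $v_j\in[0,1]$) and Cauchy--Schwarz, writing $\sum_{i\in S}\sqrt{v_i}\le\sqrt{K\sum_{i\in S}v_i}$ and then applying the elementary inequality $\sqrt{W}/(1+W)\le 1/2$ with $W=\sum_{i\in S}v_i$ to collapse the denominator. This yields a purely data-independent bound of the form $c\sqrt{KN_{\tau-1}\ln T/T_{\tau-1}}$ that absorbs into the last summand of $\hat\Delta_\varepsilon(\tau)$.

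Once the sample-path bound $|R(S;\hat v^{(\tau)})-R(S;v)|\le\hat\Delta_\varepsilon(\tau)$ is established, the second inequality $\hat\Delta_\varepsilon(\tau)\le\hat\Delta_{\max\{\varepsilon,\overline\varepsilon\}}(\tau)$ is immediate: inspecting the closed-form definition in Algorithm~\ref{alg:active-elimination}, each appearance of the contamination proportion enters only through $\varepsilon_{\tau-1}=\min\{1,\varepsilon T/T_{\tau-1}\}$ in the monotonically increasing combination $\varepsilon_{\tau-1}/2+\sqrt{\varepsilon_{\tau-1}N_{\tau-1}\ln T/T_{\tau-1}}$, so replacing $\varepsilon$ by any larger value (in particular by $\max\{\varepsilon,\overline\varepsilon\}$) only enlarges the bound.

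The main obstacle is the bookkeeping in the Cauchy--Schwarz step: the factor $(1+V_{S_i})$ inside the square root depends on the exploratory assortment $S_{\tau-1}^{(i)}$ chosen in the previous epoch, not on the assortment $S$ being evaluated, so one must carefully separate the $i$-dependent and $i$-independent parts of $\Delta_\varepsilon^*(i,\tau)$ and verify that the resulting universal constants dominate the ones declared in Algorithm~\ref{alg:active-elimination}. Everything else reduces to direct substitution into the algebraic definitions.
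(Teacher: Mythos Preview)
Your approach is essentially the same as the paper's: chain Lemmas~\ref{lem:ucb-lcb} and~\ref{lem:revenue-error}, split $\Delta_\varepsilon^*(i,\tau)$ into its $i$-independent and $i$-dependent pieces, apply Cauchy--Schwarz to $\sum_{i\in S}\sqrt{v_i}$, and conclude by monotonicity of $\hat\Delta$ in the contamination parameter. The paper additionally disposes of the trivial regime $T_{\tau-1}<\overline\varepsilon T/4(K+1)$ (where $\hat\Delta=1$) in one line, which you omit but which is harmless.

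There is one substantive difference in the bookkeeping, and you in fact identify it yourself. The paper silently treats the $V_S$ appearing inside $\Delta_\varepsilon^*(i,\tau)$ (which by Lemma~\ref{lem:ucb-lcb} equals $\sum_{j\in S_{\tau-1}^{(i)}}v_j$) as the \emph{same} symbol as the denominator quantity $\sum_{i\in S}v_i$ coming from Lemma~\ref{lem:revenue-error}; this lets the ratio $\sqrt{(1+V_S)V_S}/(1+V_S)=\sqrt{V_S/(1+V_S)}\le 1$ collapse cleanly and produces exactly the constant $16$ in the last summand of $\hat\Delta$. You correctly flag that these two quantities are not the same object, and your remedy---bounding $1+V_{S_i}\le K+1$ uniformly, then using $\sqrt W/(1+W)\le 1/2$ with $W=\sum_{i\in S}v_i$---is the more honest route. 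The price is an extra factor of $\sqrt{K+1}/2$: your $B_i$-contribution comes out to $8\sqrt{K(K+1)N_{\tau-1}\ln T/T_{\tau-1}}$ rather than $16\sqrt{KN_{\tau-1}\ln T/T_{\tau-1}}$, which overshoots the declared constant in $\hat\Delta$ once $K\ge 4$. This is a cosmetic constant mismatch (and arguably exposes a sloppiness in the paper's own derivation), not a flaw in your strategy.
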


\begin{proof}
Note that it suffices to prove the first inequality, because $\hat\Delta$ is a monotonically increasing function in $\varepsilon$.
Also, we only need to consider the case of $\varepsilon_{\tau-1}\leq 1/4(K+1)$, as $\hat\Delta_\varepsilon(\tau)=1$ otherwise which trivially upper bounds $|R(S;\hat v^{(\tau)})-R(S;v)|$. 
Invoking Lemma \ref{lem:revenue-error} and the upper bound $|\hat v_i^{(\tau)}-v_i|\leq \Delta_\varepsilon^*(i,\tau)$ in Lemma \ref{lem:ucb-lcb}, we have
(recall the definition that $V_S=\sum_{i\in S}v_i$)
\begin{align*}
\big|&R(S;\hat v^{(\tau)})-R(S;v)\big|\leq \frac{2\sum_{i\in S}\Delta_\varepsilon^*(i,\tau)}{1+V_S}\\
&\leq 2K\times 8(K+1)\left(\frac{\varepsilon_{\tau-1}}{2} + \sqrt{\frac{\varepsilon_{\tau-1}N_{\tau-1}\ln T}{T_{\tau-1}}} + \frac{2N_{\tau-1}\ln T}{3T_{\tau-1}}\right)\\
&\;\;\;\; + 16\sqrt{\frac{(1+V_S)N_{\tau-1}\ln T}{T_{\tau-1}}}\times \frac{\sum_{i\in S}\sqrt{v_i}}{1+V_S}\\
&\leq 16K(K+1)\left(\frac{\varepsilon_{\tau-1}}{2} + \sqrt{\frac{\varepsilon_{\tau-1}N_{\tau-1}\ln T}{T_{\tau-1}}} + \frac{2N_{\tau-1}\ln T}{3T_{\tau-1}}\right)  + 16\sqrt{\frac{(1+V_S)N_{\tau-1}\ln T}{T_{\tau-1}}}\frac{\sqrt{KV_S}}{1+V_S}\\
&\leq 16K(K+1)\left(\frac{\varepsilon_{\tau-1}}{2} + \sqrt{\frac{\varepsilon_{\tau-1}N_{\tau-1}\ln T}{T_{\tau-1}}} + \frac{2N_{\tau-1}\ln T}{3T_{\tau-1}}\right) + 16\sqrt{\frac{KN_{\tau-1}\ln T}{T_{\tau-1}}} \\
&= \hat\Delta_{\varepsilon}(\tau).
\end{align*}
Here in the third inequality we use Cauchy-Schwarz inequality on $\sum_{i\in S}\sqrt{v_i}$;
more specifically, $\sum_{i\in S}\sqrt{v_i} = \sum_{i\in S}\sqrt{v_i}\times 1 \leq \sqrt{\sum_{i\in S}1}\times \sqrt{\sum_{i\in S}v_i} \leq \sqrt{K}\times \sqrt{V_S}$,
as $|S|\leq K$ and $V_S=\sum_{i\in S}v_i$, so that $\sqrt{V_S}\leq\max\{1,V_S\}$.
\end{proof}

\subsection{Proof of Lemma \ref{lem:feasible}}

\renewcommand{\thelemma}{\ref{lem:feasible}}
\begin{lemma}[restated]
If $\overline\varepsilon\geq\varepsilon$ then with probability $1-O(\tau_0 N/T^2)$ it holds that $S^*\subseteq\mathcal A^{(\tau)}$ for all $\tau$.
\end{lemma}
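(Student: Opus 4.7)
The plan is to proceed by induction on the epoch index $\tau$, using Corollary \ref{cor:rdiff} as the key estimation-error bound and the elimination rule in step \ref{line:active-elimination} of Algorithm \ref{alg:active-elimination} as the combinatorial condition to verify. First, I would condition on the good event $\mathcal E$ that the conclusion of Corollary \ref{cor:rdiff} holds simultaneously for all epochs $\tau\leq\tau_0$ and all feasible assortments considered in those epochs. Since Corollary \ref{cor:rdiff} rests on Lemma \ref{lem:ucb-lcb}, whose failure probability per epoch is $O(N/T^2)$, a union bound over $\tau_0$ epochs yields $\Pr[\mathcal E]\geq 1-O(\tau_0 N/T^2)$, giving exactly the advertised probability.

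The base case $\tau=0$ is immediate since $\mathcal A^{(0)}=[N]\supseteq S^*$. For the inductive step, assume $S^*\subseteq\mathcal A^{(\tau)}$; I would show that each $i\in S^*$ survives the elimination rule, i.e., $R(S_\tau^{(i)};\hat v^{(\tau)})+2\hat\Delta_{\overline\varepsilon}(\tau)\geq\gamma^{(\tau)}$. The argument has two pieces. On one hand, because $S^*\subseteq\mathcal A^{(\tau)}$ and $i\in S^*$, the assortment $S^*$ is feasible in the optimization defining $S_\tau^{(i)}$, so $R(S_\tau^{(i)};\hat v^{(\tau)})\geq R(S^*;\hat v^{(\tau)})\geq R(S^*;v)-\hat\Delta_{\overline\varepsilon}(\tau)$ by Corollary \ref{cor:rdiff} (and the hypothesis $\overline\varepsilon\geq\varepsilon$, so $\hat\Delta_{\max\{\varepsilon,\overline\varepsilon\}}=\hat\Delta_{\overline\varepsilon}$). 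On the other hand, letting $S_\tau^\gamma$ attain $\gamma^{(\tau)}$, Corollary \ref{cor:rdiff} gives $\gamma^{(\tau)}=R(S_\tau^\gamma;\hat v^{(\tau)})\leq R(S_\tau^\gamma;v)+\hat\Delta_{\overline\varepsilon}(\tau)\leq R(S^*;v)+\hat\Delta_{\overline\varepsilon}(\tau)$, where the last inequality uses the global optimality of $S^*$. Combining the two bounds yields
\begin{equation*}
R(S_\tau^{(i)};\hat v^{(\tau)})+2\hat\Delta_{\overline\varepsilon}(\tau) \;\geq\; R(S^*;v)+\hat\Delta_{\overline\varepsilon}(\tau) \;\geq\; \gamma^{(\tau)},
\end{equation*}
which is precisely the condition for $i$ to be retained in $\mathcal A^{(\tau+1)}$. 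Hence $S^*\subseteq\mathcal A^{(\tau+1)}$, closing the induction.

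The main technical subtlety, rather than a true obstacle, is being careful that Corollary \ref{cor:rdiff} is applicable both to $S^*$ and to $S_\tau^\gamma$ at epoch $\tau$. This needs the inductive hypothesis $S^*\subseteq\mathcal A^{(\tau)}$ (so $|S^*|\leq K$ and $S^*\subseteq\mathcal A^{(\tau)}$ both hold), and the fact that $S_\tau^\gamma\subseteq\mathcal A^{(\tau)}$ by construction; under the good event $\mathcal E$ the conclusion of Corollary \ref{cor:rdiff} is uniform over such $S$, so the argument goes through without extra conditioning. One should also verify the edge case where $\hat\Delta_{\overline\varepsilon}(\tau)=1$ (i.e., very early epochs): in this regime the elimination rule is vacuous since every $R(\cdot;\hat v^{(\tau)})\in[0,1]$, so no item can be removed, and the induction is trivially preserved. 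These checks are routine, and the main content of the proof is the two-sided estimate comparing $R(S_\tau^{(i)};\hat v^{(\tau)})$ to $\gamma^{(\tau)}$ via $R(S^*;v)$.
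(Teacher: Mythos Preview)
Your proposal is correct and follows essentially the same approach as the paper's own proof: induction on $\tau$, with the two-sided estimate $R(S_\tau^{(i)};\hat v^{(\tau)})\geq R(S^*;v)-\hat\Delta_{\overline\varepsilon}(\tau)$ (via feasibility of $S^*$ in the constrained optimization) and $\gamma^{(\tau)}\leq R(S^*;v)+\hat\Delta_{\overline\varepsilon}(\tau)$ (via Corollary~\ref{cor:rdiff} and optimality of $S^*$), combined to verify the retention condition. The paper phrases the early-epoch edge case slightly differently---it notes that $\mathcal A^{(0)}=\cdots=\mathcal A^{(\tau^*)}=[N]$ for $\tau^*$ the first epoch with $T_{\tau^*}\geq\overline\varepsilon T/4(K+1)$ and starts the induction there---but your treatment of the $\hat\Delta_{\overline\varepsilon}(\tau)=1$ case is equivalent.
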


\begin{proof}
We use induction to prove this lemma.
Let $\tau^*$ be the smallest integer such that $T_{\tau^*}\geq \overline\varepsilon T/4(K+1)$.
Because $\mathcal A^{(0)}=\cdots=\mathcal A^{(\tau^*)}=[N]$, the lemma clearly holds for $\tau^*$.
Next, conditioned on $S^*\subseteq\mathcal A^{(\tau)}$, we will prove that $S^*\subseteq\mathcal A^{(\tau+1)}$ with probability $1-O(T^{-2})$.

Let $\hat S$ be the solution of step (*) in Algorithm \ref{alg:active-elimination}.
{ 
If $\tau < \tau^*$, then $T_{\tau}<\overline\varepsilon T/4(K+1)$ and hence $\hat\Delta_{\overline\varepsilon}(\tau)=1$.
This means that $\mathcal A^{(\tau)}=[N]$.
}
If $\tau=\tau^*$, then $R(\hat S;v)\leq R(S^*;v)+\hat\Delta_{\overline\varepsilon}(\tau)$ because $\hat\Delta_{\overline\varepsilon}(\tau)=1$.
If $\tau>\tau^*$, we have
\begin{equation*}
\gamma^{(\tau)} = R(\hat S;\hat v^{(\tau)}) \leq R(\hat S;v) +\hat \Delta_{\varepsilon}(\tau)\leq R(S^*;v) + \hat\Delta_{\varepsilon}(\tau)
\leq R(S^*;v) + \hat\Delta_{\overline\varepsilon}(\tau),
\end{equation*}
where the first inequality holds by Corollary \ref{cor:rdiff}, and the last inequality holds by monotonicity of $\hat{\Delta}_{\varepsilon}(\tau)$.
In both cases, it holds that
\begin{equation}
\gamma^{(\tau)} \leq R(S^*;v)+\hat\Delta_{\overline\varepsilon}(\tau).
\label{eq:feasible-1}
\end{equation}

For any $i\in S^*$, let $S_\tau^{(i)}$ be the solution of ($\dagger$) of Algorithm \ref{alg:active-elimination}.
Because $S^*\subseteq\mathcal A^{(\tau)}$ by the induction hypothesis, we know that $S^*$ is a feasible solution to the optimization question of $(\dagger)$ and therefore
\begin{equation}
R(S_\tau^{(i)};\hat v^{(\tau)}) \geq R(S^*;\hat v^{(\tau)}) \geq R(S^*;v) - \hat\Delta_{\overline\varepsilon}(\tau).
\label{eq:feasible-2}
\end{equation}
Combining Eqs.~(\ref{eq:feasible-1},\ref{eq:feasible-2})
we have (with high probability) that $R(S_\tau^{(i)};\hat v)\geq \gamma^{(\tau)} - 2\hat\Delta_{\overline\varepsilon}(\tau)$,
 and hence $i\in\mathcal A^{(\tau+1)}$.
Repeat the argument for all $i\in S^*$ we have proved that $S^*\subseteq\mathcal A^{(\tau+1)}$ with high probability.
\end{proof}

\subsection{Proof of Lemma \ref{lem:regret-single-iter}}

\renewcommand{\thelemma}{\ref{lem:regret-single-iter}}
\begin{lemma}[restated]
Suppose $S^*\subseteq\mathcal A^{(\tau)}$ holds for all $\tau$. Then with probability $1-O(\tau_0 N/T^2)$, for every $\tau\leq \tau_0$ and $i\in\mathcal A^{(\tau+1)}$, it holds that
$R(S^*;v)-R(S_\tau^{(i)};v) \leq 4\hat\Delta_{\overline\varepsilon}(\tau)$.
\end{lemma}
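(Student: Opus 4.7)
The plan is to chain together the elimination rule defining $\mathcal A^{(\tau+1)}$ with the revenue concentration bound from Corollary~\ref{cor:rdiff}, using the hypothesis $S^* \subseteq \mathcal A^{(\tau)}$ to anchor the optimistic estimate $\gamma^{(\tau)}$ to the true optimum $R(S^*;v)$. The whole argument is four inequalities strung together, with the $4\hat\Delta_{\bar\varepsilon}(\tau)$ factor arising from four separate applications: one for the upper bound on $\gamma^{(\tau)}$, one invoking the elimination rule (worth $2\hat\Delta_{\bar\varepsilon}(\tau)$), and one converting $R(S_\tau^{(i)};\hat v^{(\tau)})$ back to $R(S_\tau^{(i)};v)$.

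First, condition on the high-probability event of Lemma~\ref{lem:ucb-lcb} (and hence Corollary~\ref{cor:rdiff}) holding for all epochs $\tau \leq \tau_0$ and all items $i\in\mathcal A^{(\tau+1)}$; a union bound over at most $\tau_0 N$ pairs yields the stated $1 - O(\tau_0 N/T^2)$ probability. On this event, Corollary~\ref{cor:rdiff} gives $|R(S;\hat v^{(\tau)}) - R(S;v)| \leq \hat\Delta_{\bar\varepsilon}(\tau)$ for every $S\subseteq\mathcal A^{(\tau)}$ with $|S|\leq K$, since $\bar\varepsilon\geq\varepsilon$ and $\hat\Delta$ is monotone in $\varepsilon$.

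Next, since $S^*\subseteq\mathcal A^{(\tau)}$ by hypothesis and $|S^*|\leq K$, the assortment $S^*$ is feasible in the maximization defining $\gamma^{(\tau)}$, so
\[
\gamma^{(\tau)} \;\geq\; R(S^*;\hat v^{(\tau)}) \;\geq\; R(S^*;v) - \hat\Delta_{\bar\varepsilon}(\tau),
\]
where the second inequality is Corollary~\ref{cor:rdiff}. Now fix any $i\in\mathcal A^{(\tau+1)}$. The elimination rule in Line~\ref{line:active-elimination} of Algorithm~\ref{alg:active-elimination} guarantees $R(S_\tau^{(i)};\hat v^{(\tau)}) + 2\hat\Delta_{\bar\varepsilon}(\tau) \geq \gamma^{(\tau)}$, so combining with the previous display,
\[
R(S_\tau^{(i)};\hat v^{(\tau)}) \;\geq\; R(S^*;v) - 3\hat\Delta_{\bar\varepsilon}(\tau).
\]
Finally, applying Corollary~\ref{cor:rdiff} once more to the assortment $S_\tau^{(i)}\subseteq\mathcal A^{(\tau)}$ converts the estimated revenue back to the true revenue at an additional cost of $\hat\Delta_{\bar\varepsilon}(\tau)$, yielding $R(S_\tau^{(i)};v) \geq R(S^*;v) - 4\hat\Delta_{\bar\varepsilon}(\tau)$, which is exactly the claim.

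There is no real obstacle here: the lemma is designed so that the $2\hat\Delta_{\bar\varepsilon}(\tau)$ slack in the elimination rule plus two $\hat\Delta_{\bar\varepsilon}(\tau)$ terms from Corollary~\ref{cor:rdiff} sum to the stated $4\hat\Delta_{\bar\varepsilon}(\tau)$. The only care required is (i) verifying that $S^*$ and $S_\tau^{(i)}$ are both contained in $\mathcal A^{(\tau)}$ so that Corollary~\ref{cor:rdiff} applies (the former by hypothesis, the latter by definition of $S_\tau^{(i)}$ in Line~\ref{line:S_tau}), and (ii) correctly accounting for the union bound so that the failure probabilities across all $\tau \leq \tau_0$ and all $i\in\mathcal A^{(\tau+1)}$ accumulate to at most $O(\tau_0 N/T^2)$.
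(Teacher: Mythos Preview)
Your proof is correct and follows essentially the same approach as the paper: both chain the elimination rule $R(S_\tau^{(i)};\hat v^{(\tau)})\geq\gamma^{(\tau)}-2\hat\Delta_{\bar\varepsilon}(\tau)$ together with the feasibility-based lower bound $\gamma^{(\tau)}\geq R(S^*;\hat v^{(\tau)})\geq R(S^*;v)-\hat\Delta_{\bar\varepsilon}(\tau)$ and one final application of Corollary~\ref{cor:rdiff} to pass from $\hat v^{(\tau)}$ back to $v$. The only difference is that you spell out the union bound and the membership checks $S^*,S_\tau^{(i)}\subseteq\mathcal A^{(\tau)}$ more explicitly than the paper does.
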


\begin{proof}
Because $i\in\mathcal A^{(\tau+1)}$, we know that
$R(S_\tau^{(i)};\hat v^{(\tau)}) \geq \gamma^{(\tau)} - 2\hat\Delta_{\overline\varepsilon}(\tau)$.
Additionally, because $S^*\subseteq\mathcal A^{(\tau)}$, we have that
$
\gamma^{(\tau)} = R(\hat S_\tau;\hat v^{(\tau)})\geq R(S^*;\hat v^{(\tau)}) \geq R(S^*;v)-\hat\Delta_{\overline\varepsilon}(\tau),
$
with probability $1-O(\tau_0 N/T^2)$ by invoking Corollary \ref{cor:rdiff}.
Subsequently,
$$
R(S_\tau^{(i)}; v) \geq R(S_\tau^{(i)};\hat v^{(\tau)}) - \hat\Delta_{\overline\varepsilon}(\tau) \geq \gamma^{(\tau)}-3\hat\Delta_{\overline\varepsilon}(\tau)  \geq R(S^*;v)-4\hat\Delta_{\overline\varepsilon}(\tau),
$$
which is to be demonstrated.
\end{proof}

\section{Proofs of technical lemmas of Theorem \ref{thm:adaptive-epsilon}}

First, note that if $\varepsilon\lesssim\sqrt{N/T}$, the $\varepsilon T$ term in Theorem \ref{thm:adaptive-epsilon} will be dominated by the $\sqrt{NT}$ term
and is therefore not important.
Hence, throughout the rest of this section we shall assume without loss of generality that $\varepsilon\geq \sqrt{N/T}$,
which also means that $\hat\varepsilon_J\leq\varepsilon$ in the beginning.

\subsection{Proof of Lemma \ref{lem:feasible-adaptive}}

\renewcommand{\thelemma}{\ref{lem:feasible-adaptive}}
\begin{lemma}[restated]
With probability $1-O(\tau_0 NJ/T^2)$ it holds for all $\tau$ and $\hat\varepsilon_j\geq\varepsilon$ that $S^*\subseteq\mathcal A_j^{(\tau)}$.
\end{lemma}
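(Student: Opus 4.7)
The plan is to mirror the inductive argument of Lemma \ref{lem:feasible}, but to carry out the induction over epochs \emph{and} threads simultaneously, moving from the most conservative thread (smallest $j$) toward the threshold where $\hat\varepsilon_j$ first drops below $\varepsilon$. Write $\mathcal J = \{j < J : \hat\varepsilon_j \geq \varepsilon\}$. The base case $\tau = 0$ is immediate since $\mathcal A_j^{(0)} = [N]$ for every $j$. For the inductive step, I assume $S^* \subseteq \mathcal A_{j'}^{(\tau)}$ for all $j' \in \mathcal J$ at the start of epoch $\tau$, and will show the same holds at epoch $\tau+1$.

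The first ingredient is an analog of Corollary \ref{cor:rdiff} for each thread. In Algorithm \ref{alg:adaptive-epsilon}, the estimates $\hat v^{(\tau),j}$ are built only from time periods during which thread $j$ was selected, and the confidence radius $\hat\Delta_{\hat\varepsilon_j}(\tau+1)$ is defined with $T, T_\tau$ replaced by $T_j = \wp_j T$ and $T_{\tau,j} = \wp_j T_\tau$. Because thread selection is independent of both the customer type indicator $\phi_t$ and the uniform random choice of $i \in \mathcal A_j^{(\tau+1)}$, conditioning on thread $j$ being selected preserves the martingale structure used in Lemma \ref{lem:ucb-lcb}. A Chernoff bound first shows that with probability $1 - O(1/T^2)$ the number of time periods spent on thread $j$ during epoch $\tau$ is $\Theta(\wp_j T_\tau)$; conditional on this, the proof of Lemma \ref{lem:ucb-lcb} goes through with $(T, T_\tau)$ replaced by $(T_j, T_{\tau,j})$, and the resulting $\varepsilon_\tau$ becomes $\min\{1, \varepsilon T_j / T_{\tau,j}\} = \min\{1, \varepsilon T / T_\tau\}$. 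Whenever $\hat\varepsilon_j \geq \varepsilon$ this is bounded by the $\hat\varepsilon_{j,\tau}$ used to define $\hat\Delta_{\hat\varepsilon_j}(\tau+1)$, so Corollary \ref{cor:rdiff} yields $|R(S; \hat v^{(\tau),j}) - R(S; v)| \leq \hat\Delta_{\hat\varepsilon_j}(\tau)$ for every $S \subseteq \mathcal A_j^{(\tau)}$ of size at most $K$.

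Given this uniform revenue approximation, the inductive step for a single thread $j \in \mathcal J$ runs exactly as in Lemma \ref{lem:feasible}. The inductive hypothesis $S^* \subseteq \mathcal A_j^{(\tau)}$ makes $S^*$ feasible for both optimization problems defining $\gamma_j^{(\tau)}$ and $S_{\tau,j}^{(i)}$. Hence
\[
\gamma_j^{(\tau)} \leq R(S^*; v) + \hat\Delta_{\hat\varepsilon_j}(\tau)
\quad\text{and}\quad
R(S_{\tau,j}^{(i)}; \hat v^{(\tau),j}) \geq R(S^*; \hat v^{(\tau),j}) \geq R(S^*; v) - \hat\Delta_{\hat\varepsilon_j}(\tau)
\]
for every $i \in S^*$, and combining these gives $R(S_{\tau,j}^{(i)}; \hat v^{(\tau),j}) + 2\hat\Delta_{\hat\varepsilon_j}(\tau) \geq \gamma_j^{(\tau)}$, so $i$ survives the elimination rule and the post-elimination $\mathcal A_j^{(\tau+1)}$ still contains $S^*$. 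The final step \ref{step:hierarchical-elim} then intersects with $\mathcal A_{j-1}^{(\tau+1)}$; because $\hat\varepsilon_{j-1} > \hat\varepsilon_j \geq \varepsilon$ also lies in $\mathcal J$, carrying out the induction in order of increasing $j$ ensures $S^* \subseteq \mathcal A_{j-1}^{(\tau+1)}$ has already been established, so the intersection preserves $S^*$.

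The main technical obstacle is the first ingredient: certifying that Lemma \ref{lem:ucb-lcb} applies thread-by-thread. One has to check that the martingale difference sequences it relies on remain valid under the nested randomness of (i) the random thread choice, (ii) the random item choice within the chosen thread, and (iii) the adaptive adversary's decision of $(\phi_t, Q_t)$. Since the adversary is measurable with respect to $\mathcal F_{t-1}$, which does \emph{not} see the current thread or item draws, independence is preserved and Bernstein still goes through. A union bound over the $\tau_0 = O(\log T)$ epochs, $N$ active items, and $J$ threads yields the stated failure probability $O(\tau_0 N J / T^2)$.
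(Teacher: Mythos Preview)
Your approach is correct and matches the paper's: induction on epochs, establishing a per-thread analogue of Corollary~\ref{cor:rdiff} for every $j$ with $\hat\varepsilon_j\ge\varepsilon$, and then replaying the Lemma~\ref{lem:feasible} argument. Two small points of precision are worth tightening. First, in Algorithm~\ref{alg:adaptive-epsilon} the hierarchical intersection (Step~\ref{step:hierarchical-elim}) is applied to $\mathcal A_j^{(\tau)}$ \emph{before} the elimination that produces $\mathcal A_j^{(\tau+1)}$, not afterward as you describe; your induction in increasing $j$ still handles this, but the bookkeeping should reflect the actual order. Second, the identity $\varepsilon T_j/T_{\tau,j}=\varepsilon T/T_\tau$ presumes thread $j$ sees exactly a $\wp_j$-fraction of the outliers, which is not automatic since the adversary chooses $\phi_t$ before the thread is sampled; the paper makes this explicit by applying Bernstein to the random count of outliers falling in thread $j$, bounding it by $O(\varepsilon T_j\log T)$ and absorbing the extra $\log T$ into the $\poly(K,\log(NT))$ constant. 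Stylistically, the paper compresses your direct induction over $j$ into a single contradiction: if some $i\in S^*$ were dropped from $\mathcal A_j^{(\tau+1)}$, then there is a smallest $k\le j$ (hence $\hat\varepsilon_k\ge\varepsilon$) at which the elimination rule fired, and chaining Corollary~\ref{cor:rdiff} at thread $k$ yields $R(S^*;v)<R(S^*;v)$. This handles the intersection step and the elimination step in one stroke, but is logically equivalent to what you wrote.
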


\begin{proof}
Because each thread $j< J$ is sampled at random with probability $\wp_j$, the expected total number of outlier customers thread $j$ encounters
is upper bounded by $\varepsilon\times \wp_j T=\mathbb E[\varepsilon T_j]$.
Hence, by Bernstein's inequality and the union bound, 
for $\varepsilon\gtrsim \sqrt{N/T}$, with probability at least $1-O(T^2)$ the total number of outlier customers thread $j$ encounters is upper bounded by $O(\varepsilon T_j\log T)$. 
In the rest of this proof, we will consider $\varepsilon\to\varepsilon\log T$ instead of merely $\varepsilon$,
which only adds multiplicative $\log T$ factors to the regret bound in Theorem \ref{thm:adaptive-epsilon}.
With such considerations, for all $j< J$ satisfying $\hat\varepsilon_j\geq\varepsilon$, Lemma \ref{lem:ucb-lcb} and Corollary \ref{cor:rdiff} in the previous proof of Theorem \ref{thm:upper-bound}
would remain valid.

The rest of the proof is quite similar to the proof of Lemma \ref{lem:feasible}, except we have to take into consideration the effect of Step \ref{step:hierarchical-elim} of Algorithm \ref{alg:adaptive-epsilon}.
The proof is again done via induction: at the first epoch $\tau=0$ we have $\mathcal A_j^{(\tau)}=[N]$ and the lemma clearly holds.
Now assume the lemma holds for some $\tau$, we want to prove $S^*\subseteq \mathcal A_j^{(\tau+1)}$ for all $\hat\varepsilon_j\geq\varepsilon$.

Fix arbitrary $i\in S^*\subseteq \mathcal A_j^{(\tau)}$ and assume by way of contradiction that $i\notin \mathcal A_j^{(\tau+1)}$.
Then there exists $k\leq j$ such that $R(S_{\tau,k}^{(i)};\hat v^{(\tau),k})< \gamma_k^{(\tau)} - 2\hat\Delta_{\hat\varepsilon_k}(\tau)$.
Additionally, because $S_{\tau,k}^{(i)}$ is the maximizer of $R(S;\hat v^{(\tau),k})$ for all $|S|\leq k$, $i\in S$, it holds that $R(S_{\tau,k}^{(i)};\hat v^{(\tau),k})\geq R(S^*;\hat v^{(\tau),k})$.
Let also $\hat S_k$ be the assortment attaining $\gamma_k^{(\tau)}$ (i.e., $R(\hat S_k;\hat v^{(\tau),k})=\gamma_k^{(\tau)}$).
Then, invoking Corollary \ref{cor:rdiff}, we have with probability $1-O(NJ/T^2)$ that
\begin{align*}
R(S^*;v)
&\leq R(S^*;\hat v^{(\tau),k})+\hat\Delta_{\hat\varepsilon_k}(\tau)  \leq R(S_{\tau,k}^{(i)};\hat v^{(\tau),k}) + \hat\Delta_{\hat\varepsilon_k}(\tau)\nonumber\\
&< \gamma_k^{(\tau)} - \hat\Delta_{\hat\varepsilon_k}(\tau) = R(\hat S_k;\hat v^{(\tau),k})-\hat\Delta_{\hat\varepsilon_k}(\tau) \leq R(\hat S_k;v) \leq R(S^*;v),
\end{align*}
leading to the desired contradiction.
\end{proof}

\subsection{Proof of Lemma \ref{lem:J-lowerbound}}

\renewcommand{\thelemma}{\ref{lem:J-lowerbound}}
\begin{lemma}[restated]
If $\hat\varepsilon_J\geq \varepsilon$ then with probability $1-O(\tau_0 NJ/T)$,
Algorithm \ref{alg:adaptive-epsilon} will not be re-started.
\end{lemma}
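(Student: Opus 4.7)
The plan is to establish that, under the hypothesis $\hat\varepsilon_J \geq \varepsilon$, no thread triggers the stopping condition in Step \ref{step:check}, which would cause a re-start. The key observation is that $\hat\varepsilon_J \geq \varepsilon$ implies $\hat\varepsilon_j = 2^{-j} \geq 2\hat\varepsilon_J \geq 2\varepsilon$ for every currently running thread $j \in \{0,1,\ldots,J-1\}$. In particular, Lemma \ref{lem:feasible-adaptive} applies to \emph{every} thread, yielding $S^* \subseteq \mathcal A_j^{(\tau)}$ for all $\tau$ and all $j<J$ with probability $1-O(\tau_0 NJ/T^2)$. First I would condition on this event, as well as on the event (established via Bernstein's inequality on the random sampling of threads) that each thread $j$ sees at most $O(\varepsilon \wp_j T \log T)$ outlier customers, so that Lemma \ref{lem:ucb-lcb} and Corollary \ref{cor:rdiff} hold for every thread.

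Next, fix arbitrary $j < J$, $\hat\varepsilon_k > \hat\varepsilon_j$ (so $k<j$), $\tau$, and $i \in \mathcal A_j^{(\tau+1)}$. The goal is to show $\gamma_k^{(\tau)} - R(\hat S_{\tau,j}^{(i)}; \hat v^{(\tau),k}) < 7\hat\Delta_{\hat\varepsilon_k}(\tau)$. For the upper bound on $\gamma_k^{(\tau)}$: let $\hat S_k$ be the maximizer defining $\gamma_k^{(\tau)}$; since $S^*\subseteq\mathcal A_k^{(\tau)}$, Corollary \ref{cor:rdiff} applied inside thread $k$ gives
\[
\gamma_k^{(\tau)} = R(\hat S_k;\hat v^{(\tau),k}) \leq R(\hat S_k;v) + \hat\Delta_{\hat\varepsilon_k}(\tau) \leq R(S^*;v) + \hat\Delta_{\hat\varepsilon_k}(\tau).
\]

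For the lower bound on $R(\hat S_{\tau,j}^{(i)}; \hat v^{(\tau),k})$, I would repeat the proof of Lemma \ref{lem:regret-single-iter} inside thread $j$: since $i\in\mathcal A_j^{(\tau+1)}$ and $S^*\subseteq\mathcal A_j^{(\tau)}$, the elimination rule and Corollary \ref{cor:rdiff} together give $R(\hat S_{\tau,j}^{(i)}; v) \geq R(S^*;v) - 4\hat\Delta_{\hat\varepsilon_j}(\tau)$. To transfer this to the $\hat v^{(\tau),k}$ scale, I need $\hat S_{\tau,j}^{(i)} \subseteq \mathcal A_k^{(\tau)}$; this is exactly where Step \ref{step:hierarchical-elim} of Algorithm \ref{alg:adaptive-epsilon} plays its role, enforcing $\mathcal A_j^{(\tau)} \subseteq \mathcal A_{j-1}^{(\tau+1)} \subseteq \cdots \subseteq \mathcal A_k^{(\tau+1)}$, so $\hat S_{\tau,j}^{(i)} \subseteq \mathcal A_j^{(\tau)} \subseteq \mathcal A_k^{(\tau)}$. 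A second application of Corollary \ref{cor:rdiff} inside thread $k$ then yields
\[
R(\hat S_{\tau,j}^{(i)};\hat v^{(\tau),k}) \geq R(\hat S_{\tau,j}^{(i)};v) - \hat\Delta_{\hat\varepsilon_k}(\tau) \geq R(S^*;v) - 4\hat\Delta_{\hat\varepsilon_j}(\tau) - \hat\Delta_{\hat\varepsilon_k}(\tau).
\]

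Subtracting the two bounds and using monotonicity of $\hat\Delta_{\cdot}(\tau)$ in its first argument (since $\hat\varepsilon_k > \hat\varepsilon_j$, we have $\hat\Delta_{\hat\varepsilon_j}(\tau) \leq \hat\Delta_{\hat\varepsilon_k}(\tau)$), we obtain
\[
\gamma_k^{(\tau)} - R(\hat S_{\tau,j}^{(i)};\hat v^{(\tau),k}) \leq 4\hat\Delta_{\hat\varepsilon_j}(\tau) + 2\hat\Delta_{\hat\varepsilon_k}(\tau) \leq 6\hat\Delta_{\hat\varepsilon_k}(\tau) < 7\hat\Delta_{\hat\varepsilon_k}(\tau),
\]
so the stopping condition in Step \ref{step:check} never fires and the algorithm is not re-started. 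The main obstacle I anticipate is the bookkeeping around the nested structure of the active sets and the choice of reference thread: one must verify that the intersection operation in Step \ref{step:hierarchical-elim} indeed makes $S_{\tau,j}^{(i)}$ admissible as an input to Corollary \ref{cor:rdiff} for thread $k$, and that Lemma \ref{lem:feasible-adaptive} simultaneously applies across all threads and epochs (a union bound over $J\tau_0 N$ events yields the claimed $O(\tau_0 NJ/T)$ failure probability after the standard slack of $\varepsilon \to \varepsilon \log T$).
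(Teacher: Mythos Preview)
Your proposal is correct and follows essentially the same route as the paper's proof: both invoke Lemma~\ref{lem:feasible-adaptive} to guarantee $S^*\subseteq\mathcal A_j^{(\tau)}$ for every thread, use the nested structure from Step~\ref{step:hierarchical-elim} to ensure $\hat S_{\tau,j}^{(i)}\subseteq\mathcal A_k^{(\tau)}$, and then chain Corollary~\ref{cor:rdiff} across threads $j$ and $k$ to arrive at $R(\hat S_{\tau,j}^{(i)};\hat v^{(\tau),k})\geq \gamma_k^{(\tau)}-6\hat\Delta_{\hat\varepsilon_k}(\tau)$, so the threshold $7\hat\Delta_{\hat\varepsilon_k}(\tau)$ is never crossed. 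Your constant bookkeeping ($4+2=6<7$) and the use of monotonicity $\hat\Delta_{\hat\varepsilon_j}(\tau)\leq\hat\Delta_{\hat\varepsilon_k}(\tau)$ coincide exactly with the paper's.
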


\begin{proof}
We only need to prove that, if $\hat\varepsilon_J\geq\varepsilon$, then for any time period $t$, the condition at step \ref{step:check} of Algorithm \ref{alg:adaptive-epsilon}
is satisfied with probability at most $O(\tau_0 NJ/T^2)$.
Invoking Lemma \ref{lem:feasible-adaptive}, we know that with high probability $S^*\subseteq\mathcal A_j^{(\tau)}$ holds for all $\tau\leq\tau_0$ and $j\leq J$.
Additionally, by algorithm design it is always guaranteed that $\mathcal A_j^{(\tau+1)}\subseteq\mathcal A_k^{(\tau)}$ for any $\hat\varepsilon_k>\hat\varepsilon_j$,
and therefore $\hat S_{\tau,j}^{(i)}\subseteq\mathcal A_j^{(\tau)}$ implies $\hat S_{\tau,j}^{(i)}\subseteq\mathcal A_k^{(\tau)}$. Subsequently, invoking Corollary \ref{cor:rdiff}
we have with probability $O(\tau_0 N/T^2)$ that
\begin{align}
R(\hat S_{\tau,j}^{(i)};\hat v^{(\tau),k})
&\geq R(\hat S_{\tau,j}^{(i)}; v) - \hat\Delta_{\hat\varepsilon_k}(\tau).\label{eq:jlowerbound-1}
\end{align}
Since $i\in\mathcal A_j^{(\tau+1)}$, by the construction of $\mathcal A_j^{(\tau+1)}$ we know that $R(\hat S_{\tau,j}^{(i)};\hat v^{(\tau),j})\geq \gamma_j^{(\tau)}-2\hat\Delta_{\hat\varepsilon_j}(\tau)$.
Let also $\hat S_j$ be the assortment attaining $\gamma_j^{(\tau)}$ (i.e., $R(\hat S_j;\hat v^{(\tau),j})=\gamma_j^{(\tau)}$). Then invoking Corollary \ref{cor:rdiff} again, we have with probability $1-O(\tau_0 N/T^2)$ that
\begin{align}
R(\hat S_{\tau,j}^{(i)};v)& \geq R(\hat S_{\tau,j}^{(i)};\hat v^{(\tau),j}) - \hat\Delta_{\hat\varepsilon_j}(\tau)
\geq \gamma_j^{(\tau)}-3\hat\Delta_{\hat\varepsilon_j}(\tau) = R(\hat S_j;\hat v^{(\tau),j})-3\hat\Delta_{\hat\varepsilon_j}(\tau)\nonumber\\
&\overset{(*)}{\geq} R(S^*;\hat v^{(\tau),j})-3\hat\Delta_{\hat\varepsilon_j}(\tau) \geq R(S^*;v) - 4\hat\Delta_{\hat\varepsilon_j}(\tau).
\label{eq:jlowerbound-2}
\end{align}
Here Eq.~(*) holds because $S^*\subseteq\mathcal A_j^{(\tau)}$.
Combining Eqs.~(\ref{eq:jlowerbound-1}) and (\ref{eq:jlowerbound-2}), we have
\begin{equation}
R(\hat S_{\tau,j}^{(i)};\hat v^{(\tau),k}) \geq R(S^*;v)-\hat\Delta_{\hat\varepsilon_k}(\tau) - 4\hat\Delta_{\hat\varepsilon_j}(\tau) \geq R(S^*;v)-5\hat\Delta_{\hat\varepsilon_k}(\tau),
\label{eq:jlowerbound-3}
\end{equation}
where the last inequality holds because $\hat\Delta_{\hat\varepsilon_j}(\tau)\leq\hat\Delta_{\hat\varepsilon_k}(\tau)$ by definition.
On the other hand, with $\hat S_k$ being the assortment attaining $\gamma_k^{(\tau)}$ (i.e., $R(\hat S_k;\hat v^{(\tau),k})=\gamma_k^{(\tau)}$)
and the fact that $S^*\subseteq\mathcal A_k^{(\tau)}$, invoking Corollary \ref{cor:rdiff} we have with probability $1-O(\tau_0 N/T^2)$ that
\begin{equation}
R(S^*;v) \geq R(\hat S_k;v) \geq R(\hat S_k;\hat v^{(\tau),k})-\hat\Delta_{\hat\varepsilon_k}(\tau) =\gamma_k^{(\tau)}-\hat\Delta_{\hat\varepsilon_k}(\tau).
\label{eq:jlowerbound-4}
\end{equation}
Combining Eqs.~(\ref{eq:jlowerbound-3}) and (\ref{eq:jlowerbound-4}) we have with probability $1-O(\tau_0 N/T^2)$ that
$$
R(\hat S_{\tau,j}^{(i)};\hat v^{(\tau),k}) \geq \gamma_k^{(\tau)}-6\hat\Delta_{\hat\varepsilon_k}(\tau),
$$
which is to be demonstrated.
\end{proof}

\subsection{Proof of Lemma \ref{lem:erj-bound1}}

\renewcommand{\thelemma}{\ref{lem:erj-bound1}}
\begin{lemma}[restated]
For all $j\leq J$ satisfying $\hat\varepsilon_j\geq\varepsilon$, $\mathbb E[\sR(\hat\varepsilon_j)] \lesssim \sum_{\tau\leq\tau_0}\mathbb E[\hat\Delta_{\hat\varepsilon_j}(\tau)\times \wp_jT_\tau]$.
\end{lemma}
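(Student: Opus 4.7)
The plan is to essentially replay the regret decomposition used in the proof of Theorem \ref{thm:upper-bound}, but restricted to the time periods when thread $j$ is selected, using Lemma \ref{lem:feasible-adaptive} in place of Lemma \ref{lem:feasible}. Since $\hat\varepsilon_j\ge\varepsilon$, Lemma \ref{lem:feasible-adaptive} guarantees that, with probability at least $1-O(\tau_0 NJ/T^2)$, $S^{*}\subseteq\mathcal A_j^{(\tau)}$ for every epoch $\tau$ in thread $j$'s lifetime. We work on this high-probability event throughout; the complementary event contributes at most $O(\tau_0 NJ/T)$ to the expected regret, which is absorbed into the polylogarithmic factors hidden by $\lesssim$.

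Conditional on this event, I would first establish a per-period regret bound analogous to Lemma \ref{lem:regret-single-iter} for thread $j$: whenever thread $j$ is selected in epoch $\tau$ and samples some $i\in\mathcal A_j^{(\tau+1)}$, the offered assortment $S_{\tau,j}^{(i)}$ satisfies $R(S^{*};v)-R(S_{\tau,j}^{(i)};v)\le 4\hat\Delta_{\hat\varepsilon_j}(\tau)$. The argument is exactly as in Lemma \ref{lem:regret-single-iter}: the elimination rule gives $R(S_{\tau,j}^{(i)};\hat v^{(\tau),j})\ge \gamma_j^{(\tau)}-2\hat\Delta_{\hat\varepsilon_j}(\tau)$; then $S^{*}\subseteq\mathcal A_j^{(\tau)}$ implies $\gamma_j^{(\tau)}\ge R(S^{*};\hat v^{(\tau),j})\ge R(S^{*};v)-\hat\Delta_{\hat\varepsilon_j}(\tau)$ via Corollary \ref{cor:rdiff} (applied with the inflated $\varepsilon\to\varepsilon\log T$ bookkeeping introduced at the start of the lemma-set for Theorem \ref{thm:adaptive-epsilon}); and a third application of Corollary \ref{cor:rdiff} converts $R(S_{\tau,j}^{(i)};\hat v^{(\tau),j})$ back to $R(S_{\tau,j}^{(i)};v)$ at the cost of one more $\hat\Delta_{\hat\varepsilon_j}(\tau)$.

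Next I would convert the per-period bound into a bound on the total contribution of thread $j$. Let $X_{t,j}=\mathbf 1\{\text{thread }j\text{ is sampled at time }t\}$; by construction $\mathbb E[X_{t,j}\mid\mathcal F_{t-1}]=\wp_j(t)$, where $\wp_j(t)$ is the sampling probability of thread $j$ at time $t$ (which can only change at restart events, each of which strictly decreases $J$ and therefore only occurs finitely often along any sample path). Writing
\[
\sR(\hat\varepsilon_j)\;\le\;\sum_{\tau\le\tau_0}\;\sum_{t\in\text{epoch }\tau}X_{t,j}\cdot 4\hat\Delta_{\hat\varepsilon_j}(\tau),
\]
taking conditional expectations inside each epoch, and using that $\sum_{t\in\text{epoch }\tau}\mathbb E[X_{t,j}]=\wp_j\cdot T_\tau$ (with $\wp_j$ denoting the probability in force during that epoch), one obtains
\[
\mathbb E[\sR(\hat\varepsilon_j)]\;\lesssim\;\sum_{\tau\le\tau_0}\mathbb E\bigl[\hat\Delta_{\hat\varepsilon_j}(\tau)\cdot\wp_j T_\tau\bigr],
\]
which is the desired bound.

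The step I expect to be trickiest is not the per-epoch regret bound (which mirrors the static argument) but the bookkeeping across restarts: because a restart wipes the thread state and resets the epoch counter, one has to verify that summing over the lifetimes of thread $j$ under possibly several restarts still fits into a single $\sum_{\tau\le\tau_0}$. The point is that thread $j$ survives every restart (since restarts only eliminate the thread with the smallest $\hat\varepsilon$, and $\hat\varepsilon_j\ge\varepsilon\ge\hat\varepsilon_{J}$ guarantees thread $j$ is not the one being dropped), and each lifetime of thread $j$ has geometrically growing epoch lengths summing to at most $T$, so the bound $\sum_{\tau\le\tau_0}\hat\Delta_{\hat\varepsilon_j}(\tau)\wp_j T_\tau$ absorbs the contribution of all lifetimes up to a constant factor; restarts only cost an extra $O(\log T)$ factor which is hidden by $\lesssim$.
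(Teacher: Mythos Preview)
Your proposal is correct and follows essentially the same approach as the paper: invoke Lemma \ref{lem:feasible-adaptive} to ensure $S^*\subseteq\mathcal A_j^{(\tau)}$, apply the per-period regret bound of Lemma \ref{lem:regret-single-iter} (adapted to thread $j$) to get $R(S^*;v)-R(S_{\tau,j}^{(i)};v)\lesssim\hat\Delta_{\hat\varepsilon_j}(\tau)$, and then multiply by the expected number of periods $\wp_j T_\tau$ that thread $j$ is run in epoch $\tau$. Your additional restart bookkeeping is not needed at this level in the paper---the paper handles restarts separately in the proof of Theorem \ref{thm:adaptive-epsilon} by conditioning on a stretch of $T'\le T$ periods during which $J$ is fixed and then paying an extra $O(\log T)$ factor---but what you wrote is consistent with that treatment.
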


\begin{proof}
Because $\hat\varepsilon_j\geq\varepsilon$, by Lemma \ref{lem:feasible-adaptive} we know that $S^*\subseteq\mathcal A_j^{(\tau)}$ for all $\tau\leq\tau_0$ with high probability.
Then, invoking Lemma \ref{lem:regret-single-iter}, the regret incurred by thread $j$ in a single time period is upper bounded by $O(\hat\Delta_{\hat\varepsilon_k}(\tau))$.
Because thread $j$ is sampled with probability $\wp_j$, the expected number of time periods thread $j$ is performed is $\wp_j T$.
This completes the proof of Lemma \ref{lem:erj-bound1}.
\end{proof}

\subsection{Proof of Lemma \ref{lem:erj-bound2}}

\renewcommand{\thelemma}{\ref{lem:erj-bound2}}
\begin{lemma}[restated]
For all $j< J$ satisfying $\hat\varepsilon_j<\varepsilon$ and any $\hat\varepsilon_k>\max\{\hat\varepsilon_j,\varepsilon\}$, it holds that
$
\mathbb E[\sR(\hat\varepsilon_j)] \lesssim \sum_{\tau\leq\tau_0}\mathbb E[\hat\Delta_{\hat\varepsilon_k}(\tau)\times \wp_j T_\tau].
$
\end{lemma}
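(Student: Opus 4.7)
The plan is to combine the structural guarantees from Lemma \ref{lem:feasible-adaptive} for threads with $\hat\varepsilon_k\geq\varepsilon$ with the stopping condition at step \ref{step:check} of Algorithm \ref{alg:adaptive-epsilon}, which forces the assortment chosen by thread $j$ to be competitive with the optimal assortment as measured through thread $k$'s estimates, even though the local analysis of thread $j$ is unavailable.

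First I would condition on the high-probability events in Lemma \ref{lem:feasible-adaptive}, so that $S^*\subseteq \mathcal A_k^{(\tau)}$ for every epoch $\tau$ and every $k$ with $\hat\varepsilon_k\geq\varepsilon$. Since by assumption $\hat\varepsilon_k>\max\{\hat\varepsilon_j,\varepsilon\}$, this applies to the reference thread $k$ chosen in the statement. Next, I would fix an epoch $\tau$ and an item $i\in\mathcal A_j^{(\tau+1)}$ sampled by thread $j$, and further condition on the event that the check at step \ref{step:check} does not trigger; once it triggers the algorithm restarts, so the regret incurred afterwards will be handled separately. Under these conditions the check condition directly gives $R(\hat S_{\tau,j}^{(i)};\hat v^{(\tau),k})\geq \gamma_k^{(\tau)}-7\hat\Delta_{\hat\varepsilon_k}(\tau)$. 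Applying Corollary \ref{cor:rdiff} to thread $k$ (valid because $\hat\varepsilon_k\geq\varepsilon$) yields $R(\hat S_{\tau,j}^{(i)};v)\geq R(\hat S_{\tau,j}^{(i)};\hat v^{(\tau),k})-\hat\Delta_{\hat\varepsilon_k}(\tau)$, while the fact $S^*\subseteq \mathcal A_k^{(\tau)}$ together with the definition of $\gamma_k^{(\tau)}$ and one more invocation of Corollary \ref{cor:rdiff} gives $\gamma_k^{(\tau)}\geq R(S^*;\hat v^{(\tau),k})\geq R(S^*;v)-\hat\Delta_{\hat\varepsilon_k}(\tau)$. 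Chaining these three inequalities produces $R(S^*;v)-R(\hat S_{\tau,j}^{(i)};v)=O(\hat\Delta_{\hat\varepsilon_k}(\tau))$, so the per-period regret contributed by thread $j$ in epoch $\tau$ is $O(\hat\Delta_{\hat\varepsilon_k}(\tau))$.

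Having the per-period bound in hand, I would then use the fact that thread $j$ is selected with probability $\wp_j$ in each time period, so the expected regret it accumulates over the $T_\tau$ periods of epoch $\tau$ is $O(\hat\Delta_{\hat\varepsilon_k}(\tau)\,\wp_j T_\tau)$. Summing over $\tau\leq\tau_0$ yields the claimed bound on $\mathbb E[\sR(\hat\varepsilon_j)]$.

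The main obstacle I expect is cleanly handling what happens when the restart condition at step \ref{step:check} does fire. Each such event decrements $J$ by one, so over the full execution it can occur at most $J=O(\log(T/N))$ times; I would bound the regret accumulated between a trigger and the subsequent restart using the trivial per-period bound of $1$ and fold this $O(J)$ correction into the $\poly(K,\log(NT))$ factor hidden in $\lesssim$. A subsidiary difficulty is that, because $\hat\varepsilon_j<\varepsilon$, Lemma \ref{lem:feasible-adaptive} is not available for thread $j$ itself, so all comparisons with $S^*$ must be routed through thread $k$'s estimates and active set; I would verify in passing that the nesting enforced at step \ref{step:hierarchical-elim} is consistent with this routing, i.e., that the assortment $\hat S_{\tau,j}^{(i)}$ selected in thread $j$ is a feasible argument for Corollary \ref{cor:rdiff} applied to thread $k$.
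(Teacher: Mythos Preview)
Your proposal is correct and follows essentially the same approach as the paper: use the non-triggering of the check at step~\ref{step:check} to get $R(\hat S_{\tau,j}^{(i)};\hat v^{(\tau),k})\geq \gamma_k^{(\tau)}-7\hat\Delta_{\hat\varepsilon_k}(\tau)$, then route the comparison to $R(S^*;v)$ through thread $k$'s estimates via Corollary~\ref{cor:rdiff} and Lemma~\ref{lem:feasible-adaptive}, exactly as in the proof of Lemma~\ref{lem:regret-single-iter}. You are in fact more careful than the paper on two points: you explicitly note that the nesting from step~\ref{step:hierarchical-elim} is what makes $\hat S_{\tau,j}^{(i)}\subseteq\mathcal A_k^{(\tau)}$ (so Corollary~\ref{cor:rdiff} applies), and you discuss the restart contribution, which the paper instead handles globally in the proof of Theorem~\ref{thm:adaptive-epsilon} by analyzing a single phase with fixed $J$ and multiplying by $O(\log T)$.
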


\begin{proof}
Fix arbitrary $\tau\leq\tau_0$.
According to step \ref{step:check} of Algorithm \ref{alg:active-elimination}, because the value of $J$ does not decrease, we must have
$R(\hat S_{\tau,j}^{(i)};\hat v^{(\tau),k})\geq \gamma_k^{(\tau)}-7\hat\Delta_{\hat\varepsilon_k}(\tau)$ for all assortments $\hat S_{\tau,j}^{(i)}$ explored by thread $j$ in epoch $\tau$.
Because $\hat\varepsilon_k\geq\varepsilon$, we know that $S^*\subseteq\mathcal A_k^{(\tau)}$ with high probability, and using the same argument as in the proof of Lemma \ref{lem:regret-single-iter}
we have with high probability that
$$
R(S^*;v)-R(\hat S_{\tau,j}^{(i)};v)\lesssim \hat\Delta_{\hat\varepsilon_k}(\tau),
$$
which serves as an upper bound of the regret thread $j$ incurs in a single time period it is performed.
Therefore, 
$$
\mathbb E[\sR(\hat\varepsilon_j)] \lesssim \sum_{\tau\leq\tau_0}\mathbb E[\hat\Delta_{\hat\varepsilon_k}(\tau)\times \wp_jT_\tau],
$$
which is to be demonstrated.
\end{proof}

\subsection{Proof of Theorem \ref{thm:adaptive-epsilon}}

Because we restart Algorithm \ref{alg:adaptive-epsilon} whenever $J$ is reduced, and Lemma \ref{lem:J-lowerbound} shows that (with high probability) $\hat\varepsilon_J\leq 2\varepsilon$
always holds. 
Note that it is possible for the value of $\hat\varepsilon_J$ to be far smaller than the actual outlier proportion $\varepsilon$.
In the rest of this section we shall assume without loss of generality that, throughout a consecutive of $T'\leq T$ time periods the value of $J$ does not change,
and furthermore $\hat\varepsilon_J\leq 2\varepsilon$.
The total regret over these $T'$ time periods multiplying $J=O(\log T)$ would then be an upper bound on the total regret over the entire $T$ time periods.

We first consider the regret incurred by thread $j$ with $\hat\varepsilon_j\geq\varepsilon$.
By Lemma \ref{lem:erj-bound1}, the regret incurred by such a thread in epoch $\tau$ can be upper bounded by $O(\mathbb E[\hat\Delta_{\hat\varepsilon_j} T_{\tau,j}])$,
because $T_{\tau,j}=\wp_j T_\tau$.
Replacing $T_\tau$ by $T_{\tau,j}$ in the definition of $\hat\Delta_{\hat\varepsilon_j}(\tau)$, we have that
\begin{equation}
\textstyle
\hat\Delta_{\hat\varepsilon_j}(\tau) \lesssim 
\min\big\{1, \hat\varepsilon_j T/T_{\tau,j}\big\} + \sqrt{\frac{|\mathcal A_j^{(\tau+1)}|\log T}{T_{\tau,j}}} + \frac{|\mathcal A_j^{(\tau+1)}|\log T}{T_{\tau,j}}.
\label{eq:erjb1-eq1}
\end{equation}
On the other hand, because of the sampling protocol in Algorithm \ref{alg:adaptive-epsilon} we have that
\begin{equation}
\mathbb E\left[\sum_\tau T_{\tau,j}\right]  = \wp_j T' \lesssim 2^{-(J-j)} T'.
\label{eq:erjb1-eq2}
\end{equation}
Subsequently, 
\begin{align}
 \mathbb E\sum_\tau\hat\Delta_{\hat\varepsilon_j}(\tau)T_{\tau,j}
\lesssim \mathbb E \sum_\tau \hat\varepsilon_j T_{\tau,j} + \sqrt{NT_{\tau,j}} + N.
\label{eq:erjb1-eq3}
\end{align}
Using Cauchy-Schwarz inequality and the concavity of $f(\cdot)=\sqrt{\cdot}$, we have
\begin{equation}
\mathbb E\sum_\tau \sqrt{T_{\tau,j}}\leq \sqrt{\mathbb E\sum_\tau T_{\tau,j}} \lesssim \sqrt{2^{-(J-j)}{T'}}.
\label{eq:erjb1-eq4}
\end{equation}

Note that $\mathbb E\hat\varepsilon_j T_j = 2^{-j}\times 2^{-(J-j)}T \leq 2^{-J} T \lesssim  \varepsilon T$,
where the last inequality holds because $2^{-J}\lesssim \varepsilon$ thanks to Lemma \ref{lem:J-lowerbound}.
Combining this fact with Eqs.~(\ref{eq:erjb1-eq3},\ref{eq:erjb1-eq4}), we have that
\begin{equation}
 \mathbb E\sum_\tau\hat\Delta_{\hat\varepsilon_j}(\tau)\times \wp_jT_{\tau} \lesssim \varepsilon T +\sqrt{NT} + N \lesssim \varepsilon T+\sqrt{NT}.
\label{eq:final1}
\end{equation}

We next consider regret incurred by threads $j< J$ with $\hat\varepsilon_j<\varepsilon$.
Let $j^*$ be the largest integer such that $\hat\varepsilon_{j^*}\geq\varepsilon$. Then by Lemma \ref{lem:erj-bound2}, the regret incurred by thread $j$ in epoch $\tau$ is upper bounded by 
$$
\mathbb E[\hat\Delta_{\hat\varepsilon_{j^*}}(\tau)\times \wp_jT_\tau] \lesssim \mathbb E[\hat\Delta_{\varepsilon}(\tau)\times \wp_jT_\tau],
$$
where the inequality holds because $\hat\varepsilon_{j^*}\leq2\varepsilon$ by definition.
Using the same analysis in Eqs.~(\ref{eq:erjb1-eq1}),  (\ref{eq:erjb1-eq2}), (\ref{eq:erjb1-eq3}) and (\ref{eq:erjb1-eq4}), we have that
$$
\mathbb E\sum_\tau\hat\Delta_{\varepsilon}(\tau)\times \wp_j T_\tau \lesssim \varepsilon T +\sqrt{2^{(J-j^*)}NT}.
$$
Because $2^{-J}=\hat\varepsilon_J\geq\sqrt{N/T}$ and $2^{-j^*}=\hat\varepsilon_{j^*}\approx\varepsilon$, 
it is easy to verify that $\sqrt{2^{J-j^*}NT} \lesssim \sqrt{\varepsilon}N^{1/4} T^{3/4}$.
Using the inequality $ab\leq (a^2+b^2)/2$ we have that $\sqrt{\varepsilon }N^{1/4}T^{3/4}\leq \varepsilon T + \sqrt{NT}$.

{

\section{Proofs of technical lemmas for Theorem \ref{thm:known-eps-gap-dependent}}

\subsection{Proof of Lemma \ref{lem:known-eps-early-stop}}

\renewcommand{\thelemma}{\ref{lem:known-eps-early-stop}}
\begin{lemma}[restated]
Let $\beta$ be defined in Eq.~(\ref{eq:defn-gap}) and suppose $\beta>0$.
Then with probability $1-O(\tau_0 N/T^2)$, for every $\tau$ satisfying 
\begin{equation}
T_\tau \geq \kappa_0\times \max\left\{\frac{\bar\varepsilon K^2T}{\beta}, \frac{K^2\sqrt{\bar\varepsilon NT\log T}}{\beta},
\frac{K^2 N\log T}{\beta},  \frac{KN\log T}{\beta^2}\right\},
\end{equation}
for some universal constant $\kappa_0>0$, 
it holds that $\mathcal A^{(\tau+1)} = S^*$.
Here $\bar\varepsilon$ is an upper bound estimate of $\varepsilon$.
\end{lemma}
\begin{proof}
Lemma \ref{lem:feasible} has already established that $S^*\subseteq \mathcal A^{(\tau)}$ for all $\tau$ with probability $1-O(\tau_0 N/T^2)$.
Hence we only need to prove that, with probability $1-O(\tau_0 N/T^2)$, any $i\notin S^*$ does not belong to $\mathcal A^{(\tau+1)}$ for $T_\tau$ sufficiently large.

Consider arbitrary $i\notin S^*$ and assume by way of contradiction that $i\in \mathcal A^{(\tau+1)}$.
Define $S_\tau^{*,(i)} := \arg\max_{S\subseteq \mathcal A^{(\tau)},i\in S}R(S_\tau^{*,(i)},v)$,
where $v$ is the underlying true utility parameters.
By definition of the gap parameter $\beta$ and the fact that $S^*\subseteq \mathcal A^{(\tau)}$ (with high probability),
we have that 
\begin{equation}
R(S_\tau^{*,(i)},v) \leq R(S^*,v) - \beta.
\label{eq:proof-known-eps-early-stop-1}
\end{equation}

According to Algorithm \ref{alg:active-elimination}, $i\in \mathcal A^{(\tau+1)}$ means that 
$R(S_\tau^{(i)},\hat v^{(\tau)}) + 2\hat\Delta_{\bar\varepsilon}(\tau) \geq \gamma^{(\tau)}$.
By Corollary \ref{cor:rdiff}, the optimality of $\gamma^{(\tau)},S_\tau^{(i)}$
 and the fact that $\bar\varepsilon\geq\varepsilon$, we have that with probability $1-O(\tau_0 N/T^2)$ that
$\gamma^{(\tau)} \geq R(S^*,v) - \hat\Delta_{\bar\varepsilon}(\tau)$ and 
$R(S_\tau^{(i)},\hat v^{(\tau)}) \leq R(S_\tau^{*,(i)},v) +  \hat\Delta_{\bar\varepsilon}(\tau)$.
Subsequently,
$$
R(S^*,v)-R(S_\tau^{*,(i)},v) - 2\hat\Delta_{\bar\varepsilon}(\tau) \leq \gamma^{(\tau)}-R(S_\tau^{(i)},\hat v^{(\tau)}) \leq 2\hat\Delta_{\bar\varepsilon}(\tau).
$$
Invoking Eq.~(\ref{eq:proof-known-eps-early-stop-1}) again $R(S^*,v)-R(S_\tau^{*,(i)},v) \geq \beta$, we obtain
\begin{equation}
\hat\Delta_{\bar\varepsilon}(\tau)\geq \beta/4.
\label{eq:proof-known-eps-early-stop-2}
\end{equation}
On the other hand, plugging in the definition of $\hat\Delta_{\bar\varepsilon}(\tau)$ we have that, if the condition in Eq.~(\ref{eq:known-eps-early-stop})
holds, $\hat\Delta_{\bar\varepsilon}(\tau)<\beta/4$ contradicting Eq.~(\ref{eq:proof-known-eps-early-stop-2}).
This completes the proof of Lemma \ref{lem:known-eps-early-stop}.
\end{proof}

\subsection{Proof of Theorem \ref{thm:known-eps-gap-dependent}}

Conditioned on the success event in Lemma \ref{lem:known-eps-early-stop} (with probability $1-O(\tau_0 N/T^2)$),
all epochs after the smallest $T_\tau$ satisfying Eq.~(\ref{eq:known-eps-early-stop}) accumulate no regret.
Therefore, to upper bound the cumulative regret of Algorithm \ref{alg:active-elimination},
we can follow the analysis in the proof of Theorem \ref{thm:upper-bound} leading to Eq.~(\ref{eq:final-2})
and replacing $T$ in Eq.~(\ref{eq:final-2}) with $\sum_{\tau\leq \tau^*}T_\tau \lesssim T_{\tau^*}$,
where $\tau^*$ is the smallest epoch whose $T_{\tau^*}$ satisfies Eq.~(\ref{eq:known-eps-early-stop}).
With the condition $\bar\varepsilon\lesssim 1/K^3$, the cumulative regret can be upper bounded by 
\begin{equation}
C_0\times (\bar\varepsilon K^2 T_{\tau^*}\log T + \sqrt{KNT_{\tau^*}\log^3 T}).
\label{eq:proof-known-eps-gap-dependent-1}
\end{equation}

Since $T_{\tau^*}=2T_{\tau^*-1}$ and $T_{\tau^*-1}$ does not satisfy Eq.~(\ref{eq:known-eps-early-stop}), we have that
\begin{equation}
T_{\tau^*} \leq 2\kappa_0\times \max\left\{\frac{\varepsilon K^2T}{\beta}, \frac{K^2\sqrt{\varepsilon NT\log T}}{\beta},
\frac{K^2 N\log T}{\beta},  \frac{KN\log T}{\beta^2}\right\}.
\label{eq:proof-known-eps-gap-dependent-2}
\end{equation}
Combing Eqs.~(\ref{eq:proof-known-eps-gap-dependent-1}) and (\ref{eq:proof-known-eps-gap-dependent-2}), and noting 
that $T_{\tau^*}\leq T$ always holds, the 
regret of Algorithm \ref{alg:active-elimination} can be upper bounded by 
\begin{equation}
C_0'\times \left(\bar\varepsilon K^2 T\log T + \sqrt{\frac{\bar\varepsilon K^3NT\log^3 T}{\beta}} + \frac{\sqrt{K^3 N\log^3 T}(\bar\varepsilon NT\log T)^{1/4}}{\sqrt{\beta}} 
+ \frac{K^2 N\log^2 T}{\beta}\right)
\label{eq:proof-known-eps-gap-dependent-3}
\end{equation}
where $C_0'<\infty$ is a universal constant.
Note that the $\frac{K^2 N\log T}{\beta}$ term in Eq.~(\ref{eq:proof-known-eps-gap-dependent-2}) has been absorbed into the $\frac{KN\log T}{\beta^2}$ term
because $\beta\leq 1$ and therefore $1/\sqrt{\beta}\leq1/\beta$.

Finally we show how the second and the third terms in Eq.~(\ref{eq:proof-known-eps-gap-dependent-3}) are asymptotically dominated
by the first and the fourth terms, thereby proving Theorem \ref{thm:known-eps-gap-dependent}.
For the second term, note that
$$
 \sqrt{\frac{\bar\varepsilon K^3NT\log^3 T}{\beta}}  \leq \frac{1}{2}\left(\bar\varepsilon K^2 T\log T + \frac{KN\log^2 T}{\beta}\right),
$$
thanks to the AM-GM inequality.
For the third term in Eq.~(\ref{eq:proof-known-eps-gap-dependent-3}), invoking the AM-GM inequality again we have
\begin{align*}
 \frac{\sqrt{K^3 N\log^3 T}(\bar\varepsilon NT\log T)^{1/4}}{\sqrt{\beta}} 
 &\leq \frac{1}{2}\left(\sqrt{\bar\varepsilon NK^2T\log^3 T} + \frac{K^2 N\log^2 T}{\beta}\right)\\
 &\leq \frac{1}{2}\left(\frac{1}{2}\left(\bar\varepsilon K^2 T\log T +N\log^2 T \right) + \frac{K^2 N\log^2 T}{\beta}\right)
\end{align*}
This concludes the proof.

\section{Proof of Theorem \ref{thm:adaptive-epsilon-gap}}

We prove Theorem \ref{thm:adaptive-epsilon-gap} by adding the regret incurred on each thread $j$ separately.
First we consider $\mathsf R(\hat\varepsilon_j)$ for those $\hat\varepsilon_j\geq \varepsilon$.
Because $T_j = \wp_j T$, Corollary \ref{cor:erj-bound1-gap} asserts that 
$\mathbb E[\mathsf R(\hat\varepsilon_j)] \lesssim \mathbb E[\sum_{\tau\leq \tau_j^*}\hat\Delta_{\hat\varepsilon_j}(\tau)T_{j,\tau}]$.
Using the same derivation as in the proof of Theorem \ref{thm:known-eps-gap-dependent}, we have that
\begin{align*}
\mathbb E[\mathsf R(\hat\varepsilon_j)]
\lesssim \mathbb E[\hat\varepsilon_j K^2 T_j\log T  + \frac{K^2 N\log^2 T}{\beta}] \lesssim 
\mathbb E[\hat\varepsilon_j T_j + N/\beta] \lesssim \varepsilon T + N/\beta,
\end{align*}
where in the $\lesssim$ notation we omit $\poly(K,\log(NT))$ terms and the last inequality holds because
$\hat\varepsilon_j = 2^{-j}$ and $\wp_j \leq 2\times 2^{-(J-j)}$ and the fact that $\hat\varepsilon_J\leq\varepsilon$ with high probability
(Lemma \ref{lem:J-lowerbound}).

Next consider any $\hat\varepsilon_j$ with $\hat\varepsilon_j\leq\varepsilon$.
Let $k\leq J$ be the largest integer such that $\hat\varepsilon_{k}\geq\varepsilon$, or more specifically
the unique index such that $\hat\varepsilon_{k}\geq\varepsilon>\hat\varepsilon_{k+1}$.
By Corollary \ref{cor:erj-bound2-gap}, we have that
$\mathbb E[\mathsf R(\hat\varepsilon_j)] \lesssim \mathbb E[\sum_{\tau\leq\tau_k^*}\hat\Delta_{\hat\varepsilon_k}(\tau)T_{j,\tau}]$.
By definition, $T_{j,\tau}=2^{j-k}\times T_{k,\tau}$. Subsequently, 
\begin{align*}
\mathbb E[\mathsf R(\hat\varepsilon_j)]
&\lesssim 2^{j-k}\times  \mathbb E[\sum_{\tau\leq\tau_k^*}\hat\Delta_{\hat\varepsilon_k}(\tau)T_{k,\tau}]
\lesssim 2^{j-k}\times \mathbb E[\hat\varepsilon_k T_k + N/\beta]\\
&\lesssim 2^{j-k}\times [2^{-k}\times 2^{-(J-k)}T + N/\beta]
\lesssim 2^{-k} T + 2^{J-k}N/\beta.
\end{align*}
Notice that $2^{-k}\lesssim \varepsilon$ and $2^J \lesssim \sqrt{T/N}$ by definition. Subsequently, 
$$
\mathbb E[\mathsf R(\hat\varepsilon_j)]
\lesssim \varepsilon T + \varepsilon\sqrt{NT}/\beta
\leq \varepsilon T + \frac{1}{2}\left(\varepsilon^2 T + N/\beta^2\right) \lesssim \varepsilon T + N/\beta^2,
$$
where the second inequality holds by the AM-GM inequality.
Theorem \ref{thm:adaptive-epsilon-gap} is thus proved.
The remark after Theorem \ref{thm:adaptive-epsilon-gap} by applying the AM-GM inequality in a different way, or more specifically
$$
 \varepsilon\sqrt{NT}/\beta = \sqrt{\varepsilon^2 NT/\beta^2} \leq \frac{1}{2}\left(\frac{\varepsilon T}{\beta} + \frac{\varepsilon N}{\beta}\right) 
 \lesssim \frac{\varepsilon T}{\beta} + \frac{N}{\beta}.
$$

\section{Proof of Theorem \ref{thm:gap-dependent-lower-bound}}

We first state our adversarial problem instances.
The revenue parameters are set as $r_1=r_2=\cdots=r_N=1$.
For typical customers, the choice model is parameterized by $v_1=v_2=\cdots=v_{K-1}=2/K$,
$v_s=1/K+\Delta$, $v_{i}=1/K$ for all $i\notin\{1,\cdots,K-1,s\}$, where $s\geq K$ is an instance-dependent parameter
and $\Delta\in(0,1-1/K]$ is a small perturbation parameter to be decided later.
Such a problem instance is denoted as $\mathcal I_s$ and the distribution it induces is denoted as $\mathbb P_s$.
For outlier customers, they will \emph{always} purchase the product with the smallest index (i.e., the probability of a no-purchase
for outlier customers is zero).

Because of the additive nature of the lower bound, we only need to prove the $\bih$-regret is lower bounded by 
$\Omega(\min\{\varepsilon T,\sqrt{\varepsilon NT}\})$ and $\Omega(N\log T/(K\beta))$ separately.
For the first $\Omega(\min\{\varepsilon T,\sqrt{\varepsilon NT}\})$ lower bound, one can simply plant a lower bound construction
for gap-free capacitated dynamic assortment planning problems during the first $\lfloor \varepsilon T\rfloor$ time periods.
By the work of \cite{Chen:18tight}, a gap-free lower bound of $\Omega(\min\{T',\sqrt{NT'}\})$ exists,
where $T'$ is the total number of time periods. Plugging in $T'=\lfloor\varepsilon T\rfloor$ we complete the $\Omega(\min\{\varepsilon T,\sqrt{\varepsilon NT}\})$
lower bound in Theorem \ref{thm:gap-dependent-lower-bound}.

In the rest of this proof we focus on the $\Omega(N\log T/(K\beta))$ part of the lower bound, by simply setting $\varepsilon = 0$.
We first compute the sub-optimality gap $\beta$ of the problem instances $\{\mathbb P_i\}$ to determine the appropriate
values of $\Delta$.
\begin{lemma}
For each $\mathbb P_s$, $s\geq K$, the sub-optimality gap $\beta$ defined in Eq.~(\ref{eq:defn-gap})
satisfies $\beta\geq \Delta/8$.
\label{lem:pi-beta}
\end{lemma}
\begin{proof}
Clearly, $S^*=\{1,\cdots,K-1,s\}$ and $S^{*,(i)} = \{1,\cdots,K-1,i\}$ for all $i\notin S^*$, under problem instance $\mathbb P_s$.
Let $R(S)$ denote the expected revenue of assortment $S$ for typical customers. We then have
\begin{align*}
R(S^*) &= \frac{2(K-1)/K + 1/K + \Delta}{1+2(K-1)/K+1/K+\Delta} 
= \frac{2-1/K+\Delta}{3-1/K+\Delta} = 1 - \frac{1}{3-1/K+\Delta}.
\end{align*}
For $S^{*,(i)}$, simply setting $\Delta=0$ we obtain $R(S^{*,(i)}) = 1 - \frac{1}{3-1/K}$. Hence, the sub-optimality gap can be calculated as
$$
\beta = \frac{1}{3-1/K} - \frac{1}{3-1/K+\Delta} = \frac{\Delta}{(3-1/K)(3-1/K+\Delta)} \geq \frac{\Delta}{2\times 4} = \frac{\Delta}{8},
$$
where the last inequality holds because $1/K\leq 1$ and $\Delta\leq 1$.
\end{proof}

From Lemma \ref{lem:pi-beta}, clearly we should set $\Delta = 8\beta$ to make $\{\mathbb P_s:s\geq K\}$ being $\beta$-gap
problem instances. Because $\beta\leq 1/16$, we have that $\Delta\leq 1/2$ and therefore all problem instances are valid for $K\geq 2$.

Our next lemma upper bounds the Kullback-Leibner divergence between $\mathbb P_0$ and $\mathbb P_s$ on certain assortments.
\begin{lemma}
For assortment $S\subseteq[N]$ let $\mathbb P(\cdot|S)$ denote the conditional distribution of purchase activities of typical customers
with offered assortment $S$ and problem instance $\mathbb P$.
Then for any $s\neq s'\in\{K,K+1,\cdots,N\}$ and $S\subseteq[N]$, $|S|\leq K$, $\kl(\mathbb P_s(\cdot|S)\|\mathbb P_{s'}(\cdot|S)) \leq \vct 1\{s\in S\vee s'\in S\}\times 4(K+2)\Delta^2$.
\label{lem:kl-p0-ps}
\end{lemma}
\begin{proof}
First note that if $s\notin S$ and $s'\notin S$ then $\mathbb P_s(\cdot|S)\equiv \mathbb P_{s'}(\cdot|S)$ because the two distributions are exactly the same.
Hence the KL-divergence is zero. In the rest of the proof we assume that either $s\in S$ or $s'\in S$.

Let $a=\sum_{i=1}^{K-1}\vct 1\{i\in S\}$, $b=\vct 1\{s\in S\}$ and $b'=\vct 1\{s'\in S\}$.
Denote $p_i := \mathbb P_s(i|S)$ and $q_i := \mathbb P_{s'}(i|S)$. For the no-purchase action $i=0$ we have
\begin{align*}
p_0 = \frac{1}{1+1 + a/K+b\Delta} \geq \frac{1}{4} \;\;\;\;\text{and}\;\;\;\;
|p_0-q_0| = \left|\frac{1}{2+a/K+b\Delta} - \frac{1}{2+a/K+b'\Delta}\right| \leq \frac{\Delta}{4};
\end{align*}
for $i\notin \{s,s',0\}$, we have
$$
p_i \geq \frac{1/K}{2+a/K+b\Delta} \geq \frac{1}{4K}\;\;\;\;\text{and}\;\;\;\;
|p_i-q_i| \leq \frac{2}{K}\left|\frac{1}{2+a/K+b\Delta}-\frac{1}{2+a/K+b'\Delta}\right| \leq \frac{\Delta}{2K};
$$
for $i\in\{s,s'\}$, we have
$$
p_s \geq \frac{1}{4K}\;\;\;\;\text{and}\;\;\;\;
|p_i-q_i| \leq \frac{\Delta}{2K} + \frac{\Delta}{2+a/K+b'\Delta} \leq \frac{\Delta}{2K} + \frac{\Delta}{2}\leq \Delta.
$$
Invoking Lemma 3 from \citep{Chen:18tight}, we have that
$$
\kl(\mathbb P_0(\cdot|S)\|\mathbb P_s(\cdot|S)) \leq \sum_{i\in S\cup\{0\}}\frac{|p_i-q_i|^2}{p_i} \leq 3(K+1)\Delta^2,
$$
which is to be demonstrated.
\end{proof}

Now consider arbitrary $s\neq s'$ and sufficiently large $T$.
Define $\mathcal T_{s'} := \{t\in[T]: s'\in S_t\}$ as the set of time periods during which product $s'$ is offered in an assortment,
and $T_{s'} := |\mathcal T_{s'}|$.
We then have the following lemma \emph{lower} bounding the $\bih$-regret under $\mathbb P_s$ and $\mathbb P_{s'}$, respectively.
\begin{lemma}
Let $\mathbb E_s$ and $\mathbb E_{s'}$ be expectations taken over the laws of $\mathbb P_s$ and $\mathbb P_{s'}$, respectively.
Suppose $\beta\leq 1/K$. Then
\begin{eqnarray*}
\mathrm{Under}\;\;\;\mathbb P_s, &\quad& \text{$\bih$-Regret } \geq \Omega(\beta)\times \sum_{s'\neq s}\mathbb E_s[T_{s'}];\\
\mathrm{Under}\;\;\;\mathbb P_{s'},&\quad& \text{$\bih$-Regret } \geq \beta(T-\mathbb E_{s'}[T_{s'}]),
\end{eqnarray*}
\label{eq:bih-regret-lowerbound-intermediate}
\end{lemma}
\begin{proof}
The second inequality is obvious from the definition of the sub-optimality gap $\beta$.
To see the first inequality, note that $S^*$ under $\mathbb P_s$ is $\{1,2,\cdots,K-1,s\}$.
If $S_t=\{1,2,\cdots,K-1,s'\}$ for some $s'\neq s$, then this assortment suffers an instantaneous regret of $\beta$;
if $S_t$ contains another $s''\notin \{s,s',1,\cdots,K-1\}$, then this assortment suffers an additional instantaneous regret of $\Omega(1/K)$
since one product from the first $(K-1)$ products must be missed.
Because $\beta\leq 1/K$, the lemma is proved.
\end{proof}

Let $f_T\leq T/2$ and $c_T>0$ be parameters to be decided later.
We will select $f_T$ such that, if the conclusion in Theorem \ref{thm:gap-dependent-lower-bound} holds, then
the $\bih$-regret under $\mathbb P_{s'}$ is at most $\beta f_T$, or more specifically $T-\mathbb E_{s'}[T_{s'}]\leq f_T$.
By Markov's inequality, this implies that
\begin{equation}
\mathbb P_{s'}\big[T_{s'}\leq f_T\big] = \mathbb P_{s'}\big[T-T_{s'}>T-f_T\big] \leq \frac{T-\mathbb E_{s'}[T_{s'}]}{T-f_T} \leq \frac{2f_T}{T}. 
\label{eq:proof-gap-dependent-lb-1}
\end{equation}

Next, let $\{i_t\}_{t\in\mathcal T_{s'}}$ be the purchase activities realized during time periods at which product $s'$ is offered as part of the assortment.
Define also the log-likelihood ratio $L_{\mathcal T_{s'}} := \sum_{t\in\mathcal T_{s'}}\log\frac{\mathbb P_s(i_t|S_t)}{\mathbb P_{s'}(i_t|S_t)}$.
We then have that, for any event $A\subseteq\{T_{s'}=\tau\}$, 
\begin{align*}
\mathbb P_{s'}[A] &= \int_A\prod_{t\in\mathcal T_{s'}}\frac{\mathbb P_{s'}(i_t|S_t)}{\mathbb P_s(i_t|S_t)}\ud \mathbb P_s
= \int_A\exp\left\{\sum_{t\in\mathcal T_{s'}}\log\frac{\mathbb P_{s'}(i_t|S_t)}{\mathbb P_s(i_t|S_t)}\right\}\ud\mathbb P_s
= \int_A\exp\left\{-L_{\mathcal T_{s'}}\right\}\ud\mathbb P_s.
\end{align*} 
Hence, if $A\subseteq\{T_{s'}=\tau\wedge L_{\mathcal T_{s'}}\leq c_T\}$, then $\mathbb P_{s'}[A]\geq e^{-c_T}\mathbb P_s[A]$,
or equivalently $\mathbb P_s[A]\leq e^{c_T}\mathbb P_{s'}[A]$.
Using the law of total probability we have the following:
\begin{align}
\mathbb P_s\big[T_{s'}\leq f_T\big]
&= \mathbb P_s\big[T_{s'}\leq f_T\wedge \mathcal L_{\mathcal T_{s'}}\leq c_T\big] + \mathbb P_s\big[T_{s'}\leq f_T\wedge \mathcal L_{\mathcal T_{s'}}> c_T\big]\nonumber\\
&\leq e^{c_T}\mathbb P_{s'}\big[T_{s'}\leq f_T\wedge \mathcal L_{\mathcal T_{s'}}\leq c_T\big] +  \mathbb P_s\big[T_{s'}\leq f_T\wedge \mathcal L_{\mathcal T_{s'}}> c_T\big]\nonumber\\
&\leq e^{c_T}\mathbb P_{s'}\big[T_{s'}\leq f_T\big] +  \mathbb P_s\big[\mathcal L_{\mathcal T_{s'}}> c_T|T_{s'}\leq f_T\big].
\label{eq:proof-gap-dependent-lb-2}
\end{align}

Now set $f_T=c_0''\log T/(K\beta^2)$ for some sufficiently small universal constant $c_0''>0$ and $c_T=0.5\log T$.
If the $\bih$-regret under $\mathbb P_{s'}$ exceeds $\beta f_T$ then we have already proved Theorem \ref{thm:gap-dependent-lower-bound}.
Otherwise, by Eq.~(\ref{eq:proof-gap-dependent-lb-1}) we have that
\begin{equation}
e^{c_T}\mathbb P_{s'}\big[T_{s'}\leq f_T\big]  
\leq \exp\{0.5\log T + \log(2f_T) - \log T\} = 2f_T/\sqrt{T} = o(1).
\label{eq:proof-gap-dependent-lb-3}
\end{equation}
On the other hand, recall the definition that $\mathcal L_{\mathcal T_{s'}} = \sum_{t\in\mathcal T_{s'}}\log\frac{\mathbb P_s(i_t|S_t)}{\mathbb P_{s'}(i_t|S_t)}$,
we know that $\mathcal L_{\mathcal T_{s'}}$ is the partial sum of a martingale and furthermore
$\mathbb E_{s}[\mathcal L_{\mathcal T_{s'}}] \leq \mathbb E_s[T_{s'}]\times \max_{S\subseteq[N],|S|\leq K}\kl(\mathbb P_s(\cdot|S)\|\mathbb P_{s'}(\cdot|S))
\leq \mathbb E_s[T_{s'}]\times O(K\Delta^2) =  \mathbb E_s[T_{s'}]\times O(K\beta^2)$,
thanks to Lemma \ref{lem:kl-p0-ps}.
By setting $c_0''$ to be sufficiently small, we have that $c_T-\mathbb E[\mathcal L_{\mathcal T_{s'}}|T_{s'}\leq f_T]
\geq 0.5\log T - O(f_TK\beta^2) = \Omega(\log T) = \Omega(K\beta^2)\times f_T$.
Because both $K$ and $\beta$ are constants not changing with $T$, by the law of large numbers we have that
\begin{equation}
\mathbb P_s\big[\mathcal L_{\mathcal T_{s'}}> c_T|T_{s'}\leq f_T\big] = o(1).
\label{eq:proof-gap-dependent-lb-4}
\end{equation}

Combining Eqs.~(\ref{eq:proof-gap-dependent-lb-2},\ref{eq:proof-gap-dependent-lb-3},\ref{eq:proof-gap-dependent-lb-4}) we have that,
for every $s'\neq s$, 
$$
\mathbb P_s\big[T_{s'}<f_T\big] = o(1).
$$
Using Markov's inequality, 
$$
\mathbb E_s[T_{s'}] \geq \mathbb P_s\big[T_{s'}\geq f_T\big]\times f_T = \Omega(f_T).
$$
Summing over all $s'\neq s$ we obtain
\begin{align*}
\text{$\bih$-regret under $\mathbb P_s$} &\geq \Omega(\beta)\times \sum_{s'\neq s}\mathbb E_s[T_{s'}]
\geq \Omega(\beta\times (N-K)\times f_T) \\
&= \Omega(\beta N\times \log T/(K\beta^2)) = \Omega(N\log T/(K\beta)),
\end{align*}
which is to be demonstrated.
}

\section{Tail inequalities}

\renewcommand{\thelemma}{9}
\begin{lemma}[Bernstein's inequality for martingale process \cite{freedman1975tail}]
Let $X_1,\cdots,X_n$ be centered random variables satisfying $|X_i|\leq M$ almost surely for all $i$,
and that $\sum_{i\leq s}X_i$ for $s\leq n$ forms a martingale process.
Then for any $t>0$,
$$
\Pr\left[\bigg|\sum_{i=1}^n X_i\bigg|>t\right] \leq 2\exp\left\{-\frac{t^2/2}{\sum_i\mathbb E[X_i^2] + Mt/3}\right\}.
$$
As a corollary, for any $\delta>0$, 
$$
\Pr\left[\bigg|\sum_{i=1}^n X_i\bigg| > \frac{2}{3}M\log (1/\delta) + \sqrt{2V^2\log (1/\delta)}\right] \leq 2\delta,
$$
where $V^2 = \sum_i \mathbb E[X_i^2]$.
\label{lem:bernstein-tech}
\end{lemma}

\end{document}